\newcommand{\@toptitlebar}{
  \hrule height 4\p@
  \vskip 0.25in
  \vskip -\parskip%
}
\newcommand{\@bottomtitlebar}{
  \vskip 0.29in
  \vskip -\parskip
  \hrule height 1\p@
  \vskip 0.09in%
}
\providecommand{\@maketitle}{}
\renewcommand{\@maketitle}{%
  \vbox{%
    \hsize\textwidth
    \linewidth\hsize
    \vskip 0.1in
    \@toptitlebar
    \centering
    {\LARGE\bf \@title\par}
    \@bottomtitlebar
    \def\And{%
        \end{tabular}\hfil\linebreak[0]\hfil%
        \begin{tabular}[t]{c}\bf\rule{\z@}{24\p@}\ignorespaces%
    }
    \def\AND{%
        \end{tabular}\hfil\linebreak[4]\hfil%
        \begin{tabular}[t]{c}\bf\rule{\z@}{24\p@}\ignorespaces%
    }
    \begin{minipage}[c]{\textwidth}%
        \centering%
        \begin{tabular}[t]{c}\bf\rule{\z@}{24\p@}%
        \@author
        \end{tabular}%
    \end{minipage}
    \vskip 0.3in \@minus 0.1in
  }
}
\definecolor{Base00} {HTML}{ffffff} % Base00
\definecolor{Base01} {HTML}{e0e0e0} % Base01
\definecolor{Base02} {HTML}{d6d6d6} % Base02
\definecolor{Base03} {HTML}{8e908c} % Base03
\definecolor{Base04} {HTML}{969896} % Base04
\definecolor{Base05} {HTML}{4d4d4c} % Base05
\definecolor{Base06} {HTML}{282a2e} % Base06
\definecolor{Base07} {HTML}{1d1f21} % Base07
\definecolor{Red}    {HTML}{c82829} % Base08
\definecolor{Base09} {HTML}{f5871f} % Base09
\definecolor{Yellow} {HTML}{eab700} % Base0A
\definecolor{Green}  {HTML}{718c00} % Base0B
\definecolor{Cyan}   {HTML}{3e999f} % Base0C
\definecolor{Blue}   {HTML}{4271ae} % Base0D
\definecolor{Magenta}{HTML}{8959a8} % Base0E
\definecolor{Base0F} {HTML}{a3685a} % Base0F
\colorlet{LightBase00}{Base00!20}
\colorlet{LightBase01}{Base01!20}
\colorlet{LightBase02}{Base02!20}
\colorlet{LightBase03}{Base03!20}
\colorlet{LightBase04}{Base04!20}
\colorlet{LightBase05}{Base05!20}
\colorlet{LightBase06}{Base06!20}
\colorlet{LightBase07}{Base07!20}
\colorlet{LightRed}{Red!20}
\colorlet{LightBase09}{Base09!20}
\colorlet{LightYellow}{Yellow!20}
\colorlet{LightGreen}{Green!20}
\colorlet{LightCyan}{Cyan!20}
\colorlet{LightBlue}{Blue!20}
\colorlet{LightMagenta}{Magenta!20}
\colorlet{LightBase0F}{Base0F!20}  
\colorlet{DarkBase00}{Base00!90!black}
\colorlet{DarkBase01}{Base01!90!black}
\colorlet{DarkBase02}{Base02!90!black}
\colorlet{DarkBase03}{Base03!90!black}
\colorlet{DarkBase04}{Base04!90!black}
\colorlet{DarkBase05}{Base05!90!black}
\colorlet{DarkBase06}{Base06!90!black}
\colorlet{DarkBase07}{Base07!90!black}
\colorlet{DarkRed}{Red!90!black}
\colorlet{DarkBase09}{Base09!90!black}
\colorlet{DarkYellow}{Yellow!90!black}
\colorlet{DarkGreen}{Green!90!black}
\colorlet{DarkCyan}{Cyan!90!black}
\colorlet{DarkBlue}{Blue!90!black}
\colorlet{DarkMagenta}{Magenta!90!black}
\colorlet{DarkBase0F}{Base0F!90!black}
\definecolor{plots_01}{rgb}{0.0,0.605603,0.97868}
\definecolor{plots_02}{rgb}{0.888874,0.435649,0.278123}
\definecolor{plots_03}{rgb}{0.242224,0.643275,0.304449}
\definecolor{plots_04}{rgb}{0.76444,0.444112,0.824298}
\definecolor{plots_05}{rgb}{0.675544,0.555662,0.0942343}
\definecolor{plots_06}{rgb}{4.82118e-7,0.665759,0.680997}
\definecolor{plots_07}{rgb}{0.930767,0.367477,0.57577}
\definecolor{plots_08}{rgb}{0.776982,0.509743,0.146425}
\definecolor{plots_09}{rgb}{3.80773e-7,0.664268,0.552951}
\definecolor{plots_10}{rgb}{0.558465,0.593485,0.117481}
\definecolor{plots_11}{rgb}{5.94762e-7,0.660879,0.798179}
\definecolor{plots_12}{rgb}{0.609671,0.499185,0.911781}
\definecolor{plots_13}{rgb}{0.380002,0.551053,0.966506}
\definecolor{plots_14}{rgb}{0.942182,0.375164,0.451817}
\definecolor{plots_15}{rgb}{0.868402,0.395989,0.713515}
\definecolor{plots_16}{rgb}{0.423147,0.622495,0.198771}
\definecolor{plots_17}{rgb}{0.444437,0.549958,0.415537}
\setlist*[enumerate,1]{label={\arabic*)}}
\tikzset{every node/.append style={font=\footnotesize}}
\pgfplotsset{compat=newest}
\pgfplotsset{
    tickpos=left,
    ytick align=outside,
    xtick align=outside,
    wla_diagrams/.style={
        clip mode=individual,
        samples=100,
        enlargelimits=false,
        domain = -3.5:5,
        xmin=-3.5, xmax=5,
        ymin=-0.3, ymax=0.45,
        height=4.5cm, width=6cm,
        axis lines=middle,
        x axis line style=-,
        y axis line style={draw=none},
        xtick = {0}, ytick = {0},
        enlargelimits=false, clip=false, 
        axis on top,
        after end axis/.code={
            \path (0,0) node [anchor=north west,yshift=-0.075cm] {} node [anchor=south east,xshift=-0.075cm] {};
            }
    },
    julia_contour_plot/.style={
        width=4.5cm,      height=4.5cm,
        axis on top,    enlargelimits=false,
        clip bounding box=upper bound,
        xticklabels={}, yticklabels={},
        xlabel style={overlay}, ylabel style={overlay},
        xticklabel style={overlay}, yticklabel style={overlay},
    },
    julia_convergence_plot/.style={
        name=axis,
        axis on top,
        ymajorgrids,
        enlargelimits=false,
        width=7cm, height=4cm,
        every tick label/.append style={font=\tiny},
        major grid style={line width=.2pt,draw=gray!50},
        x label style={font=\tiny},
        y label style={font=\tiny},
        legend  style={font=\tiny, nodes={scale=0.75, transform shape}, fill opacity=0.6, text opacity =1},
        legend cell align={left},
        legend image code/.code={
            \draw[mark repeat=2,mark phase=2, thick]
            plot coordinates {
            (0cm,0cm)
            (0.1cm,0cm)        %% default is (0.3cm,0cm)
            (0.2cm,0cm)        %% default is (0.6cm,0cm)
            };%
            },
    },
    julia_violin_plot/.style={
        name=axis,
        axis on top,
        xmajorgrids,
        enlargelimits=false,
        width=5cm, height=4.5cm,
        major grid style={line width=.2pt,draw=gray!50},
        x label style={font=\tiny},
        y label style={font=\tiny},
    },
}
\newtheorem{assumption}{Assumption}
\newlist{paperlayout}{itemize*}{1}
\setlist*[paperlayout,1]{%
  label={},
  before={},
  itemjoin={;\;}, 
  itemjoin*={;\; and finally,},
  after={}
}
\def\papertitle{Boosted Density Estimation Remastered} 
\title{\papertitle}
\newcommand{\customfootnotetext}[2]{{% Group to localize change to footnote
  \renewcommand{\thefootnote}{#1}% Update footnote counter representation
  \footnotetext[0]{#2}}}% Print footnote text
\author{
    Zac Cranko$^{*\dagger}$
    \And
    Richard Nock$^{*\dagger\ddagger}$
    \end{tabular}\\[0.3in]
    \begin{tabular}[t]{c}
        \texttt{firstname.lastname@data61.csiro.edu.au}
    % \end{tabular}%
}
\def\radon{\mathcaleu{R}(\cal X)}
\def\prob{\mathcaleu{D}(\cal X)}
\def\disc{\mathcaleu{C}(\cal X)}
\begin{document}
\newgeometry{
    textheight=9in,
    textwidth=5.5in,
    top=1in,
    headheight=12pt,
    headsep=25pt,
    footskip=30pt
}
\maketitle
\customfootnotetext{$*$}{Data61, CSIRO}
\customfootnotetext{$\dagger$}{The Australian National University}
\customfootnotetext{$\ddagger$}{The University of Sydney}

\begin{abstract}
    There has recently been a steady increase in the number iterative approaches to density estimation. However, an accompanying burst of formal convergence guarantees has not followed; all results pay the price of heavy assumptions which are often unrealistic or hard to check. The \emph{Generative Adversarial Network (GAN)} literature --- seemingly orthogonal to the aforementioned pursuit --- has had the side effect of a renewed interest in variational divergence minimisation (notably $f$-GAN). We show that by introducing a \textit{weak learning assumption} (in the sense of the classical boosting framework) we are able to import some recent results from the GAN literature to develop an iterative boosted density estimation algorithm, including formal convergence results with rates, that does not suffer the shortcomings other approaches. We show that the density fit is an exponential family, and as part of our analysis obtain an improved variational characterization of $f$-GAN.
\end{abstract}

\section{Introduction}

In the emerging area of \emph{Generative Adversarial Networks (GAN's)}
\citep{goodfellow2014generative} a binary classifier (called a
\emph{discriminator} in the parlance of the GAN literature) is used
learn a highly efficient sampler for a data distribution $P$;
combining what would traditionally be two steps --- first learning the
density function from a family of densities, then fine-tuning a
sampler --- into one. Interest in this field as sparked a series of
formal inquiries and generalisations describing GAN's in terms of
(among other things) divergence minimisation
\citep{nowozin2016f,arjovsky2017wasserstein}. Using a similar
framework to \citet{nowozin2016f}, \citet{grover2017boosted} make a
preliminary analysis of an algorithm that takes a series of
iteratively trained discriminators to estimate a densityls
function\footnote{\citet{grover2017boosted} call this procedure
  ``multiplicative discriminative boosting''.}. The cost here, insofar
as we have been able to devise, is that one forgoes learning an
efficient sampler (as with a GAN), and must make do with classical
sampling techniques to sample from the learned density. This we leave
the issue of efficient sampling in large dimensions as an open problem, and instead focus on analysing the densities learned with formal convergence guarantees. 

\begin{paperlayout}[before={\vspace{\baselineskip}The rest of the paper and our contributions are as follows:\;}]
    \item in \autoref{sec:preliminaries}, to make explicit the connections between classification, density estimation, and divergence minimisation  we re-introduce the variational $f$-divergence formulation, and in doing so are able to fully explain some of the underspecified components of $f$-GAN \citep{nowozin2016f}
    \item in \autoref{sec:boosted_density_estimation}, we relax a number of the assumptions of \citet{grover2017boosted}, and then give both more general, and much stronger bounds for their algorithm
    \item in \autoref{sec:experiments}, we apply our algorithm to several toy datasets in order demonstrate convergence and compare directly with \citet{tgbssAB}
    \item a final section \autoref{sec:conclusion} concludes%
    .
\end{paperlayout}
\begin{paperlayout}[before ={The appendices that follow in the supplementary material are:\;}]
    \item \autoref{sec:epilogue}, we compare our formal results with other related works
    \item \autoref{sec:the_error_term}, a geometric account of the function class in the variational form of an $f$-divergence
    \item \autoref{sec:boosting_with_estimates}, a further relaxation
      of the weak learning assumptions to some that could actually be
      estimated experimentally and a proof that the boosting rates are
      slightly worse but of essentially the same order
    \item \autoref{sec:allproofs}, proofs for the main formal results from the paper
    \item \autoref{sec:experimental_setup}, technical details for the settings of our experiments%
    .
\end{paperlayout}

\subsection{Related work}\label{sec:related}

In learning a density --- that is minimising a divergence $\min_Q I(P,Q) $---it is remarkable that most previous approaches \citep[and references therein]{grover2017boosted,gwfbdBV,lbMD,mfaVB,rsBD,tgbssAB,zSG} have investigated a single update rule, not unlike Frank--Wolfe optimisation:
\begin{gather}
    \cramped{\forall{\alpha_t \in [0,1]} 
    Q_{t+1} = f(\alpha_t g(c_t) + (1-\alpha_t) g(Q_t)),}
    \hphantom{(2)}
    \label{eq:genupdatebden}
\end{gather}
where $g$ is in general (but not always) the identity. \citet{grover2017boosted} is one (recent) rare exception to \eqref{eq:genupdatebden} wherein alternative choices are explored. Few works in this area are accompanied by convergence proofs, and even less display convergence \textit{rates}, which are mandatory in a boosting setting \citep{gwfbdBV,lbMD,rsBD,tgbssAB,zSG}.

% Two categories of convergence results results prevail: bounds of the form $I(P,Q_T) \leq \inf_Q I(P,Q) + J/T$ for $J>0$, as pioneered by \citet{lbMD}; and tighter geometric convergence bounds of the type $I(P,Q_T) \leq (1-J)^T I(P, Q_0)$ for $J \in [0,1]$ \citep[e.g.][]{tgbssAB}.  The quantity $J$ can take a variety of forms depending on the domain's parameters. Several approaches in \eqref{eq:genupdatebden} and $J$ employ parameters that are not explicitly computable: they can be the solution to alternative optimization problems \citep{gwfbdBV,neDE,rsBD,zSG} (and therefore require additional algorithmic machinery which, if not carefully designed, may break convergence guarantees \citep{mfaVB}), or they simply may not be available in closed form in general \citep[Theorems 1, 2]{tgbssAB}. 

To establish convergence and/or bound convergence by a rate, \textit{all} approaches necessarily make structural assumptions or approximations on the parameters involved in \eqref{eq:genupdatebden}. These can be on the (local) variation of $I$ \citep{gwfbdBV,neDE,zSG}; the true density $P$ or the updates $Q_t$ \citep{grover2017boosted,gwfbdBV,lbMD}; the mixing parameter $\alpha_t$ \citep{mfaVB,tgbssAB}; the weak learners $c_t$ \citep{grover2017boosted,rsBD,tgbssAB,zSG}; the previous updates, $(c_j)_{j\leq t}$ ,\citep{rsBD}; and so on. The price to get the best geometric convergence bounds is in fact heavy considering that the update $c_t$ is in all cases required to be close to the optimal one \citep[Corollaries 2, 3]{tgbssAB}. 

However, it must be kept in mind that for many of these works
\cite[viz.][]{tgbssAB} the primary objective is to develop an
efficient black box sampler for $P$, in particular for large dimensions. Our objective however is to focus on furtive lack of formal results on the densities and convergence, instead leaving the problem of sampling from these densities as an open question.

\section{Preliminaries}\label{sec:preliminaries}
In the sequel $(\cal X, \tau)$ is a topological space. Unnormalised Borel measures on $\cal X$ are indicated by decorated capital letters, $\tilde P$, and Borel probability measures by capital letters without decoration, $P$. To a function $f:\cal X\to \Rx$ we associate another function $f^*$, called the \emph{Fenchel conjugate} with $f^*(x^*)\defas \sup_{x\in\cal X}\inp{x^*, x} - f(x)$. If $f$ is convex, proper, and lower semi-continuous, $f = (f^*)^*$. If $f$ is strictly convex and differentiable on $\intr(\dom f)$ then $(f^*)' = (f')^{-1}$. Theorem-like formal statements are numbered to be logically consistent with their appearance in the appendix (\autoref{sec:allproofs}) to which we defer all proofs.

\begin{table}[t]
    \caption{Some common $f$-divergences and their variational components.\label{tab:common_divergences}}
    \centering
    \scriptsize
    \makebox[\textwidth][c]{
        \begin{tabular}{lccccc}\toprule
            &\multicolumn{1}{c}{$\I_f$}      & \multicolumn{1}{c}{$f(t)$}                                   & \multicolumn{1}{c}{$f^*(t^*)$}                                                                                                         & \multicolumn{1}{c}{$f'(t)$}                  & \multicolumn{1}{c}{$(f^*\circ f')(t)$}\\\midrule
            Kullback--Liebler          &$\kl$                           & $t\log t    $                                                & $\exp\rbr{t^*-1}$                                                                                                                           & $\log t + 1$                                     & $t$\\
            Reverse KL                 &$\rkl$                          & $\abs{t-1}$                                                  & $-\log( -t^*)-1$                                                                                                                       & $-1/t$                                       & $\log t-1$\\
            Hellinger                  & -                              & $(\sqrt{t}-1)^2$                                             & $3(t^*-1)\inv-1$                                                                                                                       & $1-1/t$                                      & $\sqrt{t}-1$\\
            Pearson                    &$\chi^2$                        & $(t-1)^2$                                                    & $t^*(4 + t^*)/4$                                                                                                                & $2(t-1)$                                     & $t^2-1$\\
            GAN                        &$\textrm{GAN}$                  & $t \log t-(t+1) \log (t+1)$                                  & $- \log\rbr*{1-\exp(t^*)}$                                                                           & $-\log (t)-\log (t+1)$                       & $\log(1+t)$\\
            % Alpha                      &$D^{(\alpha)}$                  & $\frac{4}{1-\alpha ^2}\rbr\Big{ 1-u^{\frac{\alpha +1}{2}}}$  & $\frac4{\alpha ^2-1}+ \frac{\alpha ^2-5}{\alpha ^2-1}(2(\alpha -1))^{\frac{1+\alpha}{\alpha-1}}{u^*}^{\frac{1+\alpha^2}{2(\alpha-1)}}$ & $\frac{2}{\alpha -1}u^{\frac{\alpha -1}{2}}$ & $\frac{2}{\alpha ^2-1}\rbr\Big{2 + (\alpha-1)  u^{\frac{\alpha +1}{2}}}$\\
            \bottomrule
        \end{tabular}
    }
\end{table}

An important tool of ours are the \emph{$f$-divergences} of information theory  \citep{ali1966general,csiszar1967information}. The $f$-divergence of $P$ from $Q$ is
\begin{align}
    \I_f(P,Q) \defas \int f\rbr{\diff PQ}\d Q,\label{eq:f_div_defn}
\end{align}
where it is assumed that  $f: \R\to\Rx$ is convex and lower semi-continuous, and $Q$ dominates $P$.\footnote{Common divergence measures such as Kullback--Liebler (KL) and total variation can easily be shown to be members of this family by picking $f$ accordingly \citep{reid2011information}. Several examples of these are listed in \autoref{tab:common_divergences}.}  Every $f$-divergence has a \emph{variational representation} \citep{reid2011information} via the Fenchel conjugate:
\begin{align}
    \I_f(P,Q)
    % &= \int (f^*)^*\rbr{\diff PQ}\d Q
    % \\&= \int \sup_{t>0}\rbr{t·\diff PQ - f^*(t)}\d Q
    % \\&= \sup_{u\in(\dom f^*)^{\cal X}}\int \rbr{u ·\diff PQ - f^*\circ u}\d Q
    &= \sup_{u\in(\dom f^*)^{\cal X}}\rbr\Big{\E_Pu - \E_Q{f^*°u}},\label{eq:var_rep}
\end{align}
where the supremum is implicitly over all measurable real functions. 
% The variational representation \eqref{eq:var_rep} has been shown to be convenient way to estimate an $f$-divergence \citep{nguyen2010estimating}, but more recently it has appeared in the form of Generative Adversarial Networks (GAN's) \citep{goodfellow2014generative,nowozin2016f} that solve \eqref{eq:div_mini} via a reparameterised variational representation. 
% The careful reader may notice that $f'$ maps into $\intr(\dom f^*)$ rather than $\dom(f^*)$. However the distinction is not important since the difference relies on a set of measure zero.

In contrast to the abstract family $(\dom f^*)^{\cal X}$, machine learning models tend to be specified in terms of \emph{density ratios}, \emph{binary conditional distributions}, and \emph{binary classifiers}, these are respectively\footnote{While it might seem like there are certain inclusions here (for example $\prob\subseteq\radon\subseteq\disc$), these categories of functions really are distinct objects when thought of with respect to their corresponding binary classification decision rules (listed in \autoref{tab:decision_rules}).}
\begin{gather}
   \radon \defas \cbr{d:\cal X \to (0,\infty)}, 
   \mkern12mu \prob \defas \cbr{D:\cal X \to (0,1)}, 
   \mkern12mu \disc \defas \cbr{c:\cal X \to \R}.
\end{gather}
It is easy to see that these sets are all isomorphic with the commonly used connections
\begin{gather}\mathclap{
    \phi(D)\defas \frac D{1-D},
    \mkern12mu \sigma(c)\defas \frac1{1+\exp(-c)},
    \mkern12mu (\phi\circ\sigma) = \exp,
}\end{gather}
which are illustrated in \autoref{fig:iosmorphic}.\footnote{It is worth noting the selection of isomorphisms here is not unique and can be designed from principles involving loss functions \citep{rwCB}. 
}

It is a common result \citep{nguyen2010estimating,grover2017boosted,nowozin2016f} that the supremum in \eqref{eq:var_rep} is achieved for $f'\circ \diff P/Q$. It's convenient to define the \emph{reparameterised variational problem}:
\begin{gather}
    \maximise_u
    J(u) \defas \E_P f'\circ u - \E_Q f^*\circ f'\circ u
    \st
    u\in \cal F.
    \tag{V}\label{eq:var_prob}
\end{gather}
This reparameterisation can be justified in the case where $f$ is strictly convex since $f'$ a bijection.\footnote{What one may lose with this reparameterisation is convexity---that is, $J$ is not necessarily convex. }

\begin{figure}
    \begin{floatrow}
    \ffigbox{%
        \tikzsetnextfilename{ml_isomorphisms}
        \begin{tikzpicture}[every node/.style={font=\small}]
            \node (F) {$\intr(\dom f^*)^{\cal X}$};
            \node (R) [left = 0.8 cm of F] {$\radon$};
            \node (C) [left = 2cm of R] {$\disc$};
            \draw[->, sloped, midway, anchor=center, above] (R) to node {$f'\circ$} (F);
            \draw[->, sloped, midway, anchor=center, above, bend left=20] (C) to node (Exp) {$\exp\circ$} (R);
            \node (D) [below = 0.6cm of Exp] {$\prob$};
            \draw[->, sloped, midway, anchor=center, below, bend right=20] (C) to node {$\sigma\circ$} (D);
            \draw[->, sloped, midway, anchor=center, below, bend right=20] (D) to node {$\phi\circ$} (R);
        \end{tikzpicture}%
    }{%
        \caption{Isomorphisms for reparameterising \eqref{eq:var_prob}.\label{fig:iosmorphic}}
    }
    \capbtabbox{%
        \begin{tabular}[b]{cccc}\toprule
            Collection & $\radon$ & $\prob$ & $\disc$ \\
            % Element  & $d$ & $D$ & $c$ \\
            Decision rule & $d\geq 1$ & $D\geq 1-D$ & $c\geq 0$ \\
            \bottomrule
        \end{tabular}
    }{%
        \caption{Classification decision rules.\label{tab:decision_rules}}%
    }
    \end{floatrow}
\end{figure}

\begin{example}[Neural Classifier]
    When $d\in\radon$ is modelled using a neural network classifier $D\in\disc$, $d$ can be efficiently calculated by substituting the softmax layer with
    \begin{gather}
      \exp c(x) =  \phi\rbr{\frac{\exp{c(x)}}{1-\exp{c(x)}} } = \phi\circ D(x) = d(x),
    \end{gather}
    where $c(x)$ is the neural network potential at $x\in\cal X$. This is just the arrow $\disc \to \radon$ in \autoref{fig:iosmorphic}.
\end{example}

 \begin{example}[$f$-GAN]\label{ex:gan}
    The GAN objective \citep{goodfellow2014generative} is an example of \eqref{eq:var_prob}:
    \begin{align}
        % \MoveEqLeft[1]
        % \MoveEqLeft[4]
        \sup_{D\in \prob}\rbr{\E_P \log( D ) + \E_Q \log(1-D)}%\hspace{20em}
        % \\&= \sup_{g\in (\dom f^*)^{\cal X}}\rbr{\E_P g - \E_Q f^*\circ g}
        &= \sup_{D\in \prob}\rbr{\E_P (f'\circ \phi)\circ D - \E_Q (f^*\circ f'\circ \phi)\circ D}
        % \\&= \sup_{d\in \radon}\rbr{\E_P f'\circ d - \E_Q (f^*\circ f')\circ d}
        \\&= \gan(P,Q),
    \end{align}
    where the function $f$ is defined in \autoref{tab:common_divergences} corresponding to the GAN $f$-divergence. In our derivation it's clear that \eqref{eq:var_prob} together with the isomorphisms in \autoref{fig:iosmorphic} give a simple, principled choice for the ``output activation function'', $g_f$, of \citet{nowozin2016f}.
\end{example}

\section{Boosted density estimation}\label{sec:boosted_density_estimation}

% \subsection{Density estimation}\label{ssec:density_estimation}

We fit distributions $Q_t$ for over the space $\cal X$ of the following form
\begin{gather}
   \forall{t\in\N} \d Q_t \defas \frac1{\int \prod_{i=1}^t d_i^{\alpha_i} \d Q_0 } \prod_{i=1}^t d_i^{\alpha_i}\d Q_0,\label{eq:multiplicative_density}
\end{gather}
where $Q_0$ is an initial reference distribution, $(\alpha_t)_{t\in\N}\subseteq[0,1]$ are the step sizes (for reasons that will be clear shortly), and $d_t:\cal X \to \R_+$ are some positive functions. The previous density can be expressed recursively as follows:
\begin{gather}
    \d \tilde Q_t = d_t^{\alpha_t}·\d \tilde Q_{t-1},\quad Q_t = \frac1{Z_t}\tilde Q_t,\quad Z_t\defas\int\d\tilde Q_t.\label{eq:rec_update}
\end{gather}
\begin{restatable}{proposition}{prop:recursive_normalisation} 
    The normalisation factors can be written recursively with $Z_t = Z_{t-1}·\E_{Q_{t-1}}d_t^{\alpha_t}$.
\end{restatable}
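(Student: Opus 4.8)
The plan is to unwind the recursive definition in \eqref{eq:rec_update} and substitute the normalisation identity $Q_{t-1} = \tilde Q_{t-1}/Z_{t-1}$. First I would start from $Z_t = \int \d\tilde Q_t$ and apply the recursion $\d\tilde Q_t = d_t^{\alpha_t}\cdot\d\tilde Q_{t-1}$ to obtain $Z_t = \int d_t^{\alpha_t}\,\d\tilde Q_{t-1}$. This is just the statement that $Z_t$ is the total mass of the measure whose density with respect to $\tilde Q_{t-1}$ is $d_t^{\alpha_t}$.

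Next I would rewrite $\tilde Q_{t-1}$ in terms of the normalised measure $Q_{t-1}$. Since $\tilde Q_{t-1} = Z_{t-1}\cdot Q_{t-1}$ (equivalently $\d\tilde Q_{t-1} = Z_{t-1}\,\d Q_{t-1}$, with $Z_{t-1}$ a positive scalar), pulling the constant out of the integral gives $Z_t = Z_{t-1}\int d_t^{\alpha_t}\,\d Q_{t-1} = Z_{t-1}\cdot\E_{Q_{t-1}} d_t^{\alpha_t}$, which is exactly the claimed identity. One should also note the base case is consistent: $Z_0 = \int \d\tilde Q_0 = \int \d Q_0 = 1$ since $Q_0$ is a probability measure, and iterating the recursion recovers $Z_t = \int \prod_{i=1}^t d_i^{\alpha_i}\,\d Q_0$ in agreement with \eqref{eq:multiplicative_density}.

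There is essentially no hard step here; the only thing to be careful about is the measure-theoretic bookkeeping, namely that $d_t$ is a positive measurable function so that $d_t^{\alpha_t}$ is well defined and nonnegative for $\alpha_t\in[0,1]$, and that the expressions are finite so the manipulations (Tonelli-type pulling of constants, rewriting densities) are valid. Implicitly we assume $\E_{Q_{t-1}} d_t^{\alpha_t} < \infty$ (and positive), which is what guarantees $Q_t$ in \eqref{eq:multiplicative_density} is a well-defined probability measure; under that standing assumption the argument is a one-line change of variables.
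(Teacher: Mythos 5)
Your proof is correct and follows essentially the same one-line computation as the paper: start from $Z_t=\int\d\tilde Q_t$, apply the recursion $\d\tilde Q_t=d_t^{\alpha_t}\d\tilde Q_{t-1}$, and substitute $\d\tilde Q_{t-1}=Z_{t-1}\d Q_{t-1}$ to pull out the constant. The extra remarks on the base case and finiteness are fine but not required.
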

\begin{restatable}{proposition}{prop:qt_exponential_family} 
    Let $Q_t$ be defined via \eqref{eq:multiplicative_density} with a sequence of binary classifiers $c_1,\dots,c_t\in\disc $, where $c_i = \log d_i$ for $i\in [t]$. Then $Q_t$ is an exponential family distribution with natural parameter $\alpha\defas(\alpha_1,\dots,\alpha_t)$ and sufficient statistic $c(x)\defas(c_1(x),\dots,c_t(x))$.
\end{restatable}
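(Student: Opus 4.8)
The plan is to unwind the definition \eqref{eq:multiplicative_density} and read off the canonical form of an exponential family with base measure $Q_0$. Since $c_i = \log d_i$ we have $d_i^{\alpha_i} = \exp(\alpha_i c_i)$, so the product collapses to an exponential of a linear form in $\alpha = (\alpha_1,\dots,\alpha_t)$:
\[
    \prod_{i=1}^{t} d_i^{\alpha_i}(x) \;=\; \exp\rbr*{\sum_{i=1}^{t} \alpha_i c_i(x)} \;=\; \exp\inp{\alpha, c(x)},
\]
with $c(x) = (c_1(x),\dots,c_t(x))$. Substituting this back into \eqref{eq:multiplicative_density} gives
\[
    \d Q_t \;=\; \frac{1}{Z_t}\exp\inp{\alpha, c(x)}\,\d Q_0, \qquad Z_t \;=\; \int \exp\inp{\alpha, c(x)}\,\d Q_0,
\]
which is exactly the density, relative to the reference measure $Q_0$, of an exponential family with natural parameter $\alpha$, sufficient statistic $c$, and log-partition function $A(\alpha) \defas \log Z_t$. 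That is the entire content of the proposition.

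What is left is bookkeeping. Measurability is immediate: each $c_i\in\disc$ is measurable, hence so is $c:\cal X\to\R^t$, and the reference distribution $Q_0$ is finite, so the Radon--Nikodym picture is well posed. The one point deserving care is the parameter domain: the displayed expression is a genuine member of the family only for $\alpha$ in the natural parameter space $\cbr{\alpha : Z_t < \infty}$. Here $(\alpha_i)_i \subseteq [0,1]$, and finiteness of $Z_t$ is already presupposed by \eqref{eq:multiplicative_density} being a probability measure at all; it is moreover tracked explicitly by the recursion $Z_t = Z_{t-1}\cdot\E_{Q_{t-1}}d_t^{\alpha_t}$ of the preceding proposition, and so holds along the trajectory of the algorithm whenever the one-step factors $\E_{Q_{t-1}}d_t^{\alpha_t}$ are finite. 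One may also remark that the family need not be \emph{minimal}: if the $c_i$ are affinely dependent, or constant $Q_0$-almost everywhere, the natural parametrisation is not unique, though this does not affect the statement.

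I do not anticipate a genuine obstacle --- the proposition amounts to the observation that a product of powers of density ratios is an exponential tilt of $Q_0$. The only real task in the write-up is to state precisely the set of $\alpha$ for which the claim is asserted (equivalently, to record that $Z_t < \infty$), which follows from the step sizes lying in $[0,1]$ together with the integrability of the $d_t$ furnished by the recursion just cited.
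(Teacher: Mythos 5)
Your proof is correct and takes essentially the same approach as the paper's: rewrite each $d_i^{\alpha_i}$ as $\exp(\alpha_i c_i)$, collapse the product to $\exp\inp{\alpha,c(x)}$, and read off the canonical exponential-family form relative to base measure $Q_0$. The paper also flags non-minimality of the representation; your extra remarks on the natural parameter space and finiteness of $Z_t$ are sensible bookkeeping but not a departure in method.
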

The sufficient statistic of our distributions are classifiers that
would hence be learned, along with the appropriate fitting of natural
parameters. As explained in the proof, the representation may not be
minimal; however, without further constraints on $\alpha$, the
exponential family is regular \citep{bIA}. A similar interpretation of a neural network in terms of parameterising the sufficient statistics of a \emph{deformed exponential family} is given by \citet{nock2017f}.

In the remainder of this section, we show how to learn the classifiers from $c$
and fit the natural parameters $\alpha$ from (observed) data to ensure
a convergence $Q_t \rightarrow_t P$ that fits to the boosting framework.

\subsection{General convergence results of $Q_t$ to $P$}

The updates $d_t$ are chosen by taking the minimiser in \eqref{eq:var_prob}. To make explicit the dependence of $Q_t$ on $\alpha_t$ we will sometimes write $\d \tilde Q_t|_{\alpha_t} \defas d_t^{\alpha_t} \d \tilde Q_{t-1}$ and $\d Q_t|_{\alpha_t}\defas \frac1{Z_t}\d\tilde Q_t|_{\alpha_t}$. Since $Q_t$ is an exponential family (\autoref{prop:qt_exponential_family}), we measure the divergence between $P$ and $Q_t$ using the KL divergence (\autoref{tab:common_divergences}), which is the canonical
divergence of exponential families \citep{anMO}. Notice that we can write any solution to
\eqref{eq:var_prob} as $d_t = \diff P/{Q_{t-1}}·\epsilon_t$, where $\epsilon_t:\cal X \to \R_+$ is called the \emph{error term} due to the fact that that it is parameterised by the difference between the constrained solution to \eqref{eq:var_prob} and the global solution. A more detailed analysis of the quantity $\epsilon_t$ is presented in \autoref{sec:the_error_term}. 

\begin{restatable}{theorem}{thm:kl_bound}
    For any $\alpha_t\in[0,1]$, letting $Q_t,Q_{t-1}$ as in \eqref{eq:rec_update},  we have:
    \begin{gather}
        \begin{aligned}
            \forall{d_t \in \radon}
            \kl(P,Q_{t}|_{\alpha_t}) 
            &\leq (1-\alpha_t)\kl(P,Q_{t-1})  + \alpha_t\rbr{\log\E_P\epsilon_t - \E_P\log\epsilon_t}.
        \end{aligned}
        \label{eq:kl_bound}
    \end{gather}
    where $d_t = \diff P/{Q_{t-1}}·\epsilon_t$.
\end{restatable}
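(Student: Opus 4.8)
The plan is to expand $\kl(P,Q_t|_{\alpha_t})$ directly from the definition of the KL divergence and the one-step recursion \eqref{eq:rec_update}, and to show that after substituting the error-term factorisation $d_t = (\d P/\d Q_{t-1})\cdot\epsilon_t$ everything lands in the stated form except a single term that is controlled by one application of Jensen's inequality.

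First I would rewrite the update relative to $Q_{t-1}$. From $\d\tilde Q_t|_{\alpha_t} = d_t^{\alpha_t}\cdot\d\tilde Q_{t-1}$ and $\d\tilde Q_{t-1} = Z_{t-1}\,\d Q_{t-1}$ one gets $\d Q_t|_{\alpha_t} = (Z_{t-1}/Z_t)\, d_t^{\alpha_t}\,\d Q_{t-1}$, hence
\begin{gather*}
    \frac{\d P}{\d Q_t|_{\alpha_t}}
    = \frac{Z_t}{Z_{t-1}}\, d_t^{-\alpha_t}\, \frac{\d P}{\d Q_{t-1}} .
\end{gather*}
Taking logarithms and integrating against $P$,
\begin{gather*}
    \kl(P,Q_t|_{\alpha_t})
    = \kl(P,Q_{t-1}) + \log\frac{Z_t}{Z_{t-1}} - \alpha_t\,\E_P \log d_t .
\end{gather*}
Now I plug in the two structural facts already available: the recursive normalisation gives $Z_t/Z_{t-1} = \E_{Q_{t-1}} d_t^{\alpha_t}$, and $d_t = (\d P/\d Q_{t-1})\cdot\epsilon_t$ gives both $\E_P\log d_t = \kl(P,Q_{t-1}) + \E_P\log\epsilon_t$ and, by change of measure, $\E_{Q_{t-1}} d_t = \E_P \epsilon_t$. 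Collecting terms yields the exact identity
\begin{gather*}
    \kl(P,Q_t|_{\alpha_t})
    = (1-\alpha_t)\,\kl(P,Q_{t-1}) - \alpha_t\,\E_P\log\epsilon_t + \log \E_{Q_{t-1}} d_t^{\alpha_t} .
\end{gather*}

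The only inequality needed is for the last term. Since $\alpha_t\in[0,1]$, the map $x\mapsto x^{\alpha_t}$ is concave on $\R_+$, so Jensen's inequality gives $\E_{Q_{t-1}} d_t^{\alpha_t} \le (\E_{Q_{t-1}} d_t)^{\alpha_t} = (\E_P\epsilon_t)^{\alpha_t}$, i.e.\ $\log \E_{Q_{t-1}} d_t^{\alpha_t} \le \alpha_t\log\E_P\epsilon_t$. Substituting this in gives precisely \eqref{eq:kl_bound}; as a sanity check, the leftover $\alpha_t(\log\E_P\epsilon_t - \E_P\log\epsilon_t)$ is a nonnegative Jensen gap, matching the reading of $\epsilon_t$ as an error term, and it vanishes when $\epsilon_t$ is $P$-a.s.\ constant, i.e.\ when $d_t$ is proportional to the optimal density ratio.

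I expect the only real care needed to be measure-theoretic rather than algebraic: one must check the absolute-continuity relations that make $\kl(P,Q_{t-1})$ and $\kl(P,Q_t|_{\alpha_t})$ meaningful and $\epsilon_t = d_t\,\d Q_{t-1}/\d P$ well defined and strictly positive, justify the chain rule for the Radon--Nikodym derivatives and the change of measure $\E_{Q_{t-1}}[(\d P/\d Q_{t-1})\,\epsilon_t] = \E_P\epsilon_t$, and dispose of degenerate cases (e.g.\ $\kl(P,Q_{t-1}) = +\infty$) where the bound is vacuous. Everything else follows from \eqref{eq:rec_update}, the recursive normalisation, and the single use of Jensen.
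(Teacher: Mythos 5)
Your proposal is correct and is essentially the same argument as the paper's: express $\d Q_t|_{\alpha_t}$ as $(Z_{t-1}/Z_t)\,d_t^{\alpha_t}\,\d Q_{t-1}$, expand the KL divergence, substitute $d_t = (\d P/\d Q_{t-1})\cdot\epsilon_t$ together with $Z_t/Z_{t-1}=\E_{Q_{t-1}}d_t^{\alpha_t}$, and apply Jensen to the concave map $x\mapsto x^{\alpha_t}$ to control $\log\E_{Q_{t-1}}d_t^{\alpha_t}$. The only cosmetic difference is that the paper isolates the $Z_t/Z_{t-1}$ bound as a standalone lemma, whereas you carry it inline.
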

    We emphasize the fact that \autoref{thm:kl_bound} holds for any update $d_t$, but in fact for all possible functions $\cal X \to \R_+$, covering all ways --- and thus applying to all related algorithms --- for computing $Q_t$ as in \eqref{eq:multiplicative_density}.
\begin{remark} 
    \citet{grover2017boosted} assume a uniform error term, $\epsilon_t\equiv 1$. In this case \autoref{thm:kl_bound} yields geometric convergence
    \begin{gather}
           \forall{\alpha_t\in[0,1]} \kl(P,Q_t|_{\alpha_t}) \leq (1-\alpha_t) \kl(P, Q_{t-1}).\label{eq:simpe_kl_decrease}
    \end{gather}
    This result is significantly stronger than \citet[Theorem 2]{grover2017boosted}, who just show the non-increase of the KL divergence. If, in addition to achieving uniform error, we fix in this case $\alpha_t = 1$, then \eqref{eq:kl_bound} guarantees $Q_t|_{\alpha_t=1}$ is immediately equal to $P$.
\end{remark}

We can express the update \eqref{eq:multiplicative_density} and \eqref{eq:kl_bound} in a way that more closely resembles Frank--Wolfe update \eqref{eq:genupdatebden}. Since $\epsilon_t$ takes on positive values, we can identify it with a density ratio involving a (not necessarily normalised)
measure $\tilde R_t$, as follows
\begin{gather}
    \d \tilde R_t \defas \epsilon_t\cdot \d P \and  R_t \defas \frac1{\int \d\tilde R_t}·\tilde R_t.\label{eq:R_t_defn}
\end{gather}
Thus, in the case where $d_t$ is an inexact solution to the variational problem, the update \eqref{eq:rec_update} becomes
\begin{align}
    \mathclap{\d Q_t 
    \propto \rbr{\diff{ R_t}{Q_{t-1}} }^{\alpha_t} \d Q_{t-1}
    = (\d R_t)^{\alpha_t}·(\d  Q_{t-1})^{1-\alpha_t},} \label{eq_FWT}
\end{align}
which bears in disguise the Frank--Wolfe type update for iterate
$R_t$. We
insist on the fact that this iterate is unknown in general --- the parallel with Frank--Wolfe is therefore more
superficial than for \eqref{eq:genupdatebden}. Introducing $\tilde R_t$ allows us to lend some interpretation to \autoref{thm:kl_bound} in terms of the probability measure $R_t$.
\begin{restatable}{corollary}{cor:qt_rt}
For any $\alpha_t\in[0,1]$ and $\epsilon_t \in [0,+\infty)^{\cal X}$, letting $Q_t$ as in
    \eqref{eq:multiplicative_density} and $R_t$ from \eqref{eq:R_t_defn}. If $R_t$ satisfies
\begin{gather}
\kl\rbr{P,R_t} \leq \gamma \kl(P,Q_{t-1})\label{assumptionRT1}
\end{gather}
for $\gamma \in [0,1]$, then
\begin{gather}
\mathclap{\kl(P,Q_{t}|_{\alpha_t}) \leq
  (1-\alpha_t(1-\gamma))\kl(P,Q_{t-1}).} \label{bGamma1_}
\end{gather}
\end{restatable}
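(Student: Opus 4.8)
The plan is to derive the corollary directly from \autoref{thm:kl_bound} by estimating the error correction term $\log\E_P\epsilon_t - \E_P\log\epsilon_t$ in terms of $\kl(P,R_t)$, and then invoking the hypothesis \eqref{assumptionRT1}. First I would rewrite the term using the definition of $\tilde R_t$ from \eqref{eq:R_t_defn}: since $\d\tilde R_t = \epsilon_t\,\d P$, the normalising constant is $\int\d\tilde R_t = \E_P\epsilon_t$, so $\diff{R_t}{P} = \epsilon_t/\E_P\epsilon_t$ and hence $\log\epsilon_t = \log\diff{R_t}{P} + \log\E_P\epsilon_t$. Taking $\E_P$ of both sides gives $\E_P\log\epsilon_t = -\kl(P,R_t) + \log\E_P\epsilon_t$, where the sign comes from $\E_P\log\diff{R_t}{P} = -\E_P\log\diff{P}{R_t} = -\kl(P,R_t)$. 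Rearranging yields the clean identity
\begin{gather*}
    \log\E_P\epsilon_t - \E_P\log\epsilon_t = \kl(P,R_t),
\end{gather*}
so the bracketed quantity in \eqref{eq:kl_bound} is \emph{exactly} $\kl(P,R_t)$, not merely bounded by it.

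Next I would substitute this identity into \eqref{eq:kl_bound} to get
\begin{gather*}
    \kl(P,Q_t|_{\alpha_t}) \leq (1-\alpha_t)\kl(P,Q_{t-1}) + \alpha_t\kl(P,R_t),
\end{gather*}
and then apply the assumption $\kl(P,R_t)\leq\gamma\kl(P,Q_{t-1})$ to bound the second term by $\alpha_t\gamma\kl(P,Q_{t-1})$. Collecting coefficients of $\kl(P,Q_{t-1})$ gives $(1-\alpha_t) + \alpha_t\gamma = 1 - \alpha_t(1-\gamma)$, which is precisely \eqref{bGamma1_}. One should check the domination/integrability side conditions: $R_t$ is a genuine probability measure (assuming $\epsilon_t$ is $P$-integrable and a.e.\ positive, which is implicit in calling it a density ratio and in the statement $\epsilon_t\in[0,\infty)^{\cal X}$), $R_t\ll P$ and $P\ll R_t$ by construction, so $\kl(P,R_t)$ is well-defined (possibly $+\infty$, in which case the assumption \eqref{assumptionRT1} forces $\kl(P,Q_{t-1})=+\infty$ too and the bound is vacuous), and the coefficient $1-\alpha_t(1-\gamma)$ lies in $[0,1]$ for $\alpha_t,\gamma\in[0,1]$.

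The only real subtlety — the ``hard part'' — is verifying the identity $\log\E_P\epsilon_t - \E_P\log\epsilon_t = \kl(P,R_t)$ rigorously, i.e.\ justifying the manipulation $\E_P\log\diff{R_t}{P} = -\kl(P,R_t)$ when the quantities may be infinite, and ensuring $R_t$ is well-defined as a probability measure (that $0 < \E_P\epsilon_t < \infty$). Everything else is a one-line substitution. I would flag that this identity is what makes the corollary a genuine reinterpretation of \autoref{thm:kl_bound} rather than a weakening: the Frank--Wolfe-type update \eqref{eq_FWT} contracts the KL divergence toward $P$ at rate $1-\alpha_t(1-\gamma)$ exactly when the (unknown) iterate $R_t$ is a $\gamma$-fraction closer to $P$ than the current $Q_{t-1}$, recovering \eqref{eq:simpe_kl_decrease} in the limit $\gamma\to 0$ (the exact-solution case $\epsilon_t\equiv 1$, $R_t = P$).
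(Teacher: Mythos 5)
Your proof is correct and takes essentially the same route as the paper: you establish the intermediate bound $\kl(P,Q_t|_{\alpha_t}) \leq (1-\alpha_t)\kl(P,Q_{t-1}) + \alpha_t\kl(P,R_t)$ by showing that the correction term $\log\E_P\epsilon_t - \E_P\log\epsilon_t$ in \autoref{thm:kl_bound} equals $\kl(P,R_t)$ exactly, and then substitute the hypothesis \eqref{assumptionRT1}. The paper's proof derives the same identity by slightly different algebraic rearrangement, but the key observation and structure of the argument are the same.
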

We obtain the same geometric convergence result as in
the best result of \citet[Corollary 2]{tgbssAB} for an update
$Q_t$ which is not a convex mixture, which, to our knowledge, is a new
result. \autoref{cor:qt_rt} is restricted to the KL divergence \textit{but} we do not need the technical domination
assumption of $Q_t$ with respect to $P$ that \citet[Corollary
2]{tgbssAB} requires. From the standpoint of weak versus strong
learning, \citet[Corollary 2]{tgbssAB} requires an condition similar to \eqref{assumptionRT1} --- 
iterate $R_t$ has to be close enough to the optimal iterate with
respect to $P$, while we assume that it
is close enough to $P$. We insist on the fact that such assumptions
are very strong. It is the objective of the following sections to
relax them to a setting compatible with boosting. 

\subsection{Convergence under weak learning assumptions}\label{ssec:convergence_under_weak_learning_assumptions}
%% relaxing assumptions

In the previous section, we have established two basic convergence
results that compete with or beat the state of the art and rely on similar,
strong assumptions \citep{grover2017boosted,tgbssAB}. We now see
how to relax those assumptions in the context of boosting,
%% NEW Richard
which requires polynomial time convergence under a learning
assumption as weak as possible. We now deliver such a result, starting
from the weak learning assumption at play.
Define the two \emph{expected  edges of $c_t$} \citep[cf.][]{nnOT}:
\begin{gather}
    \muq{t-1} \defas \frac1{\csup}\E_{Q_{t-1}}\sbr{-c_t}\and\mup \defas \frac1{\csup}\E_{P} [c_t].\label{eq:mu_defns}
\end{gather}
Classical boosting results rely on assumption on such edges for
different kinds of $c_t$ 
\citep{fsAD,sTS,ssIBj} and the implicit and weak assumption, that we also
make, that $\csup \defas \max_t \esssup |c_t| \ll \infty$, that is, the
classifiers have bounded confidence. By definition, $\muq{t-1}, \mup
\in [-1,1]$. The difference of sign of $c_t$ is due to the decision
rule for a binary classifier  (\autoref{tab:decision_rules}), whereby
$c_t(x)\geq 0$ reflects that $c_t$ classifies $x\in\cal X$ as
originating from $P$ rather than $Q_{t-1}$, and vice versa for
$-c_t(x)$.
%% Learning instead of learner.
\begin{assumption}[Weak Learning Assumption]
\begin{gather}
    \begin{aligned}
        \exists{\gammap,\gammaq \in(0,1]} \mup\geq \gammap,\quad \muq{t-1} \geq \gammaq.\hspace{4em}
    \end{aligned}\tag{WLA}\label{wla}
\end{gather}
\end{assumption}\noeqref{wla}

\begin{figure}
    \centering
    \makebox[\textwidth][c]{
        \pgfmathdeclarefunction{gauss}{2}{\pgfmathparse{1/(#2*sqrt(2*pi))*exp(-((x-#1)^2)/(2*#2^2))}}
        \pgfmathdeclarefunction{sigmoid}{1}{\pgfmathparse{1/(1+exp(-#1))}}
        \subfloat[\ref{wla} is satisfied since $\mup$ and $\muq{t-1}$ are postive.]{\label{fig:wla_illustraion1}
        \tikzsetnextfilename{wla_satisfied}
        \begin{tikzpicture}
            \small
            \begin{axis}[wla_diagrams]
                \def\bd{1.3}
                \addplot [dashed, thick, Red]  {gauss(-2,1)} node[left, pos=0] {$P$};
                \addplot [dashed, thick, Blue] {gauss(+2,1)}  node[right, pos=1] {$Q_{t-1}$};
                \addplot [thin, Blue] {0.5*(sigmoid(1*(x-\bd)) - 0.5)} node[right, pos=1] {$-\frac1{\csup}·c_t$};
                \addplot [thin, Red] {-0.5*(sigmoid(1*(x-\bd)) - 0.5)} node[left, pos=0] {$\frac1{\csup}·c_t$};
                \addplot [draw=Red, thick, name path=intp]   {-1.5*gauss(-2,1)*(sigmoid(1*(x-\bd)) - 0.5)} node[above, pos=0] {};
                \addplot [draw=Blue, thick, name path=intq]  {1.5*gauss(2,1)*(sigmoid(1*(x-\bd)) - 0.5)};
                \path    [name path=xaxis] (axis cs:-3.5,0) -- (axis cs:5,0);
                \addplot [LightRed] fill between[of=intp and xaxis];
                \addplot [LightBlue] fill between[of=intq and xaxis];
                \addplot +[dashed, black, thin, gray, mark=none] coordinates {(\bd, \pgfkeysvalueof{/pgfplots/ymin}) (\bd, \pgfkeysvalueof{/pgfplots/ymax})};

                \draw [Red,  -latex] (axis cs:-3.5, -0.1) node[left] {$\mup$}      to[out=0, in=245] (axis cs: -2,0.1);
                \draw [Blue, -latex] (axis cs: 5.0, -0.1) node[right]{$\muq{t-1}$} to[out=180,in=-45] (axis cs: 3,0.05);
            \end{axis}\end{tikzpicture}}
        \quad
        \subfloat[\ref{wla} is violated since $\muq{t-1}$ is negative.]{\label{fig:wla_illustraion2}
        \tikzsetnextfilename{wla_dissatisfied}
        \begin{tikzpicture}
            \small
            \begin{axis}[wla_diagrams]
                \def\bd{2.5}
                \addplot [dashed, thick, Red]  {gauss(-2,1)} node[left, pos=0] {$P$};
                \addplot [dashed, thick, Blue] {gauss(+2,1)}  node[right, pos=1] {$Q_{t-1}$};
                \addplot [thin, Blue] {0.5*(sigmoid(1*(x-\bd)) - 0.5)} node[right, pos=1] {$-\frac1{\csup}·c_t$};
                \addplot [thin, Red] {-0.5*(sigmoid(1*(x-\bd)) - 0.5)} node[left, pos=0] {$\frac1{\csup}·c_t$};
                \addplot [draw=Red, thick, name path=intp]   {-1.5*gauss(-2,1)*(sigmoid(1*(x-\bd)) - 0.5)} node[above, pos=0] {};
                \addplot [draw=Blue, thick, name path=intq]  {1.5*gauss(2,1)*(sigmoid(1*(x-\bd)) - 0.5)};
                \path    [name path=xaxis] (axis cs:-3.5,0) -- (axis cs:5,0);
                \addplot [LightRed] fill between[of=intp and xaxis];
                \addplot [LightBlue] fill between[of=intq and xaxis];
                \addplot +[dashed, black, thin, gray, mark=none] coordinates {(\bd, \pgfkeysvalueof{/pgfplots/ymin}) (\bd, \pgfkeysvalueof{/pgfplots/ymax})};

                \draw [Red,  -latex] (axis cs:-3.5, -0.1) node[left] {$\mup$}      to[out=0, in=245] (axis cs: -2,0.1);
                \draw [Blue, -latex] (axis cs: 5.0, -0.1) node[right]{$\muq{t-1}$} to[out=180,in=-45] (axis cs: 3.05,0.025);
            \end{axis}\end{tikzpicture}}
    }
    \caption{Illustration of \ref{wla} in one dimension with a classifier $c_t$ and its decision rule (indicated by the dashed grey line). The red (resp.\ blue) area is the area under the $c_t/\csup · \d P$  (resp.\ $-c_t/\csup · \d Q$) line, that is, $\mup$ (resp.\ $\muq{t-1}$).}
    \label{fig:wla_illustraion}
\end{figure}

% If $\gammap, \gammaq$ were zero, an unbiased coin would fit \ref{wla}.
% Hence, $c_t$ is in fact required to be weakly good on $P$, weakly \textit{bad} on
% $Q_{t-1}$. Note that when we have converged to $Q_{t} = P$, the weak
% learning assumption cannot hold an ymore.\\

% Hence, $c_t$ is required to be weakly \textit{good} on $P$ and weakly \textit{bad} on
% $Q_{t-1}$. 
The Weak Learning Assumption is in effect a separation condition of
$P$ and $Q_{t-1}$. That is, the decision boundary associated with
$c_t$ correctly divides most of the mass of $P$ and most of the mass
of $Q$. This is illustrated in  \autoref{fig:wla_illustraion}. Note
that if $Q_{t-1}$ has converged to $P$, the weak learning assumption
cannot hold. This is reasonable since as $Q_{t-1}\to P$ it becomes
harder to build a classifier to tell them apart. We note that
classical boosting would rely on a single inequality for the weak
learning assumption (involving the two
edges) \citep{ssIBj} instead of two as in our \eqref{wla}. The
difference is however superficial as we can show that both assumptions
are equivalent (Lemma \ref{lemEQUIVWLA} in $\S$\ref{sec:allproofs}).
A boosting algorithm would ensure, for any given error $\varrho > 0$,
that there exists a number of iterations $T$ for which we do have
$\kl(P,Q_{T}) \leq \varrho$, where $T$ is required to be polynomial in
all relevant parameters, in particular $1/\gammap, 1/\gammaq, \csup,
\kl(P,Q_{0})$. Notice that we have to put $\kl(P,Q_{0})$ in the
complexity requirement since it can be arbitrarily large compared to
the other parameters.
\begin{restatable}{theorem}{rateWLA}
    Suppose \ref{wla} holds at each iteration. Then using $Q_t$ as in
    \eqref{eq:multiplicative_density} and $\alpha_t$ as in \autoref{ssec:convergence_under_weak_learning_assumptions}, we are
    guaranteed that $\kl(P,Q_{T}) \leq \varrho$ after a number of
    iterations $T$ satisfying: 
\begin{gather}
T \geq 2 \cdot \frac{\kl(P,Q_{0}) - \varrho}{\gammap\gammaq}.
\end{gather}
\end{restatable}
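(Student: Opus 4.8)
The plan is to combine the per-iteration bound of \autoref{thm:kl_bound} with a greedy choice of $\alpha_t$, and then bound the resulting per-step decrease from below using \eqref{wla}. First I would recall that \autoref{thm:kl_bound} gives, for any $\alpha_t\in[0,1]$,
\begin{gather}
\kl(P,Q_t|_{\alpha_t}) \leq \kl(P,Q_{t-1}) - \alpha_t\Big(\kl(P,Q_{t-1}) - \log\E_P\epsilon_t + \E_P\log\epsilon_t\Big),
\end{gather}
where $d_t = (\diff P/Q_{t-1})\cdot\epsilon_t$, so $c_t = \log d_t = \log(\diff P/Q_{t-1}) + \log\epsilon_t$. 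The natural move is to rewrite the bracketed ``gain'' term purely in terms of the classifier $c_t$: since $\kl(P,Q_{t-1}) = \E_P\log(\diff P/Q_{t-1})$, we get $\kl(P,Q_{t-1}) + \E_P\log\epsilon_t = \E_P c_t$, and $\log\E_P\epsilon_t = \log\E_P[\epsilon_t] = \log\E_P[d_t\cdot \diff{Q_{t-1}}/P] = \log\E_{Q_{t-1}} d_t = \log\E_{Q_{t-1}} e^{c_t}$. Hence the gain is exactly $\E_P c_t - \log \E_{Q_{t-1}} e^{c_t}$, a clean expression involving only $c_t$ under $P$ and $Q_{t-1}$.

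Next I would lower-bound this gain using the two edges in \eqref{eq:mu_defns}. The first term is immediate: $\E_P c_t = \csup\,\mup \geq \csup\,\gammap$. For the second term I would use that $|c_t|\leq\csup$ together with a standard bound on the log-moment-generating function of a bounded random variable — e.g.\ Hoeffding's lemma, $\log\E_{Q_{t-1}} e^{c_t} \leq \E_{Q_{t-1}} c_t + \csup^2/2$, or more simply the convexity inequality $e^{s}\leq 1 + \frac{e^{\csup}-1}{\csup}s$ on $[-\csup,\csup]$ — to turn $\log\E_{Q_{t-1}} e^{c_t}$ into something controlled by $\E_{Q_{t-1}} c_t = -\csup\,\muq{t-1} \leq -\csup\,\gammaq$. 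The cleanest route giving the constant $2$ in the statement is: with $\alpha_t$ chosen as the optimiser of the right-hand side over $[0,1]$ (this is the ``$\alpha_t$ as in \autoref{ssec:convergence_under_weak_learning_assumptions}'' referenced in the theorem), a careful second-order/Hoeffding estimate yields a per-iteration decrease of at least $\tfrac12\gammap\gammaq$ (after rescaling $c_t$ by $1/\csup$ so the relevant quantities live in $[-1,1]$, the $\csup$ cancels). Telescoping over $t=1,\dots,T$ then gives $\kl(P,Q_T)\leq \kl(P,Q_0) - \tfrac{T}{2}\gammap\gammaq$, and setting the right-hand side $\leq\varrho$ produces the claimed bound $T\geq 2(\kl(P,Q_0)-\varrho)/(\gammap\gammaq)$.

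The main obstacle I anticipate is the tightness of the step from $\log\E_{Q_{t-1}} e^{c_t}$ to a linear function of the edge $\muq{t-1}$: a naive Jensen bound goes the wrong way, so one genuinely needs the boundedness of $c_t$, and the exact constant ($\tfrac12$ versus, say, $\tfrac18$ or something worse) depends on optimising the choice of $\alpha_t$ jointly against this estimate. I would handle this by first optimising $\alpha_t$ analytically for the convex-combination-style bound (the right-hand side of \eqref{eq:kl_bound} is affine in $\alpha_t$, so the optimum is an endpoint unless one first applies the Hoeffding-type smoothing, after which it becomes a genuine interior optimum), and then checking that the product structure $\gammap\gammaq$ emerges — the equivalence of the one-inequality and two-inequality forms of the weak learning assumption (Lemma~\ref{lemEQUIVWLA}) is what guarantees both edges can be used simultaneously. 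Everything else — rewriting the gain in terms of $c_t$, telescoping, and solving for $T$ — is routine.
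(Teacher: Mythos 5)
Your rewrite of the gain as $\alpha_t\bigl(\E_P c_t - \log \E_{Q_{t-1}} e^{c_t}\bigr)$ is correct, and the paper does use exactly this decomposition (via $\E_P\epsilon_t = \E_{Q_{t-1}} e^{c_t}$ and $\mup\csup = \kl(P,Q_{t-1}) + \E_P\log\epsilon_t$). But the route you propose from there has a genuine gap. Your bound on $\log\E_{Q_{t-1}}e^{c_t}$ via Hoeffding gives a quantity that is \emph{independent of $\alpha_t$}, so the right-hand side of \autoref{thm:kl_bound} remains affine in $\alpha_t$: ``Hoeffding-type smoothing'' does not produce an interior optimum, because the thing you smooth sits in the coefficient, not in the $\alpha_t$-dependence. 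And the endpoint evaluation does not give the claimed rate: with $\csup$ rescaled to $1$, Hoeffding gives a per-step gain of $\mup + \muq{t-1} - \tfrac12 \geq \gammap + \gammaq - \tfrac12$, which is \emph{negative} whenever $\gammap + \gammaq < \tfrac12$. The product structure $\gammap\gammaq$ does not emerge from this estimate.

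Two further misreadings are worth flagging. First, the $\alpha_t$ referenced in the theorem is not an optimiser of \autoref{thm:kl_bound}'s right-hand side; it is the AdaBoost-style $\alpha_t = \min\{1, \tfrac{1}{2\csup}\log(\tfrac{1+\muq{t-1}}{1-\muq{t-1}})\}$ from \autoref{alg:density_estpract}, which depends only on the $Q_{t-1}$-edge. Second, the paper never invokes Hoeffding's lemma. Its actual route to controlling $\log\E_{Q_{t-1}}e^{c_t}$ is: (i) rescale $c_t$ so that it is Properly Scaled (\autoref{thPS}), which \autoref{lemCOND1} and \autoref{threvJ} show is always achievable without disturbing \ref{wla}; (ii) apply the tight boosting inequality $1-ab \geq \sqrt{1-a^2}\exp\bigl(-\tfrac{b}{2}\log\tfrac{1+a}{1-a}\bigr)$ (\autoref{lemWLA}) to bound $\E_{Q_{t-1}}e^{\alpha_t c_t}$; (iii) use a \emph{reverse} Jensen inequality built on Bregman information (\autoref{lemJB}, \autoref{lemBIA}) to relate $\E_{Q_{t-1}}e^{c_t}$ back to $\E_{Q_{t-1}}e^{\alpha_t c_t}$. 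Chaining these yields $\log\E_{Q_{t-1}}e^{c_t} \leq \mup\csup/2$, which plugged into \autoref{thm:kl_bound} gives \autoref{thBoostALPHA0}: a per-step drop of $\tfrac{\mup}{4}\log\tfrac{1+\muq{t-1}}{1-\muq{t-1}}$. The theorem you are asked to prove then follows in one line from the elementary bound $\log\tfrac{1+z}{1-z} \geq 2z$ and telescoping. The PS scaling and the reverse Jensen step are the non-obvious machinery you are missing; the final telescoping you describe is the easy part.

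If you want to salvage the Hoeffding idea, the right move is to bypass the Jensen step inside \autoref{thm:kl_bound} entirely: compute the exact identity $\kl(P,Q_t) = \kl(P,Q_{t-1}) - \alpha_t \E_P c_t + \log\E_{Q_{t-1}}e^{\alpha_t c_t}$ and apply Hoeffding to the $[-\alpha_t\csup, \alpha_t\csup]$-bounded variable $\alpha_t c_t$. That produces a genuine quadratic in $\alpha_t$ whose interior optimiser is $(\mup + \muq{t-1})/\csup$ with a gain of $(\mup+\muq{t-1})^2/2$. But this is a \emph{different} $\alpha_t$ than the paper's, it would require handling the clamped case $\csup < \mup + \muq{t-1}$ separately (where the guaranteed drop scales with $\csup$ and can be arbitrarily small), and so it does not prove the theorem as stated.
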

There is more to boosting: the
question naturally arises as to whether faster convergence is
possible. A simple observation allows to conclude that
it should require more than just \ref{wla}. Define
\begin{gather}
    \mu_{\epsilon_t} \defas \frac{1}{\csup}\cdot \E_P \log \epsilon_t,
    \label{defMuepsilont}
\end{gather}
the normalized expected log-density estimation error. Then we have $\mup = (1/\csup) \cdot
    \kl(P,Q_{t-1}) + \mu_{\epsilon_t}$,
so controlling $\mup$ does not give substantial leverage on
$\kl(P,Q_{t})$ because of the unknown $\mu_{\epsilon_t}$. We 
show that an additional weak assumption on $\mu_{\epsilon_t}$ is all
that is needed with \ref{wla}, to obtain convergence rates that compete with
\citet[Lemma 2]{tgbssAB} but using much weaker assumptions. We call
this assumption the \emph{Weak Dominance Assumption (WDA)}.

\begin{assumption}[Weak Dominance Assumption]
    \begin{gather}
        \exists{\gammae>0}\forall{t\geq 1}\mu_{\epsilon_t} \geq -\gammae\tag{WDA}\label{wda}
    \end{gather}
\end{assumption}
\noeqref{wda}
The assumption \ref{wda} takes its name from the observation that we have
\begin{gather}
    c_t = \log d_t = \log\rbr{\diff P{Q_{t-1}}·\epsilon_t}\quad\text{and}\quad|c_t| \leq\csup,
\end{gather}
so by ensuring that $\epsilon_t$ is going to be non-zero
$P$-almost everywhere, \ref{wda} states that nowhere in the support do we have
$Q_{t-1}.$ negligible against $P$. This also looks like a weak finite
form of absolute
continuity of $P$ with respect to $Q_{t-1}$, which is not
unreminiscent of the boundedness condition on the log-density ratio of
\citet[Theorem 1]{lbMD}. Provided \ref{wla} and
\ref{wda} hold, we are able to show geometric boosting convergence rates.
\begin{restatable}{theorem}{geomBOOST}
    Suppose \ref{wla} and \ref{wda} hold at each boosting iteration. Then we
    get after $T$ boosting iterations:
    \begin{align}
        \mathclap{\kl(P,Q_{T}) \leq 
        \rbr{1 - \frac{\min\{2, \gammaq /
            \csup\}\gammap}{2(1+\gammae)}}^T \cdot \kl(P,Q_{0}).}
    \end{align}
\end{restatable}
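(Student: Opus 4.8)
The plan is to upgrade the additive one-step estimate behind the previous (arithmetic-rate) theorem into a multiplicative one, using \ref{wda} as a ceiling on $\kl(P,Q_{t-1})$.

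First I would record the \emph{exact} one-step decrease rather than the inequality of \autoref{thm:kl_bound}. Since $\d\tilde Q_t = d_t^{\alpha_t}\cdot\d\tilde Q_{t-1}$, \autoref{prop:recursive_normalisation} gives $\diff{Q_t|_{\alpha_t}}{Q_{t-1}} = d_t^{\alpha_t}/\E_{Q_{t-1}}d_t^{\alpha_t}$, and with $c_t=\log d_t$ a one-line computation yields
\[
\kl(P,Q_{t-1}) - \kl(P,Q_t|_{\alpha_t}) \;=\; \alpha_t\,\E_P c_t \;-\; \log\E_{Q_{t-1}}e^{\alpha_t c_t}.
\]
Working with this equality (rather than forcing $\alpha_t=1$ and invoking \autoref{thm:kl_bound}) is essential because $\csup$ may be large. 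I would then fix the step of \autoref{ssec:convergence_under_weak_learning_assumptions}, $\alpha_t \defas \tfrac12\min\{2,\gammaq/\csup\}\in(0,1]$, so that $\beta_t\defas\alpha_t\csup = \tfrac12\min\{2\csup,\gammaq\}$. Convexity of $x\mapsto e^{\alpha_t x}$ on $[-\csup,\csup]$ gives, pointwise, $e^{\alpha_t c_t}\le \frac{\csup-c_t}{2\csup}e^{-\beta_t} + \frac{\csup+c_t}{2\csup}e^{\beta_t}$; taking $\E_{Q_{t-1}}$ and using $\E_{Q_{t-1}}c_t = -\csup\muq{t-1}$ bounds the left side by $\psi(\beta_t)$, where $\psi(\beta)\defas\frac{1-\muq{t-1}}{2}e^{\beta} + \frac{1+\muq{t-1}}{2}e^{-\beta}$. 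Now $\psi$ is convex with $\psi(0)=1$, and solving $\psi(\beta)=1$ shows $\psi\le 1$ on $[0,\log\frac{1+\muq{t-1}}{1-\muq{t-1}}]$; since $\beta_t\le\gammaq/2\le\muq{t-1}\le\log\frac{1+\muq{t-1}}{1-\muq{t-1}}$ (the middle inequality from \ref{wla}, the last valid for any argument in $[0,1)$), we get $\log\E_{Q_{t-1}}e^{\alpha_t c_t}\le 0$. Plugging this and $\E_P c_t = \csup\mup\ge\csup\gammap$ (from \ref{wla}) into the identity gives the additive progress $\kl(P,Q_{t-1}) - \kl(P,Q_t|_{\alpha_t}) \ge \beta_t\gammap = \tfrac12\min\{2\csup,\gammaq\}\gammap$.

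Next I would invoke \ref{wda} for the ceiling: from $c_t = \log\rbr{\diff P{Q_{t-1}}\cdot\epsilon_t}$ and $|c_t|\le\csup$ we have the identity $\mup = \csup^{-1}\kl(P,Q_{t-1}) + \mu_{\epsilon_t}$ noted in the text, and since $\mup\le 1$ and $\mu_{\epsilon_t}\ge-\gammae$ it rearranges to $\kl(P,Q_{t-1})\le\csup(1+\gammae)$. Feeding the additive progress into this ceiling,
\[
\kl(P,Q_t) \;\le\; \kl(P,Q_{t-1}) - \tfrac12\min\{2\csup,\gammaq\}\gammap \;\le\; \Bigl(1 - \tfrac{\min\{2\csup,\gammaq\}\gammap}{2\csup(1+\gammae)}\Bigr)\kl(P,Q_{t-1}),
\]
and the bracketed quantity equals $1-\frac{\min\{2,\gammaq/\csup\}\gammap}{2(1+\gammae)}$. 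As \ref{wla} and \ref{wda} hold at every iteration, unrolling this recursion from $t=T$ down to $t=1$ gives the stated bound.

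I expect the delicate point to be the second paragraph: one must resist using \autoref{thm:kl_bound} off the shelf and instead keep the exact decrease $\alpha_t\E_P c_t - \log\E_{Q_{t-1}}e^{\alpha_t c_t}$, then clip the natural AdaBoost-style step to $\beta_t=\tfrac12\min\{2\csup,\gammaq\}$ precisely so that the cumulant term $\log\E_{Q_{t-1}}e^{\alpha_t c_t}$ stays nonpositive while the linear term still contributes $\Theta(\gammap\gammaq)$. Everything afterwards --- the \ref{wda} ceiling and the geometric unrolling --- is routine.
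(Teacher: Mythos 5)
Your proof is correct and recovers exactly the stated bound, but it takes a genuinely different route from the paper's. The paper keeps the adaptive AdaBoDE step $\alpha_t = \min\cbr{1, \tfrac{1}{2\csup}\log\tfrac{1+\muq{t-1}}{1-\muq{t-1}}}$ and handles the clamped and non-clamped regimes separately, invoking the per-iteration decrease \eqref{balphalowboosting} established in the proof of \autoref{thBoostALPHA0} --- which in turn rests on the Properly-Scaled machinery (the reverse-Jensen bound of \autoref{lemJB}, plus \autoref{lemWLA} and \autoref{lemBIA}). You instead fix a non-adaptive step $\alpha_t = \tfrac12\min\cbr{2, \gammaq/\csup}$ expressed directly in terms of the \ref{wla} constant, and control the cumulant $\log\E_{Q_{t-1}}e^{\alpha_t c_t}$ by the elementary Hoeffding-style chord bound for $x\mapsto e^{\alpha_t x}$ on $[-\csup,\csup]$, noting that the resulting $\psi(\beta)$ stays $\le 1$ for $\beta\le\log\tfrac{1+\muq{t-1}}{1-\muq{t-1}}$ and that your $\beta_t$ falls in that window because of \ref{wla}. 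Both routes arrive at the same additive progress $\tfrac12\min\cbr{2\csup,\gammaq}\gammap$, and from there the \ref{wda} ceiling $\kl(P,Q_{t-1})\le(1+\gammae)\csup$ and the additive-to-geometric conversion are identical. What your route buys is a short, self-contained proof that bypasses the PS rescaling apparatus; what it costs is that the step size now needs $\gammaq$ a priori rather than the observable edge $\muq{t-1}$, so your argument certifies the bound for a variant algorithm rather than for AdaBoDE as actually stated in \autoref{alg:density_estpract}, where $\alpha_t$ is chosen adaptively.
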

% The proof is given in \autoref{sec-proof-geomBOOST}. 
Hence, we get a boosting algorithm which
guarantees $\kl(P,Q_{T}) \leq \varrho$ after a number of
iterations $T$ which is now logarithmic in relevant parameters: $T\geq (1/\log K) \cdot \log(\kl(P,Q_{0})/\varrho)$,
where $K\defas 1/(1-(\min\cbr{2,  \gammaq / \csup}\gammap/(2(1+\gammae))))$.

\section{Experiments\protect\footnotemark}\label{sec:experiments}
\footnotetext{%
    The Julia-language code to run the subsequent experiments is made available at \href{https://github.com/ZacCranko/BoostedDensities.jl}{github.com/ZacCranko/BoostedDensities.jl}, complete with details about the implementation of kernel density estimation.
}

Let $t \in \cbr{0,\dots, T}$. Our experiments take place in $\cal X \defas \R^2$, where we use a simple neural network classifier $c_t\in\disc$, which we train (in the usual way) using cross entropy error by post composing it with the logistic sigmoid: $\sigma \circ c_t$. After training $c_t$ we transform it into to a density ratio using an exponential function: $d_t \defas \exp\mathop{\circ}c_t$ (cf.\ \autoref{sec:preliminaries}) which we use to update $Q_{t-1}$. 

In most experiments we train for $T> 1$ rounds therefore we need to
sample from $Q_{t-1}$.\footnote{It is easy to pick $Q_0$ to be
  convenient to sample from.} Our setting here is simple and so this
is easily accomplished using random walk Metropolis--Hastings. As
noted in the introduction, in more sophisticated domains it remains an
open question how to sample effectively from a density of the form
\eqref{eq:multiplicative_density}, in particular for a support having large dimensionality.

Since our classifiers $c_t$ are the outputs of a neural network they are unbounded, this violates the assumptions of \autoref{sec:boosted_density_estimation}, therefore in most cases we use the naive choice $\alpha_t \defas 1/2$.

\paragraph{Metrics}

At each $t\in\cbr{0,\dots, T}$ we estimate compute $\kl(P, Q_t)$, Negative Log-Likelihood (NLL) $\frac1{\E_{P}\log \d P}\E_{P}\log \d Q$, and accuracy $\E_P\mathord{}\iver*{c_t > \frac12}$. Note that we normalise NLL by its true value to make this quantity more interperable. The KL divergence is computed using numerical integration, and as such it can be quite tricky to ensure stability when running stochastically varying experiments, and becomes very hard to compute in dimensions higher than $n=2$. In these computationally difficult cases we use NLL, which is much more stable by comparison. We plot the mean and $95\%$ confidence intervals for these quantities. 

\subsection{Results}
Complete details about the experimental procedures including target data and network architectures are deferred to the supplementary material (\autoref{sec:experimental_setup}).

\subsubsection{Error and convergence}\label{ssec:error_and_convergence}
\begin{wrapfigure}{R}{.5\textwidth}
    \centering
        \tikzsetnextfilename{training_error_accuracy}
        \begin{tikzpicture}[inner sep=0.2em, outer sep=0.5\pgflinewidth]
            \begin{groupplot}[julia_convergence_plot, group style={group size= 1 by 2,  xlabels at=edge bottom, xticklabels at=edge bottom, vertical sep=1em}, xmin=0, xmax=5, xtick = {0,...,6}, width = 7cm]
            \nextgroupplot[ymin=0.4, ymax=0.9, height=3cm, ylabel style={align=center, at={(0,0.5)}, yshift=1.6em}, ylabel={Accuracy\\(0.5 if $Q_t\to P$)}, extra y ticks = {0.5}]
                \addplot[forget plot] graphics [xmin=0, xmax=7, ymin=0, ymax=1.5, includegraphics={trim=4mm 2.1mm 7mm 2.1mm, clip}, ] {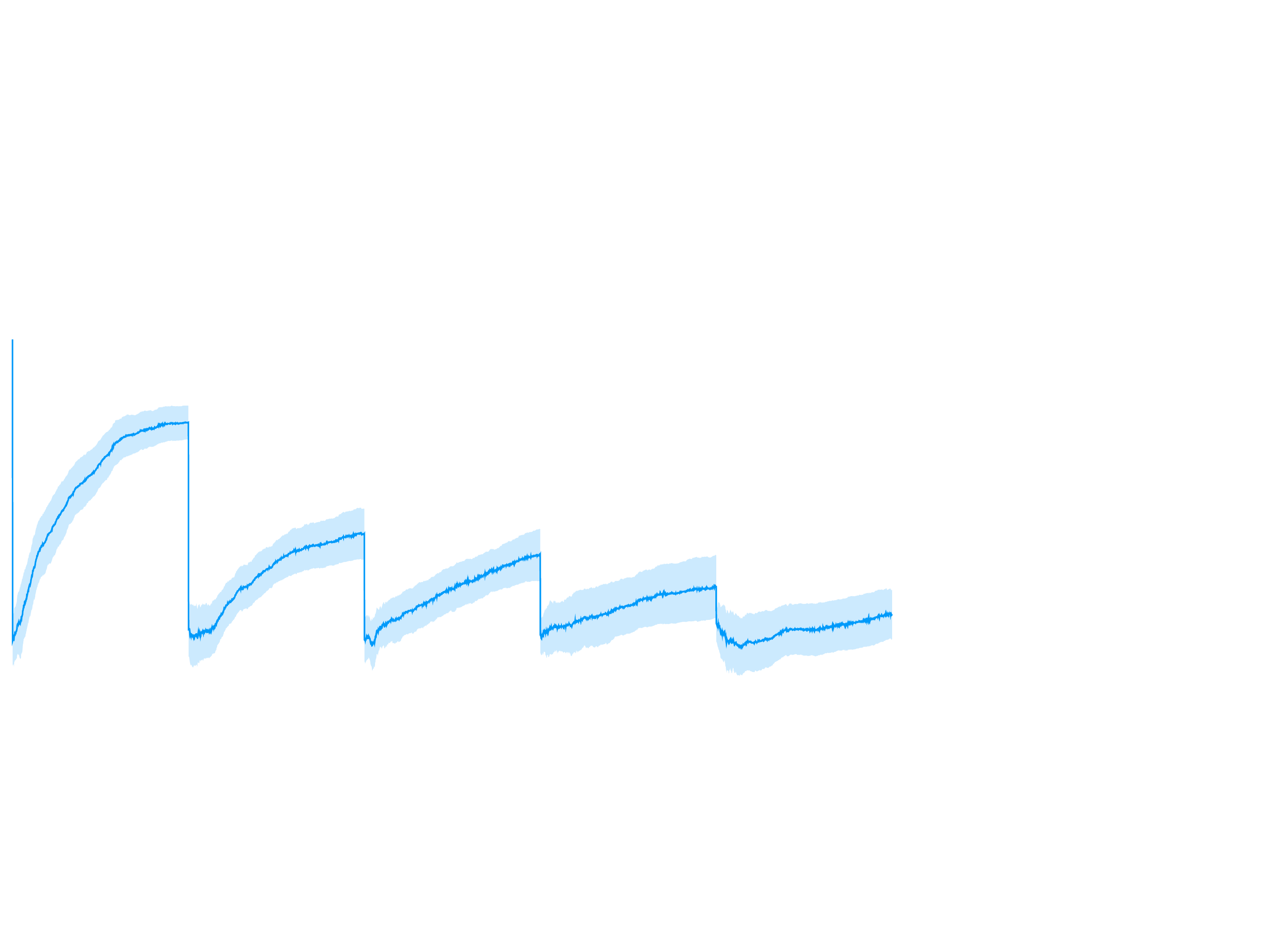};
            % \end{axis}
            \nextgroupplot[ymin=0.0, ymax=4, ylabel style={align=center, at={(0,0.5)}, yshift=1.6em}, height=3cm, ylabel= {$\kl(P,Q_t)$\\(lower is better)}]
                \addplot[forget plot] graphics [xmin=0, xmax=7, ymin=0, ymax=4.0, includegraphics={trim=2.1mm 2.1mm 7mm 2.1mm, clip}]{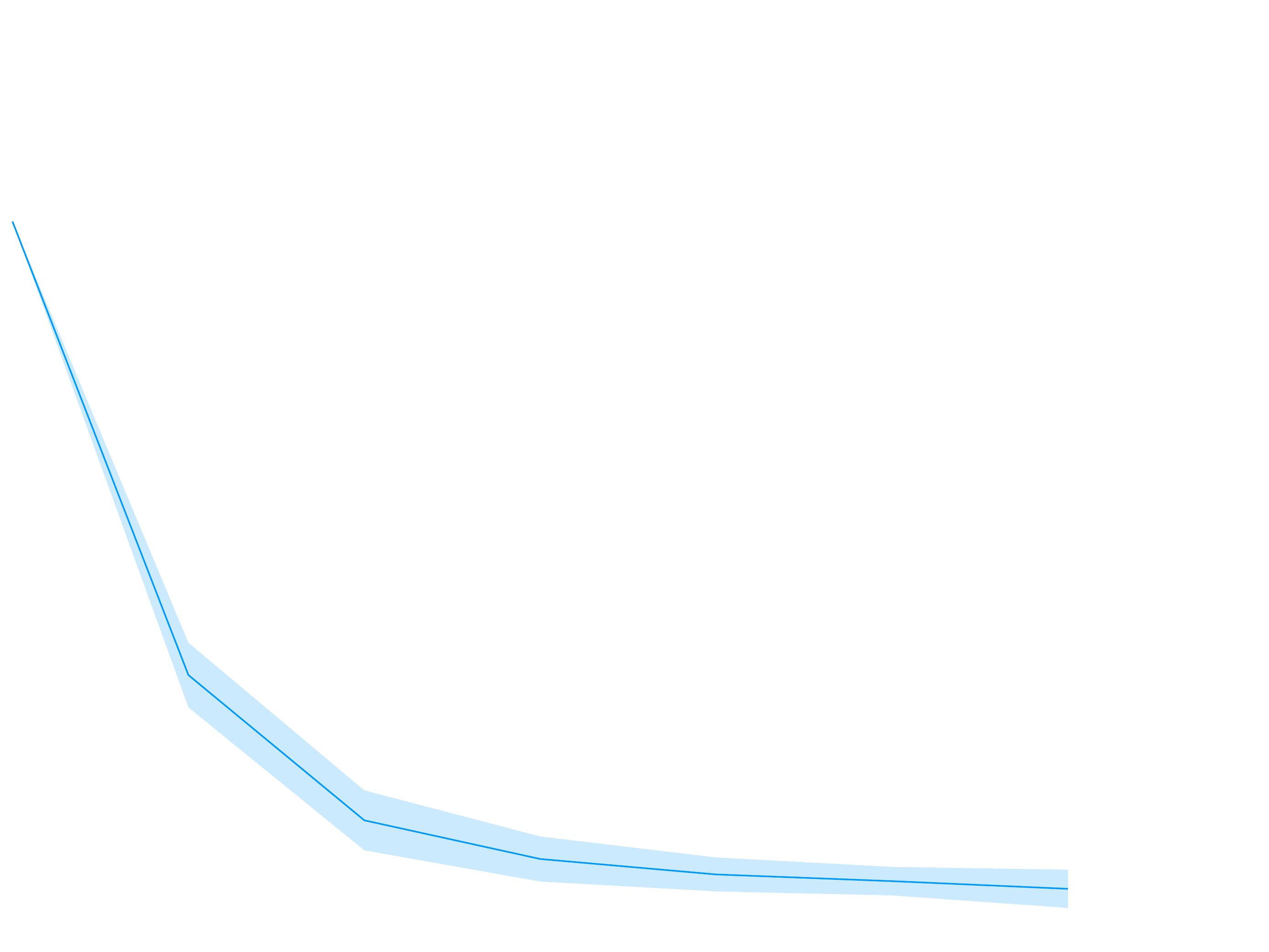};                
            \end{groupplot}
        \end{tikzpicture}
        % }
    \caption{As $\kl(P, Q_t)\to 0$ it becomes harder and harder to train a good classifier $c_t$. \label{fig:classifier_difficulties}}
\end{wrapfigure}

In order to minimise the divergence of $Q_t$ from $P$ we need to train a sufficiently good classifier $c_t$ such that we can build a good approximation to $\diff P/Q_{t-1}$. Naturally as  $Q_t \to P$ it should become harder and harder to tell the difference between a sample from $P$ and $Q_t$ with high probability.

This is exactly what we observe. In \autoref{fig:classifier_difficulties} we train a classifier with the same neural network topology as in \autoref{ssec:activation_functions}. The test accuracy over the course of training before each $t$ is plotted. As $\kl(P,Q_t)\to 0$ samples from $P$ and  $Q_t$ become harder and harder to tell apart and the best accuracy we can achieve over the course of training decreases, approaching $1/2$. Dually, the higher the training accuracy achieved by $c_{t}$, the greater the reduction from $\kl(P,Q_{t-1})$ to $\kl(P,Q_{t})$, thus the decreasing saw-tooth shape in \autoref{fig:classifier_difficulties} is characteristic of convergence.

\subsubsection{Activation functions}\label{ssec:activation_functions}
To look at the effect of the choice of activation function $a$ we train the same network topology, for a set of activation functions:  Numerical results trained to fit a ring of Gaussians are plotted in \autoref{fig:activation_comparison_kl},  contour plots of some of the resulting densities are presented \autoref{fig:activation_contour_plots}. All  activation functions except for $\mathrm{Softplus}$ performed about the same by the end of six round, with $\mathrm{ReLU}$ and $\mathrm{SELU}$ being the marginal winners. It is also interesting to note the narrow error ribbons on $\tanh$ compared to the other functions indicating more consistent training.

\begin{figure}[t]
    \begin{center}
        \subfloat[$P$]{\label{fig:activation_target}\tikzsetnextfilename{activation_target}
            \begin{tikzpicture}\begin{axis}[julia_contour_plot, xmin=-6.5, xmax=6.6, ymin=-6.5, ymax=6.6]
                % trim=left bottom right top 
                \addplot graphics [xmin=-6.5, xmax=6.6, ymin=-6.5, ymax=6.6, includegraphics={trim=16 16 20 16, clip}] 
                {plots/activation_comparison/activation_comparison-control-Plots_contour-lims_-6_5_to_6_6};
            \end{axis}\end{tikzpicture}}\quad
        \subfloat[$\mathrm{ReLU}$]{\label{fig:activation_relu}\tikzsetnextfilename{activation_relu}
            \begin{tikzpicture}\begin{axis}[julia_contour_plot, xmin=-6.5, xmax=6.6, ymin=-6.5, ymax=6.6]
                % trim=left bottom right top 
                \addplot graphics [xmin=-6.5, xmax=6.6, ymin=-6.5, ymax=6.6, includegraphics={trim=16 16 20 16, clip}] 
                {plots/activation_comparison/activation_comparison-NNlib_relu-Plots_contour-lims_-6_5_to_6_6};
            \end{axis}\end{tikzpicture}}\quad
        \subfloat[$\tanh$]{\label{fig:activation_tanh}\tikzsetnextfilename{activation_tanh}
            \begin{tikzpicture}\begin{axis}[julia_contour_plot, xmin=-6.5, xmax=6.6, ymin=-6.5, ymax=6.6]
                % trim=left bottom right top 
                \addplot graphics [xmin=-6.5, xmax=6.6, ymin=-6.5, ymax=6.6, includegraphics={trim=16 16 20 16, clip}] 
                {plots/activation_comparison/activation_comparison-tanh-Plots_contour-lims_-6_5_to_6_6};
            \end{axis}\end{tikzpicture}}\quad
        \subfloat[$\mathrm{Softplus}$]{\label{fig:activation_softplus}\tikzsetnextfilename{activation_softplus}
            \begin{tikzpicture}\begin{axis}[julia_contour_plot, xmin=-6.5, xmax=6.6, ymin=-6.5, ymax=6.6]
                % trim=left bottom right top 
                \addplot graphics [xmin=-6.5, xmax=6.6, ymin=-6.5, ymax=6.6, includegraphics={trim=16 16 20 16, clip}] 
                {plots/activation_comparison/activation_comparison-NNlib_softplus-Plots_contour-lims_-6_5_to_6_6};
            \end{axis}\end{tikzpicture}}
    \end{center}
    \caption{The effect of different activation functions, modelling a ring of Gaussians. The ``petals'' in the $\mathrm{ReLU}$ condition are likely due to the linear hyperplane sections the final network layer being shaped by the final exponential layer.}
    \label{fig:activation_contour_plots}
\end{figure}
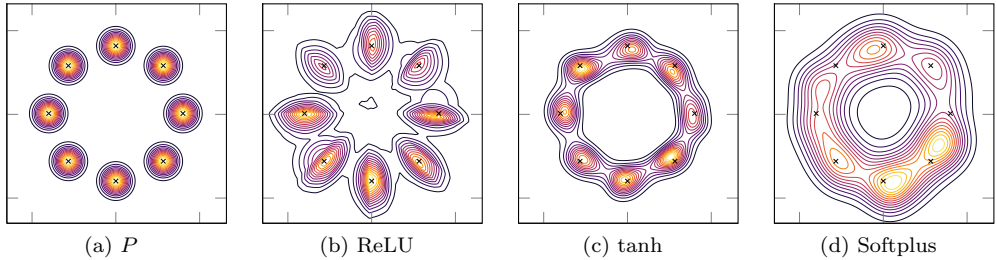

\subsubsection{Network topology}\label{ssec:architecture_comparison_nll}
To compare the effect of the choice of network architecture we fix activation function and try a variety of combinations of network architecture, varying both the depth and the number nodes per layer. For this experiment the target distribution $P$ is a mixture of 8 Gaussians that are randomly positioned at the beginning of each run of training. Let $m\times n$ denote a fully conencted neural network $c_t$ with $m$ hidden layers and $n$ nodes per layer. After each hidden layer we apply the $\mathrm{SELU}$ activation function. 

Numerical results are plotted in \autoref{fig:architecture_comparison_nll}. Interestingly doubling the nodes per layer has little benefit, showing only modererate advantage. By comparison, increasing the network depth allows us to achieve over a 70\% reduction in the minimal divergence we are able to achieve. 

\begin{figure}[t]
    \begin{center}
        \subfloat[Activation function experiment]{\label{fig:activation_comparison_kl}
            \tikzsetnextfilename{activation_comparison_kl}
            \begin{tikzpicture}[inner sep=0.2em, outer sep=0.5\pgflinewidth, trim axis left, trim axis right]
                \begin{axis}[julia_convergence_plot, xmin=0.0, xmax=6.0, ymin=0.0, ymax=6.5, ylabel style={align=center},
                ylabel= {$\kl(P,Q_t)$\\(lower is better)}, xtick = {0,...,6}]
                \addplot[forget plot] graphics [xmin=0, xmax=7, ymin=0, ymax=7, includegraphics={trim=2.1mm 2.1mm 7mm 2.1mm, clip}]
                {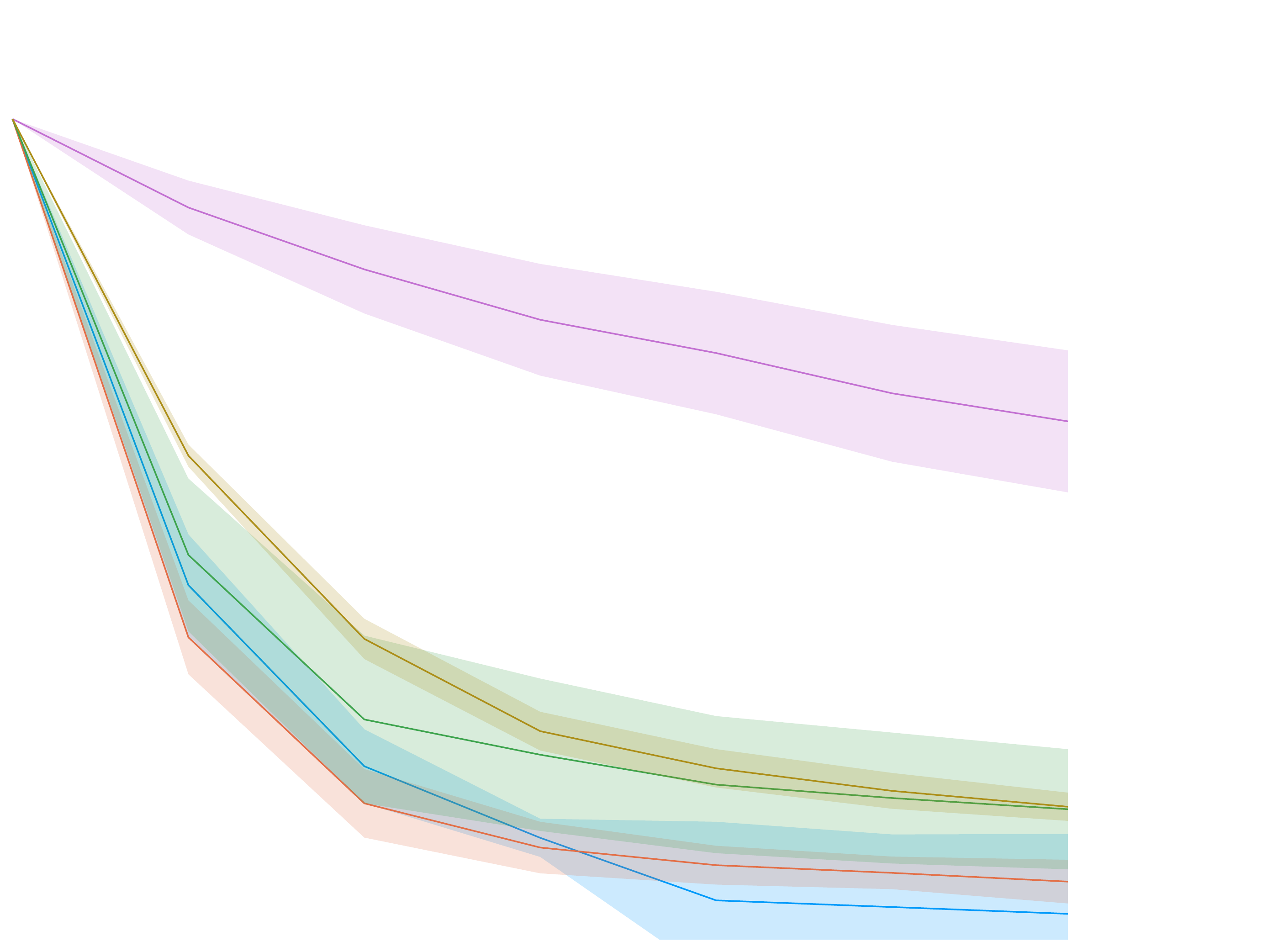};
                % relu, selu, softplus, sigmoid, tanh
                \addlegendimage{no markers, plots_01}
                \addlegendentry{$\mathrm{ReLU}$ }
                \addlegendimage{no markers, plots_02}
                \addlegendentry{$\mathrm{SELU}$ }
                \addlegendimage{no markers, plots_03}
                \addlegendentry{$\mathrm{Softplus}$ }
                \addlegendimage{no markers, plots_04}
                \addlegendentry{$\mathrm{Sigmoid}$ }
                \addlegendimage{no markers, plots_05}
                \addlegendentry{$\tanh$}
            \end{axis}\end{tikzpicture}}
        \qquad\qquad
        \subfloat[Network topology experiment]{\label{fig:architecture_comparison_nll}
            \tikzsetnextfilename{network_architecture_comparison_kl}
            \begin{tikzpicture}[inner sep=0.2em, outer sep=0.5\pgflinewidth, trim axis left, trim axis right]
                \begin{axis}[julia_convergence_plot, xmin=0.0, xmax=6.0, ymin=0.0, ymax=0.8, ylabel style={align=center},
                ylabel= {$\kl(P,Q_t)$\\(lower is better)}, xtick = {0,...,6}]
                \addplot[forget plot] graphics [xmin=0, xmax=6, ymin=0, ymax=0.8, includegraphics={trim=2.1mm 2.1mm 7mm 2.1mm, clip}]
                {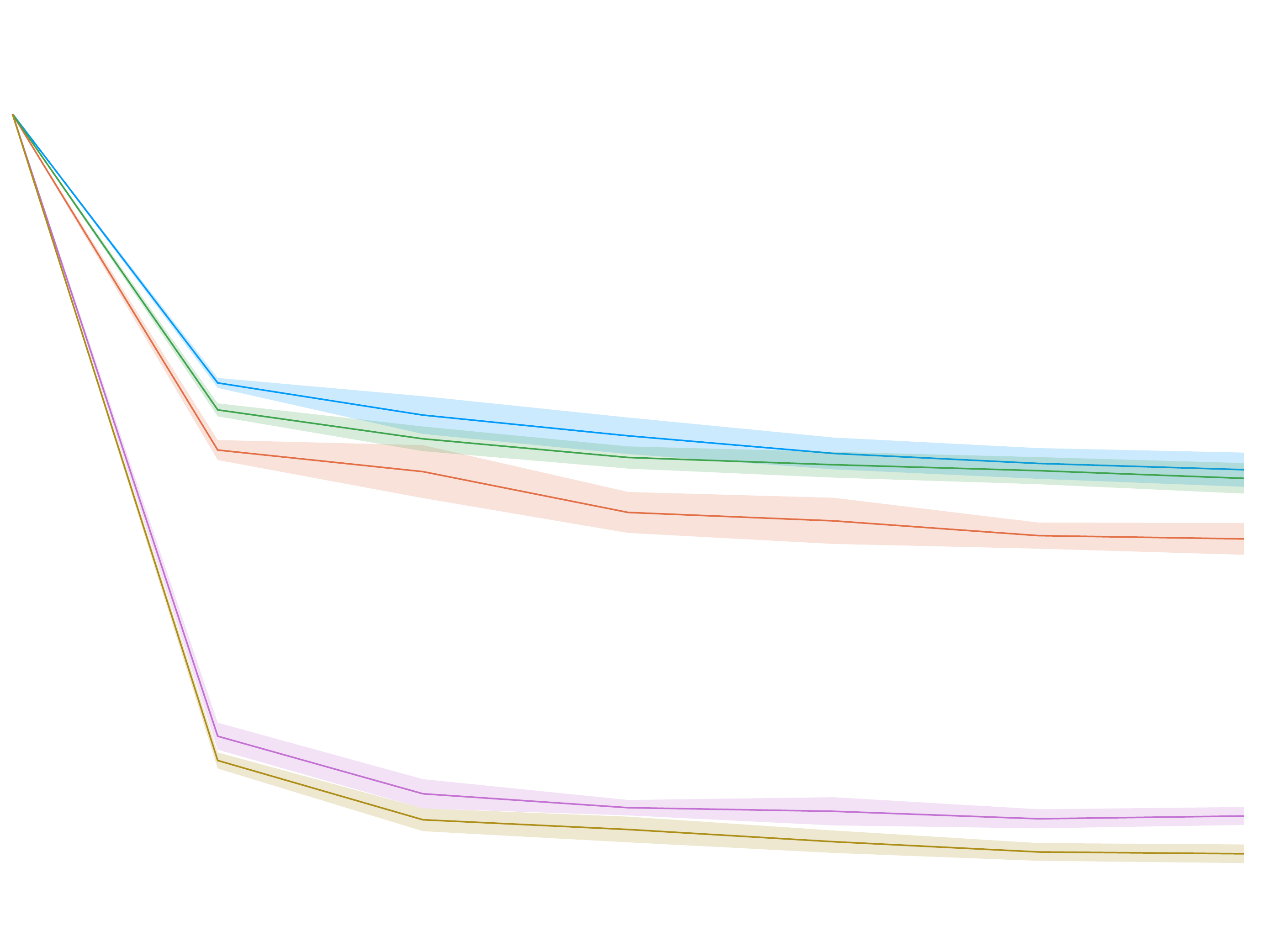};
                \addlegendimage{no markers, plots_01}
                \addlegendentry{$\mathrm{SELU}:1\times 5$ }
                \addlegendimage{no markers, plots_02}
                \addlegendentry{$\mathrm{SELU}:2\times 5$ }
                \addlegendimage{no markers, plots_03}
                \addlegendentry{$\mathrm{SELU}:1\times 10$ }
                \addlegendimage{no markers, plots_04}
                \addlegendentry{$\mathrm{SELU}:2\times 10$ }
                \addlegendimage{no markers, plots_05}
                \addlegendentry{$\mathrm{SELU}:2\times 20$}
            \end{axis}\end{tikzpicture}}\\
    \end{center}
    \caption{KL divergence for a variety of activation funtions and architectures over six iterations of boosting.}
\end{figure}

\subsubsection{Convergence across dimensions}\label{ssec:convergence_across_dimensions}

For this experiment we vary the dimension $n \in \cbr{2,4,6}$ of the
space $\cal X = \R^n$ using a neural classifier $c_t$ that is trained
without regard for overfitting and look at the convergence of NLL
(\autoref{fig:dimensionality}). After we achieve the optimal NLL of 1,
we observe that NLL becomes quite variable as we begin to
overfit. Secondly overfitting the likelihood becomes harder as we
increase the dimensonality, taking roughly two times the number of
iterations to pass NLL $=1$ in the $n=4$ condition as the $n=2$
condition. We conjecture that not overfitting is a matter of early
stopping boosting, in a similar way as it was proven for the
consistency of boosting algorithms \citep{btAI}.

\begin{wrapfigure}{R}{.5\textwidth}
    \tikzsetnextfilename{dimensonality_experiment}
    \begin{tikzpicture}[inner sep=0.2em, outer sep=0.5\pgflinewidth, trim axis right]
        \begin{axis}[julia_convergence_plot, xmin=0, xmax=10, ymin=0.0, ymax=2.5, ylabel style={align=center, yshift=-1em},
        ylabel= {Negative Log-Likelihood\\(closer to 1 is better)}, xtick = {0,...,10}, extra y ticks = {1.0}]
        \addplot[forget plot] graphics [xmin=-1, xmax=11, ymin=-1, ymax=3, includegraphics={trim=3mm 2.1mm 7mm 2.1mm, clip}]{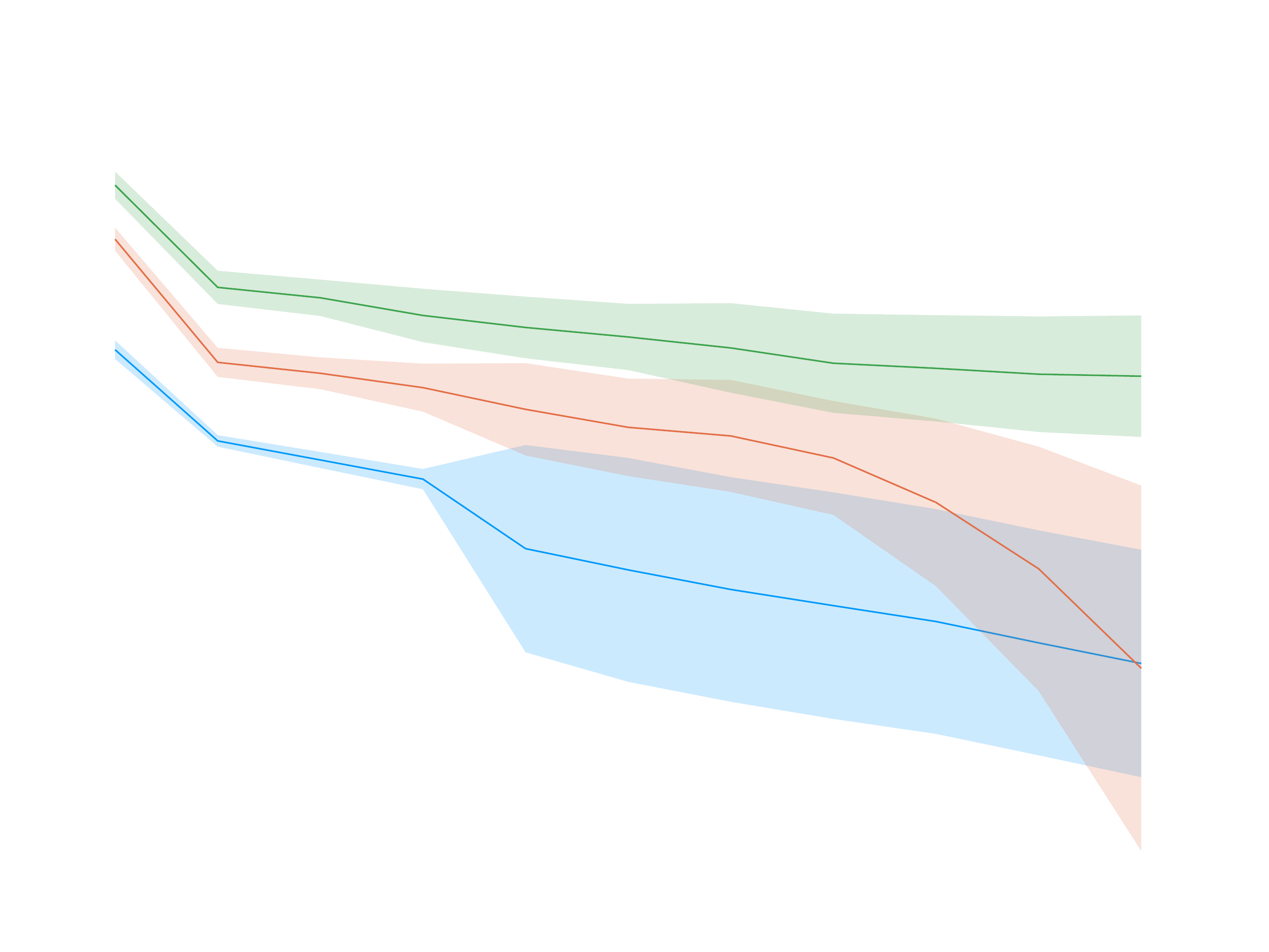};
        \addlegendimage{no markers, plots_01}
        \addlegendentry{$2$}
        \addlegendimage{no markers, plots_02}
        \addlegendentry{$4$}
        \addlegendimage{no markers, plots_03}
        \addlegendentry{$6$}
    \end{axis}\end{tikzpicture}
    \caption{Convergence in more dimensions.\label{fig:dimensionality}}
\end{wrapfigure}

\subsubsection{Comparison with kernel density estimation}\label{ssec:comparison_with_kde}

In this experiment we compare our boosted densities with Kernel Density Estimation (KDE). For this experiment we train a deep neural network with three hidden layers.
% \begin{gather}
%    c_t:\cal X \xrightarrow[\textrm{dense}]{\text{SELU}}\R^{10}\xrightarrow[\textrm{dense}]{\text{SELU}}\R^{10}\xrightarrow[\textrm{dense}]{\text{SELU}}\R.\label{eq:big_adabodest_arch}
% \end{gather}
The step size $\alpha$ is selected to minimise NLL by evaluating the training set at 10 equally spaced points over $[0,1]$. We compare the resultant density after $T=2$ rounds with a variety of kernel density estimators, with bandwidth selected via the Scott/Silverman rule.\footnote{The Scott and Silverman rules yield identical bandwidth selection criteria in the two-dimensional case.}

\begin{figure}[t]
    \newsavebox{\kdeplot}
    \savebox{\kdeplot}{\tikzsetnextfilename{kde_nll_violin_plot}
    \begin{tikzpicture}[inner sep=0.2em, outer sep=0.5\pgflinewidth, trim axis left, trim axis right]
        \begin{axis}[julia_violin_plot, width = 9cm, 
        xmin=0.0, xmax=9.0, ymin=0.5, ymax=1.75,
        ylabel style={align=center}, 
        ylabel= {Negative Log-Likelihood\\(closer to 1 is better)},
        xtick={0.5, 1.5, 2.5, 3.5, 4.5, 5.5, 6.5, 7.5, 8.5},
        yticklabel style={font=\tiny},
        xticklabel style={rotate=35, font=\tiny, anchor= north east},
        xticklabels={
            $Q_0 $               ,
            \textsc{Cosine}      ,
            $Q_2 $               ,
            \textsc{Triangular}  ,
            \textsc{Exponential} ,
            $Q_1 $               ,
            \textsc{Gaussian}    ,
            \textsc{Epanechnikov},
            \textsc{Tophat}      
        }]
            \addplot graphics [xmin=0.0, xmax=9.0, ymin=0.0, ymax=2.0, includegraphics={trim=3.0mm 3.0mm 5.0mm 4.0mm}] {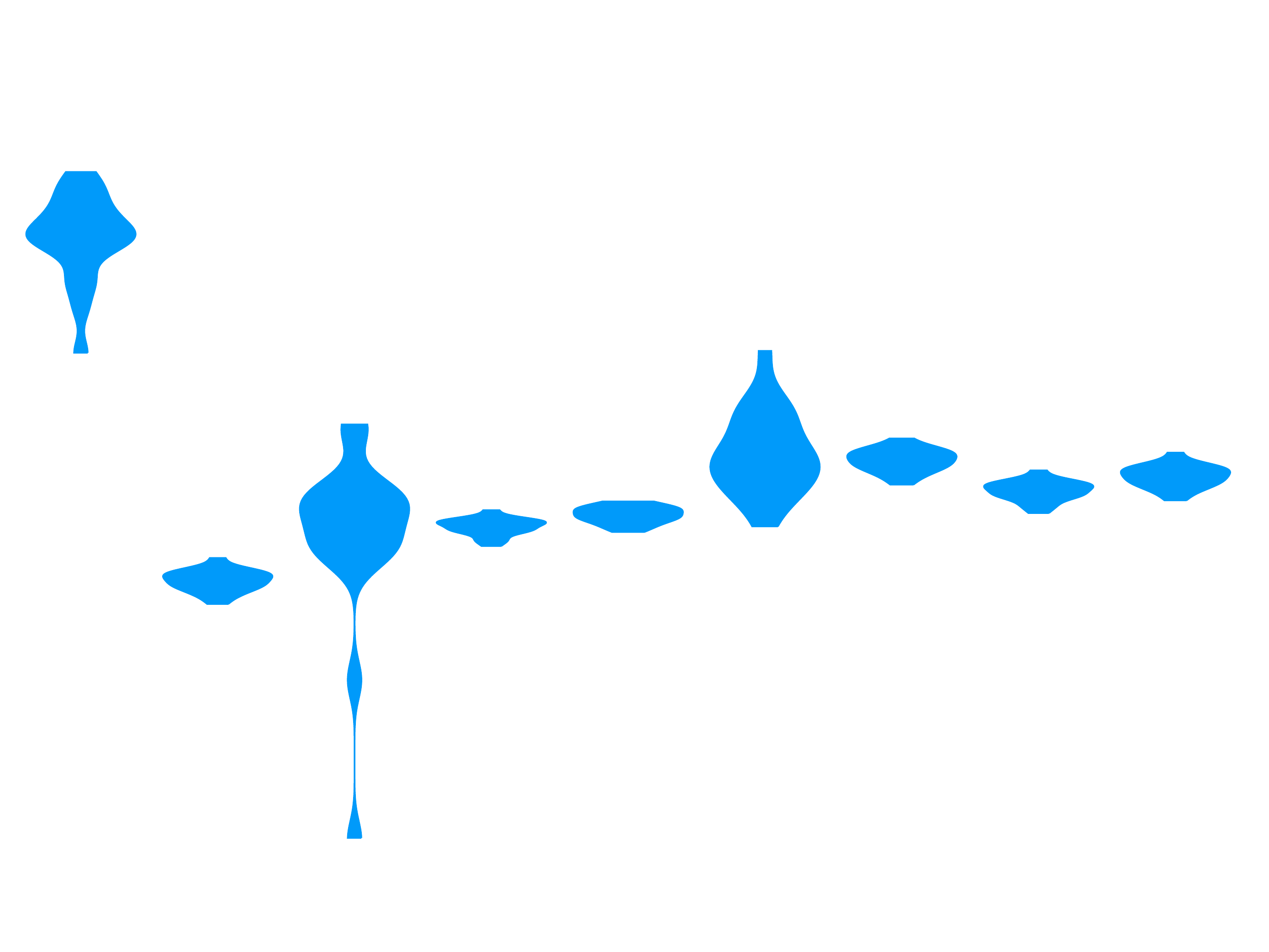};
            \addplot[dashed, black, samples=100, domain=0:9, gray] {1};
        \end{axis}
    \end{tikzpicture}}
    % [19]{R}{.42\textwidth}
    \centering
    \makebox[\textwidth][c]{
    \subfloat[\label{fig:kde_comparison_experiment_statistics}]{%
        \raisebox{\dimexpr.5\ht\kdeplot-.5\height}{%
        \footnotesize
        \begin{tabular}[b]{l c}\toprule
            Condition &\begin{tabular}[x]{@{}c@{}} Mean NLL {\scriptsize$\pm$95\% CI}\\ \end{tabular}\\\midrule
            $Q_0 $                   & 1.5131 {\scriptsize$\pm 0.0459$} \\
            \textsc{Cosine}          & 0.7734 {\scriptsize$\pm 0.0112$} \\
            $Q_2 $                   & 0.8685 {\scriptsize$\pm 0.0946$} \\
            \textsc{Triangular}      & 0.8898 {\scriptsize$\pm 0.0089$} \\
            \textsc{Exponential}     & 0.9154 {\scriptsize$\pm 0.0088$} \\
            $Q_1 $                   & 1.0492 {\scriptsize$\pm 0.0437$} \\
            \textsc{Gaussian}        & 1.0333 {\scriptsize$\pm 0.0118$} \\
            \textsc{Epanechnikov}    & 0.9675 {\scriptsize$\pm 0.0105$} \\
            \textsc{Tophat}          & 0.9983 {\scriptsize$\pm 0.0117$} \\\bottomrule
        \end{tabular}
        }
    }\hspace{4em}\subfloat[\label{fig:kde_comparison_experiment_violin}]{%
        \centering
        \usebox{\kdeplot}
    }}
    \caption{KDE comparison results. The conditions are in decreasing order with respect to the absolute difference of mean NLL and 1. \label{fig:kde_comparison_experiment}}%
\end{figure}

Results from this experiment are diplayed in
\autoref{fig:kde_comparison_experiment}. On average $Q_1$ fits the
target distribution $P$ better than all but the most efficient
kernels, and at $Q_2$ we begin overfitting, which aligns with the
observations made in $\S$\ref{ssec:convergence_across_dimensions}. We
note that his performance is with a model with around 200 parameters,
while the kernel estimators each have 2000 --- \textit{i.e.} we
achieve KDE's performances with models whose size is the \textit{tenth} of
KDE's. Also, in this experiment $\alpha_t$ is selected to minimise NLL, however it is not hard to imagine that a different selection criteria for $\alpha_t$ would yield better properties with respect to overfitting.

\subsubsection{Comparison with \protect{\citet{tgbssAB}}}\label{ssec:comparison_with_adagan}

To compare the performance of our model (here called \textsc{Discrim}) with \textsc{AdaGAN} we replicate their Gaussian mixture toy experiment,\footnote{This is the eperiment {\texttt gaussian\_gmm.py} at \href{https://github.com/tolstikhin/adagan}{github.com/tolstikhin/adagan}} fitting a randomly located eight component isotropic Gaussian mixture where each component has constant variance. These are sampled using the code provided by \citet{tgbssAB}. 

% We then train \textsc{Discrim} and \textsc{AdaGAN} on the same samples with the network architecture
% $c_t: \cal X \to  \R^{5} \xrightarrow[\textrm{dense}]{\textrm{ReLU}} \R^{5} \xrightarrow[\textrm{dense}]{\textrm{ReLU}} \R^{5} \xrightarrow[\textrm{dense}]{\textrm{ReLU}} \R.$
% The \textsc{AdaGAN} architecture is 
% \begin{gather}
%      G_t^{\textrm{AdG}}:\cal L \xrightarrow[\textrm{dense}]{\textrm{ReLU}} \R^{10}\xrightarrow[\textrm{dense}]{\textrm{ReLU}}\R^{5} \to \cal X,\quad
%     D_t^{\textrm{AdG}}:\cal X \xrightarrow[\textrm{dense}]{\textrm{ReLU}} \R^{20}\xrightarrow[\textrm{dense}]{\textrm{ReLU}}\R^{10}\xrightarrow[\textrm{dense}]{\textrm{ReLU}} \R\xrightarrow{\sigma} (0,1);\label{eq:adag_disc}
% \end{gather}  
% where $\cal L \defas \R^5$ is the latent state space.

We compute the coverage metric\footnote{The coverage metric $C_{\kappa}$ can be a bit misleading since any density $Q$ that covers $P$ will yield high $C_{\kappa}(P,Q)$, nomatter how spread out it is. This is the case at $t=0$ when we initially fit $Q_0$. A high coverage metric, however, is sufficient to claim that a model $Q$ has not ignored any of the mass of $P$ when combined with another metric such as NLL. That is, a high $C_{\kappa}$ is a necessary condition for mode-capure.} of \citet{tgbssAB}: $C_{\kappa}(P,Q) \defas P\rbr{\lev_{>\beta}Q}$, where  $Q\rbr{\lev_{>\beta}\d Q} = \kappa,$ and $\kappa\in[0,1]$. That is, we first find $\beta$ to determine a set where most of the mass of $Q$ lies, $\lev_{>\beta} \d Q$, then look at how much of the mass of $P$ resides there.

Results from teh experiment are plotted in \autoref{fig:adagan_comparison}. Both \textsc{Discrim} and \textsc{AdaGAN} converge closely to the true NLL, and then we observe the same characteristic overfitting in previous experiments after iteration 4 (\autoref{fig:adagan_comparison_nll}). It is also interesting that this also reveals itself in a degredation of the coverage metric \autoref{fig:adagan_comparison_coverage}. 

Notably \textsc{AdaGAN} converges tightly, with NLL centered around
its mean, while \textsc{Discrim} begins to vary wildly. However the
\text{AdaGAN} procedure includes a step size that decreases with $1/t$
--- thereby preventing to some extent overfitting with $t$ ---, whereas \textsc{Discrim} uses a constant step size 1/2. Suggesting that a similarly decreasing procedure for $\alpha_t$ may have desirable properties.

\begin{figure}[t]
    \begin{center}
        \subfloat[]{\label{fig:adagan_comparison_nll}
            \tikzsetnextfilename{adagan_comparison_nll}
            \begin{tikzpicture}[inner sep=0.2em, outer sep=0.5\pgflinewidth, trim axis left, trim axis right]
                \begin{axis}[julia_convergence_plot, xmin=0, xmax=8, ymin=-5, ymax=15, ylabel style={align=center, yshift=-1em}, ytick = {-5, 1, 5, 10, 15},
                ylabel= {Negative Log-Likelihood\\(closer to 1 is better)}, xtick = {0,...,10}, extra y ticks = {1.0}, minor y tick num = 4,, height=5cm]
                \addplot[forget plot] graphics [xmin=-1, xmax=10, ymin=-20, ymax=20, includegraphics={trim=3mm 2.1mm 7mm 2.1mm, clip}]
                {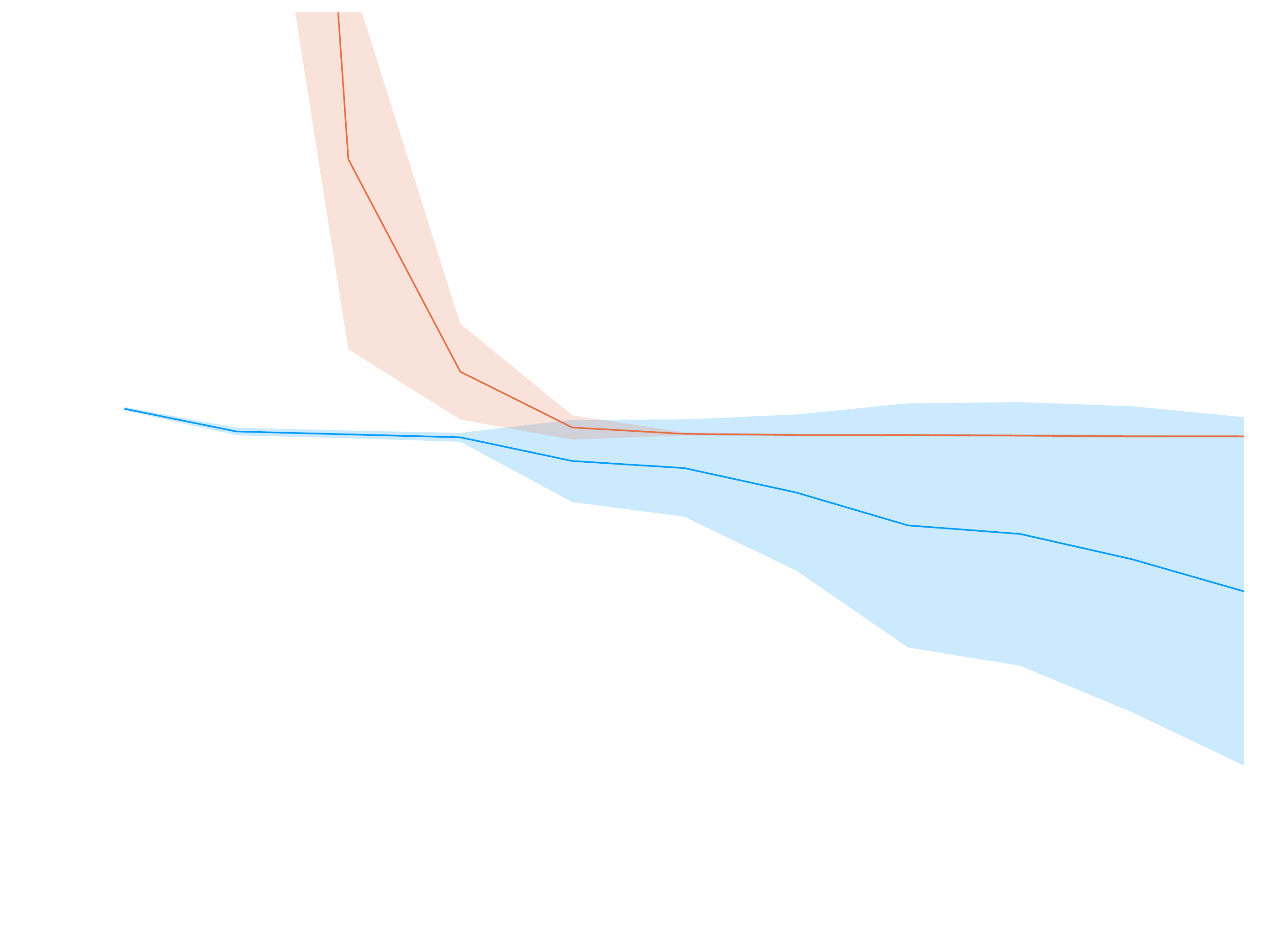};
                \addlegendimage{no markers, plots_01}
                \addlegendentry{\textsc{Discrim}}
                \addlegendimage{no markers, plots_02}
                \addlegendentry{\textsc{AdaGAN}}
            \end{axis}\end{tikzpicture}}
        \qquad\qquad
        \subfloat[]{\label{fig:adagan_comparison_coverage}
            \tikzsetnextfilename{adagan_comparison_coverage}
            \begin{tikzpicture}[inner sep=0.2em, outer sep=0.5\pgflinewidth, trim axis left, trim axis right]
                \begin{axis}[julia_convergence_plot, xmin=0, xmax=8, ymin=0.6, ymax=1.0,ylabel style={align=center, yshift=-1em},
                ylabel= {$C_{95\%}$\\(higher is better)}, xtick = {0,...,10}, legend style={yshift=-1.2cm}, extra y ticks = {1.0}, height=5cm]
                \addplot[forget plot] graphics [xmin=-1, xmax=10, ymin=0.1, ymax=1.5, includegraphics={trim=3mm 2.1mm 7mm 2.1mm, clip}]
                {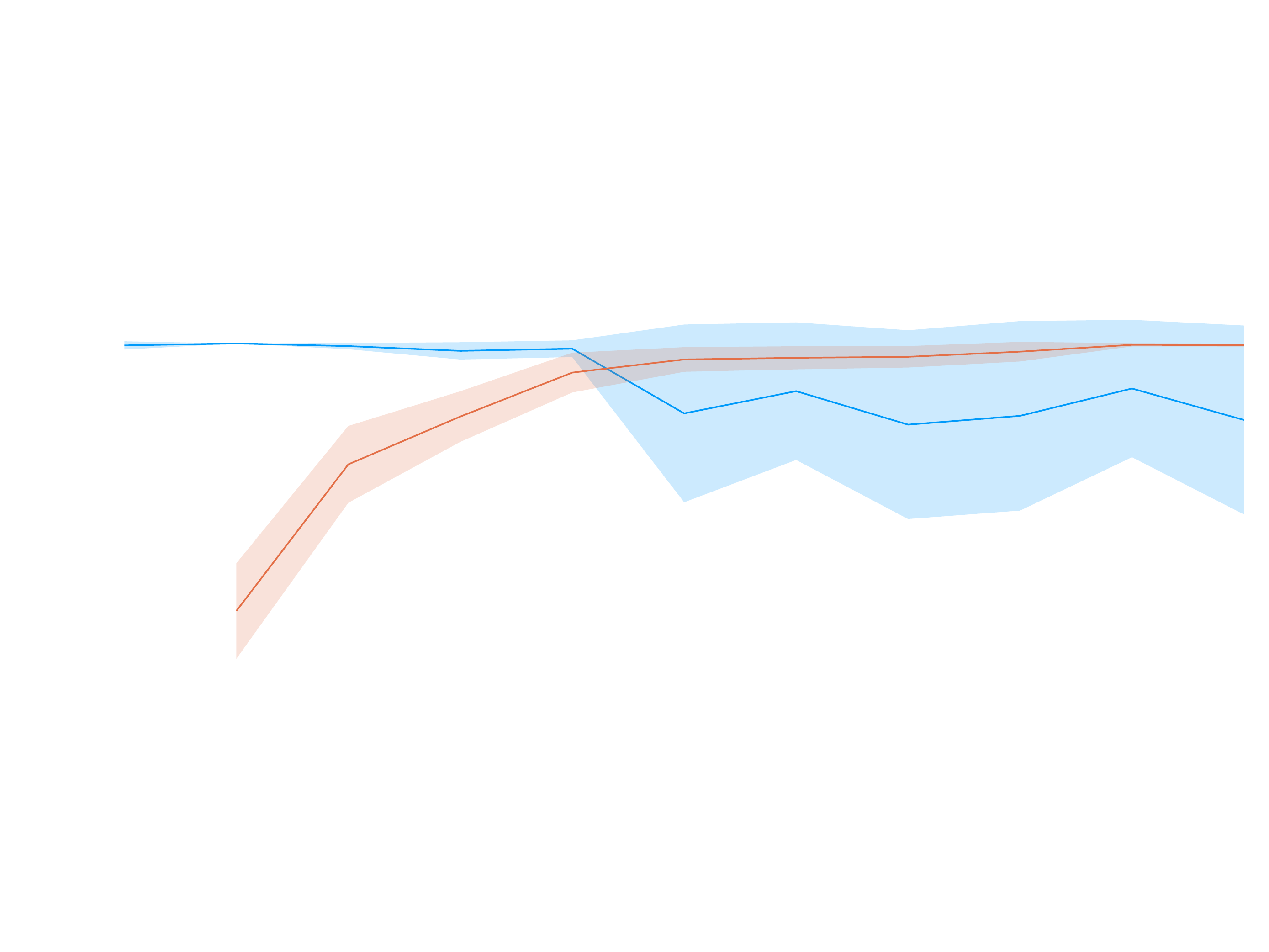};
                \addlegendimage{no markers, plots_01}
                \addlegendentry{\textsc{Discrim}}
                \addlegendimage{no markers, plots_02}
                \addlegendentry{\textsc{AdaGAN}}
            \end{axis}\end{tikzpicture}}
    \end{center}
    \caption{Comparing the performance of \textsc{Discrim} and \textsc{AdaGAN}. \label{fig:adagan_comparison}}
\end{figure}

\subsection{Summary}

We summarize here some key experimental observations:

\begin{itemize}
    \item both the activation functions and network topology have a large effect on the ease of training and the quality of the learned density $Q_T$ with deeper networks with fewer nodes per layer yielding the best results (\autoref{ssec:activation_functions}, \autoref{ssec:architecture_comparison_nll}).

    \item When the networks $c_t$ are trained long enough we observe overfitting in the resulting densities $Q_T$ and instability in the training procedure after the point of overfitting (\autoref{ssec:convergence_across_dimensions} \autoref{ssec:comparison_with_kde}, \autoref{ssec:comparison_with_adagan}), indicating that a procedure to take $\alpha_t\to 0$ should be optimal.

    \item We were able to match the performance of kernel density estimation with a naive procedure to select $\alpha_t$. With a better selection procedure we may very well be able to do much better, but this is beyond the scope of our perliminary investigation here (\autoref{ssec:comparison_with_kde}).
    \item We were able to at least match the performance of \textsc{AdaGAN} with respect to density estimation (\autoref{ssec:comparison_with_adagan}). 
\end{itemize}
Finally, while we have used KDE as a point of comparison of algorithm, there is no reason why the two techniques could not be combined. Since KDE is a closed form mixture distribution that's quite easy sampled, there is no reason why one couldn't build some kind of kernel density distribution and use this for $Q_0$ which one could refine with a neural network.

\section{Conclusion}\label{sec:conclusion}

The idea of learning a density in an iterative, ``boosting''-type
combination of proposal densities has recently met with significant attention. Typically, aproaches use a 
subroutine oracle to fit the coefficients in the
combination of densities. In all cases that have significant convergence
rates, such an oracle is required to satisfy very strong
constraints.

In this paper, we have shown that all it takes to learn a density
iteratively in a boosting fashion is a weak learner in the original sense of the Probably Approximately
Correct learning model of \citet{kTO}, leading to comparable or better convergence
bounds than previous approaches at comparatively very reduced price in
assumptions. We derive this result through a series of related
contributions, including (i) a finer characterization of the solution to
the $f$-GAN problem and (ii) a full characterization of the distribution we
learn in exponential families. 

Experimentally, our approach shows very promising results for an early
capture of modes, and significantly outperforms AdaGAN
during the early boosting iterations using a comparatively very small
architecture. Our experiments leave however open the challenge to obtain a
black box sampler for domains with moderate to large dimension. We
conjecture that the full characterization that we get of the distribution learned might be of significant help to tackle this challenge.

\newpage
\bibliography{bibliography}

\newpage
\makeatletter
\vbox{%
    \hsize\textwidth
    \linewidth\hsize
    \vskip 0.1in
    \@toptitlebar
    \centering
    {\Large\bf Supplementary Material for \\ \papertitle\par}
    \@bottomtitlebar
}
\vskip 0.3in \@minus 0.1in
\makeatother

\appendix

\section{Epilogue}\label{sec:epilogue}

Results in the area of iterative approaches to density estimation,
including boosting approaches which are iterative by nature, can be
characterized according to three features: how the \textit{convergence} is
characterized, what kind of \textit{assumptions} it does rely upon and finally,
whether it is of direct relevance to current \textit{empirical settings} for machine learning.

Regarding convergence, there are three kinds of formal results that are traditionally proven. Some are
convergence without rates \citep{grover2017boosted,dpsPG}, and others give rates
that are negligible with regard to recent results (including ours) \citep{rsBD}. In the third and final category are
explicit convergence rates. Some of the related approaches have
an explicit intractable objective and they rather optimize a tractable
surrogate bound. This is the case for variational inference, where the
surrogate is the evidence lower-bound \citep{gwfbdBV,kblssFS,lkgrBV,mfaVB}. Because of the explicit gap to the
intractable optimum, we do not mean to compare such approaches to
ours, but can summarize most of the formal results in those papers as showing sublinear
convergence, that is, of the form $I(P,Q_T) \leq \inf_Q I(P,Q) +
J/T$ for $J>0$ parameter dependant and $I$ defining a suitable divergence. 

In the rest of the related approaches, it quite remarkable that all of them exploit the same Frank--Wolfe-type update \eqref{eq:genupdatebden} \citep{lbMD,neDE,tgbssAB,zSG} --- even
when the connection to Frank--Wolfe is explicit in few of them 
\citep{lkgrBV}. Until recently \citep{tgbssAB}, all these other approaches essentially displayed sublinear convergence rates \citep{lbMD,neDE,zSG}. This can be
compared to our rates from \autoref{thBoostALPHA0} and 
\autoref{rateWLA}. We compare favorably with them from three standpoints. First,
all these algorithms integrate calls to
an oracle/subroutine that needs to solve a nested optimization
problem \textit{for its optimum} --- the contraint put on our oracle, the weak learner, appears
much weaker. Second, all these algorithms integrate parameters whose
computation would require the full knowledge of
distributions \citep{neDE,zSG} or their parameterized space
\citep{lbMD}. It is unclear how approximations would impact
convergence \citep{mfaVB}. In our case, \autoref{rateWLA} just operates on
estimated parameters, straightforward to compute. Third and last, previous works make more stringent structural
assumptions restricting the form of the optimum \citep{lbMD,neDE,zSG},
while we just assume that $\csup$ is bounded, which puts a constraint
--- easily enforceable ---
on the proposals of the weak learner and not on the optimum.

To drill down further in the \textbf{assumptions} required, the very few previous approaches that manage to beat sublinear
convergence to reach geometric convergence --- that we reach in
\protect{\autoref{geomBOOST}} --- require very
strong assumptions, such as the constraint that iterates are close
enough to the
optimum sought \citep[Corollaries 1, 2]{tgbssAB}. In fact, in this
latter work, the
parameterization of the weight $\alpha$ in
\eqref{eq:genupdatebden} chosen for their experiments \textit{implicitly imposes} the
convergence of iterates to this optimum \citep[\S4]{tgbssAB}. In our
case, we have shown that equivalent convergence rates can be
obtained without boosting (\autoref{cor:qt_rt}) but with an assumption
which is used in \citep[Corollary
1, Eq. 10]{tgbssAB}, and is thus very strong. Even when this is not our main result,
\autoref{cor:qt_rt} is new and interesting in the light of \citep{tgbssAB}'s
results because (i) it does not make use of their
convex mixture model and (ii) we do not have the
additional technical requirement that $P(\d Q_{t-1} / \d P = 0) < \alpha_t$,
that is, roughly the mass where $\d Q_{t-1} = 0$ is bounded by
the \textit{leveraging} coefficient. Our main result on geometric
convergence shows that such convergence is within reach with
a much weaker
assumptions than \citep[Corollary
1, Eq. 10]{tgbssAB}, in fact as weak as the weak learning
assumption. To get our result, we need an additional assumption on the lower-boundedness of the log-errors
$\epsilon_t$ a. e.
via the \ref{wda}, but this is still very weak considering that we
fit an exponential family and in interesting applications like
image processing, domain $\cal X$ is closed so unless $\d P$ is allowed
to peak arbitrarily, we essentially get \ref{wda} for reasonable $\gammae$.

Now, why is the assessment of all assumptions important in the light
of experimental settings? Because it brings them to
a trial by fire, as to whether results survive to experimental machine
learning, with available information which is in general a
partial estimated snapshot of the theory. It should be clear at this
point that, with the
\textit{sole} exception of a \textit{subset of} variational approaches ---
which, again, settle for an explicitly tractable surrogate of the objective ---, \textit{all previous
approaches} would fail at this test,
\citep{grover2017boosted,gwfbdBV,lbMD,lkgrBV,neDE,tgbssAB,zSG}. They
would all fail
essentially because in practice, we obviously would not have access to
$P$ to test assumptions nor carry out fine-grained optimisation
involving $P$. To our
knowledge, our result in \autoref{sec:boosting_with_estimates} is the
first attempt to provide an algorithm fully executable on current
experimental learning settings and whose convergence relies on assumptions that
would also easily be testable or enforceable empirically. 

We must insist however on the fact that all previous approaches that
investigate variational inference or GANs get a black box sampler
which may be hard to train \textit{but} it always easy to sample from,
in particular
 high
dimensions \citep{gwfbdBV,kblssFS,lkgrBV,mfaVB,tgbssAB} --- this is
clearly where the bottleneck of our theory lies currently, even when our
experiments display that efficient sampling is available up to
moderate dimensions. We conjecture that it is much further scalable. 

\section{The error term}\label{sec:the_error_term}
Recall the reparameterised variational problem from \autoref{sec:preliminaries}
\begin{gather}
    \minimise_u
    J(u) \defas \E_Q f^*\circ f'\circ u - \E_P f'\circ u
    \st
    u\in \cal F.
    \tag{\ref{eq:var_prob}}\label{eq:var_prob_app}
\end{gather}

The solution to \eqref{eq:var_prob_app} easily follows when $\cal F$ is a large enough set of measurable functions \citep{nowozin2016f,nguyen2010estimating,grover2017boosted}. However when $\cal F$ is a more constrained class, a stronger result is necessary. Assume $\cal F$ is a subset of the normed space, $(\scr F, \abs{\marg})$. Let $\scr F^*$ be its continuous dual. The Fréchet normal cone (also called prenormal cone) of $\cal F \subseteq \scr F$ at $u\in\scr F$ is
 \begin{gather}
    \Nc_{\cal F}(u) \defas 
    \cbr{ u^* \in\scr F^* :\,\limsup_{\mathclap{\cal F \supseteq (v)\to u}}\frac{\inp{u^*, v-u}}{\abs{v-u}}\leq 0}.
\end{gather}
When $\cal F$ is convex, $\Nc_{\cal F}(u)$ is the ordinary normal cone.

\begin{theorem}\label{thm:inexact_sol}
    Assume $f: \R_+ \to \R_+$ is strictly convex and twice differentiable, and $\scr F$ is a normed space of functions $\cal X \to \intr(\dom f)$. Let $\cal F\subseteq\scr F $ and  $\bar u\in\argmin_{u\in \cal F}J(u)$. If $J$ is finite on a neighbourhood of $\bar u$, then
    \begin{gather}
        \bar u \in \diff{P}{Q} - \Nc_{\cal F}(\bar u).
    \end{gather}
    If, in addition, $\cal F$ is convex with $\diff P/Q \in \intr \cal F$, then $\bar u = \diff P/Q$.
\end{theorem}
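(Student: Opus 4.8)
The plan is to derive the inclusion from a first-order optimality condition for the constrained minimisation of $J$ over $\cal F$. First I would compute the Gâteaux (or Fréchet) derivative of $J$ at $\bar u$. Writing $J(u) = \E_Q (f^*\circ f')(u) - \E_P f'(u)$ and using the identity $(f^*)' = (f')^{-1}$ valid since $f$ is strictly convex and differentiable, one gets $(f^*\circ f')' = (f')^{-1}(f'(u))\cdot f''(u) = u\cdot f''(u)$, so that pointwise
\begin{gather}
    \frac{\d}{\d u}\rbr*{(f^*\circ f')(u) \d Q - f'(u)\d P} = f''(u)\rbr*{u\,\d Q - \diff PQ\,\d Q}.
\end{gather}
Hence the derivative of $J$ at $\bar u$ in a direction $v - \bar u$ is $\inp{DJ(\bar u),\,v-\bar u} = \int f''(\bar u)\,(\bar u - \diff P/Q)\,(v-\bar u)\,\d Q$. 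Twice-differentiability of $f$ and finiteness of $J$ near $\bar u$ are what make this differentiation legitimate (one should note $f''>0$ by strict convexity, so the weight is positive but this is not yet needed).

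Next I would invoke the generalised Fermat rule for the Fréchet normal cone: if $\bar u$ minimises $J$ over $\cal F$ and $J$ is (Fréchet) differentiable near $\bar u$, then $-DJ(\bar u) \in \Nc_{\cal F}(\bar u)$ — this is immediate from the definition of $\Nc_{\cal F}$, since $J(v)\geq J(\bar u)$ forces $\limsup_{\cal F\ni v\to \bar u}\inp{DJ(\bar u),v-\bar u}/\abs{v-\bar u}\geq 0$. Identifying $DJ(\bar u)$ with the functional $v\mapsto \int f''(\bar u)(\bar u - \diff P/Q)\,v\,\d Q$, the condition $-DJ(\bar u)\in\Nc_{\cal F}(\bar u)$ reads $f''(\bar u)\rbr*{\diff P/Q - \bar u}\,\d Q \in \Nc_{\cal F}(\bar u)$. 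Since $f''(\bar u) > 0$ pointwise, this element has the same ``sign pattern'' as $\diff P/Q - \bar u$, and after absorbing the positive density/weight into the cone (a cone is invariant under multiplication by positive functions, appropriately interpreted through the pairing with $\d Q$) we obtain $\diff P/Q - \bar u \in \Nc_{\cal F}(\bar u)$, i.e.\ $\bar u \in \diff P/Q - \Nc_{\cal F}(\bar u)$.

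For the second claim, when $\cal F$ is convex the Fréchet normal cone is the ordinary normal cone, and if $\diff P/Q \in \intr\cal F$ then the only point $\bar u$ whose normal cone contains $\diff P/Q - \bar u$ with $\diff P/Q$ interior is $\bar u = \diff P/Q$: for an interior point the normal cone is $\{0\}$, so $\bar u = \diff P/Q$ must already hold once we check $\diff P/Q\in\cal F$; alternatively, strict convexity of $J$ on the segment (again from $f''>0$) gives uniqueness of the minimiser, and $\diff P/Q$ is a global minimiser of the unconstrained problem, so it is the constrained one too.

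\textbf{Main obstacle.} The delicate step is the rigorous differentiation of $J$ under the integral sign and the identification of $DJ(\bar u)$ as an element of $\scr F^*$ — one must ensure that $f''(\bar u)(\bar u - \diff P/Q)\,\d Q$ genuinely defines a bounded linear functional on $\scr F$ and that the difference quotients in $J$ converge appropriately; this is where the hypotheses ``$\scr F$ normed'', ``$f$ twice differentiable'', ``$J$ finite on a neighbourhood of $\bar u$'', and ``range in $\intr(\dom f)$'' all get used. The subsequent passage from ``$f''(\bar u)(\diff P/Q - \bar u)\,\d Q \in \Nc_{\cal F}$'' to ``$\diff P/Q - \bar u\in\Nc_{\cal F}$'' also needs a careful statement of how the measure $\d Q$ and the positive factor $f''(\bar u)$ interact with the dual pairing, but this is routine once the functional-analytic setup is fixed.
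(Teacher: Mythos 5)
Your handling of the main inclusion coincides with the paper's proof of \autoref{thm:inexact_sol}: you compute the same Fr\'echet derivative
$J'(\bar u) = (f''\circ\bar u)\,\bigl(\bar u - \diff{P}{Q}\bigr)\,\d Q$,
apply Fermat's rule for the Fr\'echet normal cone (the paper cites \citet[Theorem 2.97]{penot2012calculus}; you re-derive it from the definition of $\Nc_{\cal F}$, which is fine and arguably more self-contained), and then use the cone property to absorb the strictly positive factor $(f''\circ\bar u)\,\d Q$, reaching $\bar u \in \diff{P}{Q} - \Nc_{\cal F}(\bar u)$. That part is correct and is essentially the paper's argument.

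For the second claim there is a genuine gap, and both of your suggested justifications fail. The first --- ``for an interior point the normal cone is $\{0\}$, so $\bar u = \diff P/Q$ must already hold'' --- uses the wrong base point: part one gives $\diff{P}{Q} - \bar u \in \Nc_{\cal F}(\bar u)$, and the fact that $\Nc_{\cal F}\bigl(\diff{P}{Q}\bigr)=\{0\}$ when $\diff P/Q$ is interior tells you nothing about $\Nc_{\cal F}(\bar u)$ unless you have already identified $\bar u$ with $\diff P/Q$, which is what is to be shown. The alternative --- ``strict convexity of $J$ \ldots gives uniqueness of the minimiser'' --- is simply false: as the paper flags explicitly in \autoref{sec:preliminaries}, the reparameterised $J$ need not be convex (here $J''(u)[h,h]=\int\bigl[f''(u)+f'''(u)\,(u-\diff{P}{Q})\bigr]h^2\,\d Q$, and the second term can make this negative). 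The fact you actually want is that $\diff P/Q$ is the \emph{unique global} minimiser of $J$ over all admissible measurable functions; this follows from uniqueness of the supremiser in the variational representation \eqref{eq:var_rep}, which rests on strict convexity of $f$, not of $J$. Once you have that, $\diff P/Q \in \cal F$ forces $\argmin_{\cal F}J=\{\diff P/Q\}$ and hence $\bar u=\diff P/Q$. The paper's own route for this part is different again: it invokes Hahn--Banach strong separation to show that the (ordinary) normal cone of a convex set at an interior point reduces to $\{0\}$ and combines this with the inclusion from part one. Either repair works, but as written your proposal does not close the second claim.
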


\begin{proof}
    Because $f$ is twice differentiable on $\intr(\dom f)$, and $J$ is finite on a neighbourhood of $\bar u$, $J$ is Fréchet differentiable at $\bar u$ with
    \begin{align}
        J'(\bar u) &=  ((f^*)'\circ f'\circ \bar u) · (f''\circ \bar u) · \d Q - (f''\circ \bar u) ·\d P.
        \\&=  \bar u ·(f'' ° \bar u)·\d Q - (f''°\bar u)·\d P,
    \end{align}
    where $(f^*)' = (f')^{-1}$ since $f$ is strictly convex.
    By hypothesis $J$ attains its minimum on $\cal F$ at $\bar u$, thus Fermat's rule \citep[Theorem 2.97, p.~170]{penot2012calculus} yields
    \begin{align}
        % \MoveEqLeft[0]
        0 \in J'(\bar u) + \Nc_{\cal F}(\bar u)
          &\iff 0 \in \bar u ·(f''\circ \bar u) ·\d Q  - (f''\circ \bar u)·\d P +  \Nc_{\cal F}(\bar u)
          \\&\iff 0 \in \bar u  - \frac{\d P}{\d Q} +  \frac1{(f''°\bar u)·\d Q}·\Nc_{\cal F}(\bar u)
          \\&\iff  \bar u \in \frac{\d P}{\d Q} -    \Nc_{\cal F}(\bar u),
     \end{align}
     where the final biconditional follows since $\Nc_{\cal F}(\bar
     u)$ is a cone.

    Now, suppose $\diff{P}/{Q} \in \intr\cal F$ with $\cal F$ convex. Then the Fr\'echet cone becomes usual normal cone \citep[Ex.~6, p.~174]{penot2012calculus},
    \begin{gather}
        \Nc_{\cal F}(\bar u) \defas \cbr{ u^* \in\scr F^* : \forall{v \in \cal F} \inp{u^*, v-\bar u}\leq 0}.
    \end{gather}
    It's immediate from the definition that $\Nc_{\cal F}$ always contains $0$. We use a contradiction to show that $\Nc_{\cal F}(\bar u)\subseteq\cbr{0}$. Take $z^*≠0 \in\Nc_{\cal F}(\bar u)$. Let $\cal F_{\bar u}\defas \cal F -\bar u$. First note that $\bar u \in \intr \cal F$ implies $0\in\intr \cal F_{\bar u}$. Thus there is a closed symmetric neighbourhood $U$ with $0\in U\subseteq\intr \cal F_{\bar u}$. The Hahn--Banach strong separation theorem \citep[Theorem~1.79, p.~55]{penot2012calculus} guarantees the existence of a vector $u\in U$ such that 
    \begin{gather}
        \inp{z^*,u} > 0 \iff \exists{v\in\cal F} \inp{z^*,v - \bar u} > 0, 
    \end{gather}
    contradicting the assumption $z^*\in\Nc_{\cal F}$. Thus $\Nc_{\cal F}(\diff P/Q) =\cbr{0}$.
\end{proof}

The set $\Nc_{\cal F}(\bar u)$ can be thought of as containing a direction $v$ that pulls $\diff{P}/{Q}$ to the constrained minimiser $\bar u$. This is illustrated in \autoref{fig:inexact_sol}. 

\begin{figure}
    \begin{center}
        \tikzsetnextfilename{inexact_solution_cone}
        \begin{tikzpicture}[every node/.style={font=\small}]
            \clip(-1.5,-1.25) rectangle (4.5,2);
            \def\a{30}
            \def\b{\a -30}
            \draw[thick, name path = f, Blue, fill=LightBlue] (-4,2) to [out=0, in=\a+90, looseness=1] (0,0) coordinate (u) to [out=\b+270, in=90] (0,-4) to (-4,-4) --cycle;
            \path[name path = lower] (-2,4) to (-2,-2) to (4,-2);
            \draw[dashed, Green, fill=LightGreen] (\a:6) to (u) to (\b:6);
            \draw (10:3) coordinate (dpdq); 
            \draw (dpdq) node[draw, circle, fill=black, inner sep=1pt] {} node[right] {$\diff P Q$};
            \draw (u)    node[draw, circle, fill=black, inner sep=1pt] {} node[left]  {$\bar u$};
            \draw (2.5,-0.5)  node [Green, inner sep = 0pt] (nc) {$\Nc_{\cal F}(\bar u)+ \bar u$};
            \draw (-0.25,1.5) node [Blue,  inner sep = 0pt] (nf) {$\cal F$};
            \draw[-latex, Green] (nc.east) to[bend right] (10:4.4);
            \draw[-latex, Blue]  (nf.west) to[bend right] (-1.2,0);
            \draw[-latex, shorten >=1.5pt] (dpdq) -- (u) node[pos = 0.5, above] {$-v$};
        \end{tikzpicture}
    \end{center}
    \caption{Illustration of \autoref{thm:inexact_sol} wherein there exists $v\in\Nc_{\cal F}(\bar u)$ which pulls the unconstrained minimiser, $\diff P/Q$, onto the constrained minimiser, $\bar u$.\label{fig:inexact_sol}}
\end{figure}
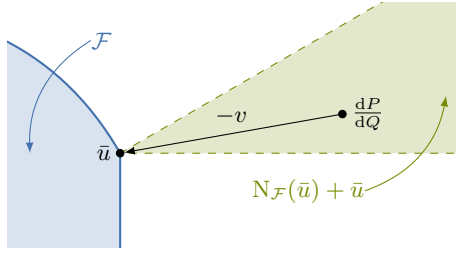

\autoref{thm:inexact_sol} also gives us give a more explicit characterisation of the error term in \autoref{sec:boosted_density_estimation} since 
\begin{gather}
    \exists{v_t \in \Nc_{\cal F}(d_t)} d_t = \diff P{Q_t} ·\epsilon_t  = \diff P{Q_t} - v_t,
    \iff
    \epsilon_t = 1 - 
    \diff{Q_{t-1}}P·v_t.\label{opt_epsilon}
\end{gather}

\section{Boosting with estimates}\label{sec:boosting_with_estimates}

In practice, we do not have access to $P$ and we rather sample from
$Q$. We thus assume the possibility to sample\footnote{We could also
  assume the availability of training samples, in particular for $P$
  as is usually carried out.} $P$ and $Q_.$ to compute all needed
estimates of $\mup$ and $\muq{.}$. 
So let us assume that the weak learner has access to a sampler of $P$
and a sampler of $Q_.$, ``\sampler''. \sampler~takes as input a
distribution and a natural $m$; it samples from the distribution and
returns an i.i.d. sample of size $m$. It does so separately for $P$
and $Q$, with separate sizes $m_P$ and $m_Q$ for the
respective samples. The full \autoref{alg:density_estpractEST} is very
similar to \autoref{alg:density_estpract} if we except the fact that
the weak learner also returns an estimate for $\muq{t-1}$. To analyze
\autoref{alg:density_estpractEST} requires however more than just the
boosting material developed so far, since nothing guarantees that
estimates of $\mup$ and $\muq{.}$ meeting \ref{wla} would imply
$\mup$ and $\muq{.}$ meeting \ref{wla} as well. We therefore
replace \ref{wla} by one which relies on \textit{estimates}
computed over large enough samples. We call it the \textit{Empirical}
Weak Learner Assumption (EWLA). It involves two additional parameters, $\kappasup =
\mucsup(1-\mucsup)/2$ ($>0$), where $\mucsup$,
which depends only on $\csup$, is
defined in \eqref{defMUCSUPMAIN}, and some $0<\delta\leq 1$.
\begin{assumption}[Empirical Weak Learner Assumption]
    There exists $\gammap,
    \gammaq \in (0,1]$ such that the following holds: at each iteration
    $t = 1, 2, ..., T$, \wl~estimates $\hatmup$ and $\hatmuq{t-1}$ respectively from
    \begin{gather}
        \sampler(P, m_P)\and\sampler(Q_{t-1}, m_Q)
    \end{gather}
    with $m_P, m_Q$ satisfying
    \begin{gather}
     m_P \geq \frac{1}{(\kappasup \gammap)^2} \log \frac{4T}{\delta} \and 
     m_Q \geq \frac{1}{(\kappasup \gammaq)^2} \log \frac{4T}{\delta},
     \tag{$\mathrm{EWLA}_{\delta,T}$}\qquad\qquad\label{ewla}
    \end{gather}\noeqref{ewla}
    and returns, along with $\hatmuq{t-1}$, $c_t$ satisfying
    $\hatmup\geq \gammap$ and $\hatmuq{t-1} \geq \gammaq$.
\end{assumption}
The weak learner thus also take as input $\delta$ and $T$, as
displayed in \autoref{alg:density_estpractEST}. We emphasize the fact that 
\ref{ewla} is just assuming the ability for \wl~to have i.i.d. samples
from $P$ and $Q$ and get a classifier $c_t$ that \textit{empirically}
satisfies \ref{wla}. Since we still focus on the decrease of
$\kl(P,Q_{.})$, one might expect this to weaken our results, which is
indeed the case, \textit{but} we can show that \textit{only} constants are
\textit{slightly} affected, thereby not changing significantly
convergence rates. We provide in one theorem the reframing of both \autoref{thBoostALPHA0} and \autoref{thBoostALPHA2}. In the same way as we did for \autoref{thBoostALPHA2}, whenever we are in the clamped regime for
$\alpha_t$, we let $\hat{\delta}_{t-1}\geq 0$ be defined from
$\hatmuq{t-1} = (1+\hat{\delta}_{t-1}) \mucsup$.
\begin{restatement}{theorem}{thBoostALPHA0PLUS}
    Suppose \ref{ewla} holds. Then with probability of at least $1 -
    \delta$, 
    \begin{align}
        \forall{t = 1, 2, ..., T}\kl(P,Q_{t})  \leq \kl(P,Q_{t-1})  - \Delta_t,
    \end{align}
    where 
    \begin{gather}
        \Delta_t \defas \begin{cases}
            \frac{\hatmup}{16} \log \rbr{\frac{1+\hatmuq{t-1}}{1-\hatmuq{t-1}}} & \text{in the non-clamped regime,}\\
            \frac{\hatmup \csup}{2} + \mucsup^2 \cdot \rbr{\frac{1}{4} + \frac{\hat{\delta}_{t-1} }{1-\mucsup^2}} & \text{otherwise.}
        \end{cases}
    \end{gather}
\end{restatement}

\begin{figure}
    \begin{minipage}[t]{0.45\textwidth}
        \vspace{0pt}
        \begin{algorithm}[H]
            \caption{AdaBoDE}
            \label{alg:density_estpract}
            \begin{algorithmic}
                \STATE {\bfseries Input:} distributions $P, Q_0$, natural $T$;
                % \REPEAT
                \FOR{$t=1$ {\bfseries to} $T$}
                    \STATE $c_t \leftarrow \wl(P, Q_{t-1})$;
                    \STATE Pick $\muq{t-1}$ as in \eqref{eq:mu_defns} and $\alpha_t$ as
                    % \begin{gather}
                    \STATE $\alpha_t \gets \min\cbr{1, \frac{1}{2\csup} \log\rbr{\frac{1+\muq{t-1}}{1-\muq{t-1}}}}$;
                        % \label{defALPHAT}
                    % \end{gather}
                    \STATE Pick $Q_{t}$ as in \eqref{eq:multiplicative_density} with $d_t \defas \exp
                    \circ c_t$;
                \ENDFOR
                \STATE {\bfseries Return:} $Q_T$
                % \UNTIL{$noChange$ is $true$}
             \end{algorithmic}
         \end{algorithm}  
    \end{minipage}\hfill%
    \begin{minipage}[t]{0.53\textwidth}
        \vspace{0pt}
        \begin{algorithm}[H]
            \caption{\adabodeEST}
            \label{alg:density_estpractEST}
            \begin{algorithmic}
                \STATE {\bfseries Input:} distributions $P, Q_0$, $T \in \mathbb{N}_*$,
                $0<\delta\leq 1$;
                % \REPEAT
                \FOR{$t=1$ {\bfseries to} $T$}
                    \STATE $(c_t, \hatmuq{t-1}) \leftarrow \wl(P, Q_{t-1},
                    \delta, T)$;
                    \STATE $\alpha_t \gets \min\cbr{1, \frac{1}{2\csup} \log\rbr{\frac{1+\hatmuq{t-1}}{1-\hatmuq{t-1}}}}$;
                    \STATE Pick $Q_{t}$ as in \eqref{eq:multiplicative_density} with $d_t \defas \exp(c_t)$;
                \ENDFOR
                \STATE {\bfseries Return:} $Q_T$
                % \UNTIL{$noChange$ is $true$}
             \end{algorithmic}
         \end{algorithm}  
    \end{minipage}
\end{figure}

\section{Proofs of formal results}\label{sec:allproofs}

\begin{proposition}\label{prop:recursive_normalisation}
    The normalisation factors can be written recursively with $Z_t = Z_{t-1}·\E_{Q_{t-1}}d_t^{\alpha_t}$.
\end{proposition}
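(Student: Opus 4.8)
The plan is to simply unwind the definitions in \eqref{eq:rec_update}. First I would start from the recursive update of the unnormalised measure, $\d\tilde Q_t = d_t^{\alpha_t}\cdot\d\tilde Q_{t-1}$, and integrate both sides over $\cal X$, which by the definition $Z_t\defas\int\d\tilde Q_t$ gives $Z_t = \int d_t^{\alpha_t}\,\d\tilde Q_{t-1}$.

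Next I would substitute the relation $\tilde Q_{t-1} = Z_{t-1}\cdot Q_{t-1}$, which is just a rearrangement of $Q_{t-1} = \frac1{Z_{t-1}}\tilde Q_{t-1}$. Since $Z_{t-1}$ is a (finite, positive) scalar constant --- finiteness being implicit in the well-definedness of \eqref{eq:multiplicative_density}, i.e.\ $\int\prod_{i=1}^{t-1} d_i^{\alpha_i}\,\d Q_0<\infty$ --- it pulls out of the integral, yielding $Z_t = Z_{t-1}\int d_t^{\alpha_t}\,\d Q_{t-1}$. Recognising the remaining integral as $\E_{Q_{t-1}} d_t^{\alpha_t}$ (valid because $Q_{t-1}$ is a probability measure and $d_t^{\alpha_t}\geq 0$ is measurable) closes the argument.

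There is no real obstacle here: the statement is essentially a bookkeeping identity, and the only point worth a word is noting that $Z_{t-1}<\infty$ so that the factorisation $\tilde Q_{t-1}=Z_{t-1}Q_{t-1}$ and the interchange of constant and integral are legitimate; this is part of the standing assumptions on the construction \eqref{eq:multiplicative_density}.
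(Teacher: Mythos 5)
Your argument is correct and follows essentially the same route as the paper's proof: both unwind the definition $Z_t = \int \d\tilde Q_t$, substitute the recursive update $\d\tilde Q_t = d_t^{\alpha_t}\,\d\tilde Q_{t-1}$, and factor out $Z_{t-1}$ to rewrite the integral as $\E_{Q_{t-1}} d_t^{\alpha_t}$. Your added remark on the finiteness of $Z_{t-1}$ is a reasonable bit of diligence but does not change the substance.
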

\begin{proof}
    We just need to write
    \begin{gather}
        \mathclap{\frac{Z_{t}}{Z_{t-1}}
        = \frac1{Z_{t-1}}\int \d\tilde Q_t
        = \frac1{Z_{t-1}}\int d_t^{\alpha_t} \d\tilde Q_{t-1}
        =\int d_t^{\alpha_t} \d Q_{t-1} = \E_{Q_{t-1}}d_t^{\alpha_t}}\label{eq:ztzt-1_01}
        % \iff 
    \end{gather}
    thus  $Z_t = Z_{t-1}·\E_{Q_{t-1}}d_t^{\alpha_t}$.
\end{proof}

\begin{proposition}\label{prop:qt_exponential_family}
    Let $Q_t$ be defined via \eqref{eq:multiplicative_density} with a sequence of binary classifiers $c_1,\dots,c_t\in\disc $. Then $Q_t$ is an exponential family distribution with natural parameter $\alpha\defas(\alpha_1,\dots,\alpha_t)$ and sufficient statistic $c(x)\defas(c_1(x),\dots,c_t(x))$. 
\end{proposition}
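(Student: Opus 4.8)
The plan is to read off the canonical form of an exponential family directly from \eqref{eq:multiplicative_density}. First I would use the isomorphism $\disc\to\radon$ of \autoref{fig:iosmorphic}, namely $d_i = \exp\circ c_i$, to rewrite the product as $\prod_{i=1}^t d_i^{\alpha_i}(x) = \exp\bigl(\sum_{i=1}^t \alpha_i c_i(x)\bigr) = \exp\inp{\alpha, c(x)}$, with $\alpha=(\alpha_1,\dots,\alpha_t)$ and $c(x)=(c_1(x),\dots,c_t(x))$. Substituting into \eqref{eq:multiplicative_density} gives
\begin{gather*}
    \d Q_t = \frac{1}{Z_t}\,\exp\inp{\alpha, c(x)}\,\d Q_0,\qquad Z_t = \int \exp\inp{\alpha, c(x)}\,\d Q_0,
\end{gather*}
so, writing $A\defas\log Z_t$, we obtain $\d Q_t = \exp\bigl(\inp{\alpha, c(x)} - A(\alpha)\bigr)\,\d Q_0$. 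This is exactly the density of an exponential family with base measure $Q_0$, natural parameter $\alpha$, sufficient statistic $c$, and log-partition function $A$; note $Z_t$ is the same normaliser for which \autoref{prop:recursive_normalisation} gives a recursion.

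Second, I would verify the object is well defined: the display above is a genuine member of the family only for $\alpha$ lying in the natural parameter space $\Theta\defas\{\beta\in\R^t : \int \exp\inp{\beta,c}\,\d Q_0<\infty\}$, so one must check $\alpha\in\Theta$, i.e.\ $Z_t<\infty$. Since $\alpha\in[0,1]^t$ and, under the standing boundedness assumption $\csup<\infty$, one has $\abs{\inp{\beta,c(x)}}\leq\bigl(\sum_i\abs{\beta_i}\bigr)\csup$ uniformly in $x$, the space $\Theta$ is all of $\R^t$; in particular it is open, which is exactly the claim in the text that, absent further constraints on $\alpha$, the family is \emph{regular}. (Even without a uniform bound, $\Theta$ is convex, and restricting to $\intr\Theta$ recovers regularity.)

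Finally I would record the minimality caveat alluded to after the statement, which is the only real subtlety. Nothing in the hypotheses forces $1,c_1,\dots,c_t$ to be linearly independent as elements of $L^1(Q_0)$ --- some $c_i$ could be $Q_0$-a.e.\ constant, or a $Q_0$-a.e.\ affine combination of the others --- so the representation need not be minimal; whenever a dependence $\sum_i\lambda_i c_i=\lambda_0$ holds $Q_0$-almost everywhere one deletes a coordinate and folds the constant into $A$, obtaining a minimal sufficient statistic of strictly smaller dimension without changing the exponential-family conclusion. Otherwise there is no hard step: the argument is a reparameterisation followed by pattern-matching against the definition of an exponential family, and the closest thing to an obstacle is merely invoking the correct hypothesis --- boundedness of the $c_i$, hence of $\inp{\alpha,c}$, or at least $Q_0$-integrability of $\exp\inp{\alpha,c}$ --- to guarantee $Z_t<\infty$ and to license the word ``regular''.
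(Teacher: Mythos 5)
Your proof follows the same route as the paper's: convert each $c_i$ to a density ratio via $d_i=\exp\circ c_i$, collapse the product into $\exp\inp{\alpha,c(x)}$, and read off the canonical exponential-family form with base measure $Q_0$, natural parameter $\alpha$, sufficient statistic $c$, and cumulant $\log Z_t$, noting the representation need not be minimal. The paper's proof stops there; you additionally verify $Z_t<\infty$ and derive regularity from the standing bound $\csup<\infty$, points the paper only asserts in the surrounding text with a citation --- a correct and slightly more careful rendering of the same argument.
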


\begin{proof}
    We can convert the binary classifiers $c_1,\dots,c_t\in\disc$ to a sequence of density ratios $(d_i)$ using the connections in \autoref{sec:preliminaries}, which yields
    \begin{gather}
        d_i^{\alpha_i} \defas (\phi\circ\sigma\circ c_i)^{\alpha_i} = \exp\mathinner\circ (\alpha_i c_i).
    \end{gather}
    In this setting, the multiplicative density at round $t$ is
    \begin{align}
        \d Q_t(x)
        &\overset{\eqref{eq:multiplicative_density}}{=} \frac1{\int \prod_{i=1}^t d_i^{\alpha_i} \d Q_0 } \prod_{i=1}^t d_i^{\alpha_i}\d Q_i(x)
        \\&= \exp\rbr{\sum_{i=1}^t\alpha_i c_i(x) - C(\alpha)}\d Q_0(x),\label{eq:exp_mult_dens}
    \end{align}
with $\alpha\defas(\alpha_1,\dots,\alpha_t)$ and 
$C(\alpha ) = \log \int \exp\rbr*{\textstyle\sum_{i=1}^t\alpha_i c_i
}\d Q_0$, which is an exponential family distribution with natural parameter
    $\alpha$, sufficient statistic
    $c(x)\defas(c_1(x),\dots,c_t(x))$, cumulant function $C(\alpha)$, reference measure $Q_0$. We
    note that in the general case, it may be the case that for some non-all-zero
    constants $z_0, z_1, \dots, z_t \in \R, $ we have $z_0 = \sum_{i=1}^t
    z_i c_i(x)$, that is, the representation is not minimal. 
\end{proof}

\begin{lemma}\label{lem:zt_bound_by_kl}
    For any $\alpha_t\in[0,1]$ and $\epsilon_t \in [0,+\infty)^{\cal
      X}$ we have:
    \begin{align}
        % \MoveEqLeft[2]
        \exp\mathopen{}\rbr\Big{ \alpha_t \rbr\big{\E_{Q_{t-1}}\log \epsilon_t-\rkl(P,Q_{t-1})}}\leq  \frac{Z_t}{Z_{t-1}} \leq \rbr{\E_P\epsilon_t}^{\alpha_t}\label{eq:zt_bound_by_kl}
        % \\
        % &\iff\rbr{\E_P\epsilon_t}^{-\alpha_t}\leq  \frac{Z_{t-1}}{Z_t} \leq \exp\sbr\Big{\alpha_t\kl(Q_{t-1},P) -\alpha_t \E_{Q_{t-1}}\log \epsilon_t}\\
        % &\iff-\alpha_t\log\E_P\epsilon_t\leq  \log\rbr{\frac{Z_{t-1}}{Z_t}} \leq \alpha_t\kl(Q_{t-1},P) -\alpha_t \E_{Q_{t-1}}\log \epsilon_t
        .
    \end{align}
\end{lemma}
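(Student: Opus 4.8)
The plan is to get everything from \autoref{prop:recursive_normalisation}, which already gives $Z_t/Z_{t-1} = \E_{Q_{t-1}} d_t^{\alpha_t}$, together with the error-term decomposition $d_t = \diff P{Q_{t-1}} \cdot \epsilon_t$ used throughout \autoref{sec:boosted_density_estimation}. Substituting the latter into the former, the quantity to be sandwiched becomes $\E_{Q_{t-1}}\rbr*{\diff P{Q_{t-1}} \cdot \epsilon_t}^{\alpha_t}$, and each of the two inequalities is then a single application of Jensen's inequality, applied once to a concave function and once to a convex one.

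For the upper bound: since $\alpha_t \in [0,1]$, the map $s \mapsto s^{\alpha_t}$ is concave on $[0,\infty)$, so Jensen gives $\E_{Q_{t-1}}\rbr*{\diff P{Q_{t-1}} \epsilon_t}^{\alpha_t} \leq \rbr*{\E_{Q_{t-1}}\sbr{\diff P{Q_{t-1}} \epsilon_t}}^{\alpha_t}$, and a change of measure rewrites the inner expectation as $\int \epsilon_t \, \d P = \E_P \epsilon_t$. For the lower bound: write $s^{\alpha_t} = \exp(\alpha_t \log s)$ and apply Jensen to the convex exponential, so that $\E_{Q_{t-1}}\exp\rbr*{\alpha_t \log\rbr*{\diff P{Q_{t-1}}\epsilon_t}} \geq \exp\rbr*{\alpha_t \, \E_{Q_{t-1}}\log\rbr*{\diff P{Q_{t-1}}\epsilon_t}}$; splitting the logarithm and using $\E_{Q_{t-1}}\log \diff P{Q_{t-1}} = -\E_{Q_{t-1}}\log\diff{Q_{t-1}}P = -\rkl(P,Q_{t-1})$ (the reverse-KL entry of \autoref{tab:common_divergences}) produces exactly $\exp\rbr*{\alpha_t\rbr*{\E_{Q_{t-1}}\log\epsilon_t - \rkl(P,Q_{t-1})}}$.

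There is no genuinely hard step here; the only thing requiring care is the bookkeeping for degenerate values. If $\epsilon_t$ vanishes on a set of positive measure, or more generally $\E_{Q_{t-1}}\log\epsilon_t = -\infty$, the left-hand side of the lower bound is read as $0$ and the claim is trivial; if $\E_P\epsilon_t = +\infty$ the upper bound is vacuous; and at $\alpha_t \in \{0,1\}$ both Jensen steps are equalities. The one point I would check carefully is that $\E_{Q_{t-1}}\log\diff P{Q_{t-1}}$ and $\E_{Q_{t-1}}\log\epsilon_t$ are individually well defined in $[-\infty,\infty)$ so that additivity of the logarithm inside the expectation is legitimate — this is where the boundedness of $c_t = \log d_t$, together with finiteness of $\rkl(P,Q_{t-1})$ (needed for the lower bound to say anything), does the work.
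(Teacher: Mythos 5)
Your proof is correct and uses essentially the same argument as the paper: both start from $Z_t/Z_{t-1}=\E_{Q_{t-1}}d_t^{\alpha_t}$, obtain the upper bound by Jensen on the concave power $s\mapsto s^{\alpha_t}$ plus a change of measure, and obtain the lower bound by Jensen on the $\log$/$\exp$ pair (you phrase it as convexity of $\exp$, the paper as concavity of $\log$; these are the same step). The extra remarks on degenerate values are additional bookkeeping the paper omits but do not change the substance.
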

\begin{proof}
    Since $\alpha_t\in[0,1]$, by Jensen's inequality it follows that 
    \begin{gather}
        \E_{Q_{t-1}}d_t^{\alpha_t}
        \leq \rbr{\E_{Q_{t-1}}d_t}^{\alpha_t}
        =\rbr{\int \diff P{Q_{t-1}}·\epsilon_t\d Q_{t-1}}^{\alpha_t}
        % = \rbr{\int \epsilon_t\d P}^{\alpha_t}\\
        = \rbr{\E_P\epsilon_t}^{\alpha_t}.\label{eq:ztzt-1_02}
    \end{gather}
    The upper bound on $Z_t/Z_{t-1}$ follows:
    \begin{gather}
        \frac{Z_{t}}{Z_{t-1}} \overset{\eqref{eq:ztzt-1_01}}{=}\E_{Q_{t-1}} d_t^{\alpha_t} \overset{\eqref{eq:ztzt-1_02}}{≤}  \rbr{\E_P\epsilon_t}^{\alpha_t}.\label{eq:ztzt-1_upper_01}
    \end{gather}

    For the lower bound on ${Z_{t}}/{Z_{t-1}}$, note that 
    \begin{align}
        \log\rbr{\frac{Z_{t}}{Z_{t-1}}} 
       & \overset{\eqref{eq:ztzt-1_01}}{=} \log\E_{Q_{t-1}}d_{t}^\alpha\\
        &\geq \alpha_t\E_{Q_{t-1}}\log d_t\\
        % &= \alpha_t\E_{Q_{t-1}}[\log d_t]\\
        &= \alpha_t\E_{Q_{t-1}}\sbr{\log \epsilon_t+\log\rbr{\frac{\d P}{\d Q_{t-1}}} },
    \end{align}
    which implies the lemma.
\end{proof}

The error term allows us to bound the KL divergence of $P$ from $Q_{t}$ as follows.
\begin{theorem}\label{thm:kl_bound}
    For any $\alpha_t\in[0,1]$, letting $Q_t,Q_{t-1}$ as in \eqref{eq:rec_update},  we have:
    \begin{gather}
        \begin{aligned}
            \forall{d_t \in \radon}
            \kl(P,Q_{t}|_{\alpha_t}) 
            &\leq (1-\alpha_t)\kl(P,Q_{t-1})  + \alpha_t\rbr{\log\E_P\epsilon_t - \E_P\log\epsilon_t}.
        \end{aligned}
        \label{eq:kl_bound2}
    \end{gather}
    where $d_t = \diff P/{Q_{t-1}}·\epsilon_t$.
\end{theorem}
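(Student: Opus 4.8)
The plan is to compute $\kl(P,Q_t|_{\alpha_t})$ exactly up to the single term $\log(Z_t/Z_{t-1})$, and then control that term with the upper bound already furnished by \autoref{lem:zt_bound_by_kl}. First I would use the recursive update \eqref{eq:rec_update} together with \autoref{prop:recursive_normalisation} to write the density ratio of $P$ against $Q_t|_{\alpha_t}$ in terms of $Q_{t-1}$:
\begin{gather}
    \diff P{Q_t|_{\alpha_t}} = \frac{Z_t}{Z_{t-1}}\cdot d_t^{-\alpha_t}\cdot\diff P{Q_{t-1}},
\end{gather}
which is valid because $Q_{t-1}$ dominates $P$ (the divergences in the statement are finite) and $d_t\in\radon$ is strictly positive. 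Taking $\log$ and integrating against $P$ gives
\begin{gather}
    \kl(P,Q_t|_{\alpha_t}) = \log\frac{Z_t}{Z_{t-1}} - \alpha_t\,\E_P\log d_t + \kl(P,Q_{t-1}).
\end{gather}

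Next I would substitute $d_t = (\diff P/{Q_{t-1}})\cdot\epsilon_t$, so that $\log d_t = \log(\diff P/{Q_{t-1}}) + \log\epsilon_t$ and hence $\E_P\log d_t = \kl(P,Q_{t-1}) + \E_P\log\epsilon_t$. Plugging this in and collecting the two copies of $\kl(P,Q_{t-1})$ yields the exact identity
\begin{gather}
    \kl(P,Q_t|_{\alpha_t}) = \log\frac{Z_t}{Z_{t-1}} + (1-\alpha_t)\kl(P,Q_{t-1}) - \alpha_t\,\E_P\log\epsilon_t.
\end{gather}
Finally I would invoke the upper bound $Z_t/Z_{t-1}\leq(\E_P\epsilon_t)^{\alpha_t}$ from \autoref{lem:zt_bound_by_kl} (which is itself just Jensen's inequality applied to the concave map $x\mapsto x^{\alpha_t}$ since $\alpha_t\in[0,1]$, combined with the change of measure $\E_{Q_{t-1}}d_t=\E_P\epsilon_t$), so that $\log(Z_t/Z_{t-1})\leq\alpha_t\log\E_P\epsilon_t$, and the claimed bound \eqref{eq:kl_bound2} follows immediately.

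I do not expect a genuine obstacle here: the argument is essentially a bookkeeping exercise once \autoref{lem:zt_bound_by_kl} is in hand. The only points requiring a little care are (i) justifying the density-ratio manipulation, i.e.\ that all relevant measures are mutually absolutely continuous and the integrals are well defined (which follows from finiteness of the KL terms and positivity of $d_t$), and (ii) keeping track of the cancellation of the two $\kl(P,Q_{t-1})$ contributions so that the coefficient on the leftover term comes out to exactly $1-\alpha_t$. No constraint on $d_t$ beyond membership in $\radon$ is needed, which is why the bound holds for arbitrary updates.
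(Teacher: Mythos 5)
Your proof is correct and follows essentially the same route as the paper: decompose $\kl(P,Q_t|_{\alpha_t})$ exactly via the recursive update into $\log(Z_t/Z_{t-1}) + (1-\alpha_t)\kl(P,Q_{t-1}) - \alpha_t\E_P\log\epsilon_t$, then control $\log(Z_t/Z_{t-1})$ with the upper bound from \autoref{lem:zt_bound_by_kl}. The only cosmetic difference is that you carry $Z_t/Z_{t-1}$ while the paper writes the reciprocal $Z_{t-1}/Z_t$ with the matching sign; the bookkeeping is otherwise identical.
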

\begin{proof}
    First note that
    \begin{gather}
        \d Q_t = \frac{1}{Z_t}\d \tilde Q_t =\frac{1}{Z_t}d_t^{\alpha_t} \d \tilde Q_{t-1} = \frac{Z_{t-1}}{Z_t}d_t^{\alpha_t} \d Q_{t-1}.\label{eq:prv_dens}
    \end{gather}
    Now consider the following two identities:
    \begin{gather}
        -\alpha_t\log\E_P\epsilon_t\leq\log\rbr{\frac{Z_{t-1}}{Z_t}},\label{eq:zt_bnd}
    \end{gather}
    which follows from \autoref{lem:zt_bound_by_kl}, and
    \begin{align}
        \MoveEqLeft[4]\int\rbr{\log\rbr{\diff P{Q_{t-1}}} -\alpha_t\log d_t}\d P\label{eq:kl_eq}\\
        &=
        \int\rbr{\log\rbr{\diff P{Q_{t-1}}}  -\alpha_t\log\rbr{\diff P{Q_{t-1}}} - \alpha_t\log\epsilon_t}\d P\\
        &=
        (1-\alpha_t)\int\log\rbr{\diff P{Q_{t-1}}}\d P  - \alpha_t\int\log\epsilon_t\d P\\
        &=
        (1-\alpha_t)\kl(P,Q_{t-1})  - \alpha_t\E_P\log\epsilon_t.
    \end{align}
    % \begin{align}
    %     \kl(P,Q_{t})&\leq (1-\alpha_t)\kl(P,Q_{t-1}) + \alpha_t\rbr{\log\E_P\epsilon_t - \E_P\log\epsilon_t}.
    % \end{align}
    Then
    \begin{align}
        \kl(P,Q_t)  &= \int \log\rbr{\diff P{Q_{t}}}\d P\\
        &\overset{\eqref{eq:prv_dens}}{=} \int\rbr{\log\rbr{\diff P{Q_{t-1}}} - \log\rbr{\frac{Z_{t-1}}{Z_t}d_t^{\alpha_t}}}\d P\\
        &= \underbrace{\int\rbr{\log\rbr{\diff P{Q_{t-1}}}  -\alpha_t\log d_t}\d P}_{\eqref{eq:kl_eq}}-\underbrace{\log\rbr{\frac{Z_{t-1}}{Z_t}}}_{\eqref{eq:zt_bnd}}\\
        &\leq
        (1-\alpha_t)\kl(P,Q_{t-1})  +\alpha_t\rbr{\log\E_P\epsilon_t -\E_P\log\epsilon_t},
    \end{align}
    as claimed.
\end{proof}

\begin{corollary}\label{cor:qt_rt}
    For any $\alpha_t\in[0,1]$ and $\epsilon_t \in [0,+\infty)^{\cal X}$, letting $Q_t$ as in
    \eqref{eq:multiplicative_density} and $R_t$ from \eqref{eq:R_t_defn}. If $R_t$ satisfies
\begin{gather}
\kl\rbr{P,R_t} \leq \gamma \kl(P,Q_{t-1})\label{assumptionRT}
\end{gather}
for $\gamma \in [0,1]$, then
\begin{gather}
\mathclap{\kl(P,Q_{t}|_{\alpha_t}) \leq
  (1-\alpha_t(1-\gamma))\kl(P,Q_{t-1}).} \label{bGamma1}
\end{gather}
\end{corollary}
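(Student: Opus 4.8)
The plan is to show that the ``error'' penalty term appearing in \autoref{thm:kl_bound}, namely $\log\E_P\epsilon_t - \E_P\log\epsilon_t$, is \emph{exactly} $\kl(P,R_t)$, and then substitute the hypothesis \eqref{assumptionRT}.

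First I would unwind the definition \eqref{eq:R_t_defn} of $R_t$. Writing $Z \defas \int\d\tilde R_t = \int\epsilon_t\,\d P = \E_P\epsilon_t$, we have $\d R_t = (\epsilon_t/Z)\,\d P$, hence $\diff P{R_t} = Z/\epsilon_t$ (this requires $0 < \E_P\epsilon_t < \infty$, which is the only mild regularity point to note, and is implicit in $R_t$ being a probability measure; if $\E_P\epsilon_t$ were not finite the right-hand side of \eqref{eq:kl_bound2} is vacuously $+\infty$). Then
\begin{align}
    \kl(P,R_t) = \int\log\!\rbr{\diff P{R_t}}\d P
    = \int\log\!\rbr{\frac{\E_P\epsilon_t}{\epsilon_t}}\d P
    = \log\E_P\epsilon_t - \E_P\log\epsilon_t.
\end{align}

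Next I would plug this identity into \autoref{thm:kl_bound}: for $d_t = \diff P/{Q_{t-1}}\cdot\epsilon_t$ and any $\alpha_t\in[0,1]$,
\begin{align}
    \kl(P,Q_t|_{\alpha_t})
    &\leq (1-\alpha_t)\kl(P,Q_{t-1}) + \alpha_t\kl(P,R_t).
\end{align}
Finally, applying the assumption $\kl(P,R_t)\leq\gamma\kl(P,Q_{t-1})$ and collecting terms gives
\begin{align}
    \kl(P,Q_t|_{\alpha_t})
    &\leq (1-\alpha_t)\kl(P,Q_{t-1}) + \alpha_t\gamma\kl(P,Q_{t-1})
    = \rbr{1-\alpha_t(1-\gamma)}\kl(P,Q_{t-1}),
\end{align}
which is \eqref{bGamma1}.

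\textbf{Anticipated obstacle.} There is essentially no computational difficulty here; the only ``insight'' step is recognizing that the penalty term of \autoref{thm:kl_bound} is precisely the KL divergence $\kl(P,R_t)$ once $\epsilon_t$ is reified as a (scaled) density ratio via \eqref{eq:R_t_defn}. After that observation the corollary is immediate. The one point worth a sentence of care is the well-definedness of $R_t$ (positivity and finiteness of $\E_P\epsilon_t$), but since $R_t$ is assumed to be a probability measure satisfying \eqref{assumptionRT} this is not a real restriction.
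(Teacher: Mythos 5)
Your proof is correct and follows the same route as the paper: both identify the penalty term $\log\E_P\epsilon_t - \E_P\log\epsilon_t$ in \autoref{thm:kl_bound} as exactly $\kl(P,R_t)$, then substitute the hypothesis \eqref{assumptionRT}. Your derivation of this identity (via $\diff P{R_t} = \E_P\epsilon_t/\epsilon_t$) is if anything slightly more direct than the paper's manipulation through $\tilde R_t$.
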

\begin{proof} 
    We first show 
    \begin{gather}
       \kl(P,Q_{t}|_{\alpha_t}) \leq (1-\alpha_t)\kl(P,Q_{t-1}) + \alpha_t\kl\rbr{P,R_t}.\label{eqB1R}
    \end{gather}
    By definition $\epsilon_t = \diff {R_t}/P$. The rightmost term in \eqref{eq:kl_bound2} reduces as follows
    \begin{align}
        \log\E_P\epsilon_t -\E_P\log\epsilon_t 
        &= \log \int \diff {\tilde R_t}P \d P - \int \log\rbr{\diff {\tilde R_t}P}\d P
        \\&= \log \int \d \tilde R_t + \int \log\rbr{\diff P{\tilde R_t}}\d P
        \\&= \int \rbr{\log\rbr{\diff P{\tilde R_t}} + \log \int \d \tilde R_t}\d P
        \\&= \int \log\rbr{\diff P{\tilde R_t}·\int \d \tilde R_t}\d P
        \\&= \int \log\rbr{\frac{\d P}{\frac1{\int \d \tilde R_t}\d \tilde R_t}}\d P,
    \end{align}
    which completes the proof of \eqref{eqB1R}. The proof of \eqref{bGamma1} is then immediate.
\end{proof}

Define $\wl$ the weak learner which, taking $P$ and $Q_{t-1}$ as
input, delivers $c_t$ satisfying the conditions of \ref{wla}. In
the boosting theory, which involves a supervised learning problem,
there is one condition instead of two as in \ref{wla}: given a
distribution $D$ over $\cal X \times \cbr{-1, +1}$, we rather require
from the weak learner, \wl$^*$, that
\begin{gather}
    \exists{\gamma \in(0,1]}\frac1{\csup}\E_D y·c_t \geq \gamma,
    \tag{WLA$^*$}\label{wlastar}
\end{gather}
where $y$ denotes the class, mapping $\cal X \to \cbr{-1, +1}$. While is seems rather intuitive that we can craft \wl$^*$ from \wl, it
is perhaps less intuitive as to whether the same can be done for the reverse
direction. We now show that it is indeed the case and \ref{wla} and
\ref{wlastar} are in fact equivalent.

\begin{lemma}\label{lemEQUIVWLA}
    Suppose $\gammap = \gammaq = \gamma$ in \ref{wla} and
    \ref{wlastar}, without loss of generality. Then there exists
    \wl~satisfying \ref{wla} iff there exists \wl$^*$~satisfying \ref{wlastar}. 
\end{lemma}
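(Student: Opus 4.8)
The plan is to set up a dictionary between the density-estimation situation $(P,Q_{t-1})$ and the supervised situation, and then transport the weak learner across it in both directions. To a pair $(P,Q_{t-1})$ I associate the labelled distribution $D^\star$ on $\cal X\times\cbr{-1,+1}$ that draws $(x,+1)$ with $x\sim P$ with probability $\tfrac12$ and $(x,-1)$ with $x\sim Q_{t-1}$ with probability $\tfrac12$; conversely, to any $D$ on $\cal X\times\cbr{-1,+1}$ with $\pi_\pm\defas D(y=\pm1)>0$ I associate the class-conditionals $P_+\defas D(\cdot\mid y=+1)$ and $P_-\defas D(\cdot\mid y=-1)$. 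The whole argument rests on the elementary identity, valid for any bounded $c$,
\begin{gather}
    \tfrac1{\csup}\E_D[y\,c] = \pi_+\cdot\tfrac1{\csup}\E_{P_+}[c] + \pi_-\cdot\tfrac1{\csup}\E_{P_-}[-c],
\end{gather}
which for $D=D^\star$ reads $\tfrac1{\csup}\E_{D^\star}[y\,c]=\tfrac12(\mup+\muq{t-1})$ in the notation of \eqref{eq:mu_defns}; throughout I write $\mup,\muq{t-1}$ for the two edges of whatever classifier is currently under discussion.

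For \ref{wla}$\Rightarrow$\ref{wlastar}: given \wl, I define \wl$^*$ on input $D$ to return $\wl(P_+,P_-)$ when both labels occur under $D$ (and the appropriate constant classifier $\pm\csup$ when $D$ is supported on a single label, which trivially achieves edge $1$). By \ref{wla} the returned $c$ satisfies $\tfrac1{\csup}\E_{P_+}[c]\geq\gamma$ and $\tfrac1{\csup}\E_{P_-}[-c]\geq\gamma$, so the identity gives $\tfrac1{\csup}\E_D[y\,c]\geq(\pi_++\pi_-)\gamma=\gamma$, i.e.\ \ref{wlastar}, with the same confidence bound $\csup$.

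For the converse \ref{wlastar}$\Rightarrow$\ref{wla}: given \wl$^*$, I define \wl on $(P,Q_{t-1})$ to first compute $c\defas\wl^*(D^\star)$, so that $\mup+\muq{t-1}\geq2\gamma$. If $c$ already satisfies \ref{wla} I return it; otherwise, say $\muq{t-1}<\gamma$ (the case $\mup<\gamma$ is symmetric), and I return the shifted classifier $c'\defas c-\csup(\gamma-\muq{t-1})$. A constant shift moves the two edges by equal and opposite amounts, so the edges of $c'$ are $\muq{t-1}+(\gamma-\muq{t-1})=\gamma$ and $\mup-(\gamma-\muq{t-1})=\mup+\muq{t-1}-\gamma\geq\gamma$; and since $\muq{t-1}\geq2\gamma-\mup\geq2\gamma-1$, the shift has magnitude at most $\csup(1-\gamma)$, whence $\esssup\abs{c'}\leq(2-\gamma)\csup$. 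Re-normalising the edges by this inflated confidence bound shows \wl satisfies \ref{wla} with $\gammap=\gammaq=\gamma/(2-\gamma)$ (equivalently, with $\gamma$ unchanged and a confidence bound larger by the constant factor $2-\gamma$). This last step — keeping the shifted classifier compatible with a uniform sup-norm bound — is the only delicate point, and it is why the equivalence is ``up to a universal constant'' in $\csup$ (equivalently, in $\gamma$) rather than with literally identical parameters; everything else is the two one-line computations above plus the remark that a constant shift of a confidence-rated classifier trades its edge against $P$ for its edge against $Q_{t-1}$.
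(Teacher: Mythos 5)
Your forward direction is identical to the paper's: factor $D$ into class-conditionals, call \wl~on them, and recombine the two edge inequalities with the class prior. In the reverse direction you take a genuinely different route. The paper never modifies the classifier; it feeds \wl$^*$ a carefully crafted distribution $D$ that mixes the real problem $(P,Q_{t-1})$ (weight $\pi$ per class) with a dummy observation $x^*$, labelled $+1$ and $-1$ each with probability $(1-2\pi)/2$, whose contribution to $\E_D[y\,c_t]$ cancels exactly, so that $\frac{1}{\csup}\E_D[y\,c_t]=\pi(\mup+\muq{t-1})$. Choosing $\pi=\gamma/(1+\gamma)$ then inflates the \ref{wlastar} constraint to $\mup+\muq{t-1}\geq 1+\gamma$, and since each edge is at most $1$, both must individually be at least $\gamma$; hence \ref{wla} holds with the \emph{same} $\gamma$ and the same $\csup$, and no post-processing is needed. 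Your argument instead keeps the natural balanced $D^\star$ --- so you only get $\mup+\muq{t-1}\geq 2\gamma$ --- and recovers \ref{wla} by shifting $c$ by a constant to equalise the two edges. The shift direction, the bound $\muq{t-1}\geq 2\gamma-1$, and the resulting $\esssup|c'|\leq(2-\gamma)\csup$ are all correct, but, as you yourself observe, the construction inflates $\csup$ by the factor $2-\gamma$ (equivalently degrades $\gamma$ to $\gamma/(2-\gamma)$), so you prove the equivalence only up to a bounded constant. That loss is harmless for the downstream rates, which are polynomial or logarithmic in $1/\gamma$ and $\csup$ anyway, but the paper's ``make the problem harder'' trick avoids it entirely: roughly, the paper perturbs the \emph{input} to the weak learner, you perturb its \emph{output}, and the former lets you extract the exact parameter whereas the latter buys simplicity at a small constant cost.
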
 

\begin{proof}
    To simplify notations, we suppose without loss of generality that
    $\cal C(\cal X) \subseteq [-1,1]^{\cal X}$.

    ($\Rightarrow$) Let $D$ be a distribution on $\cal X \times \{-1,
    +1\}$. It can be factored as a triple $(\pi, P, N)$ where $P$ is a
    distribution over the positive examples, $N$ is a distribution over
    negative examples and $\pi$ is the mixing probability, $\pi \defas
    \Pr_D[y = +1]$. Now, feed $P$ and $N$ in lieu of $P$ and $Q_t$, repectively. We get
    $c_t$ which, from \ref{wla}, satisfies $\E_N[-c_t] \geq \gamma$ and
    $\E_P[c_t] \geq \gamma$, which implies
    \begin{gather}
    \E_D[y c_t] = \pi \E_P[c_t] + (1-\pi) \E_N[-c_t] \geq \pi \gamma +
    (1-\pi)\gamma = \gamma
    \end{gather}
    and we get our weak learner $\wl^*$ satisfying \ref{wlastar}.

    ($\Leftarrow$) We create a two-class classification problem in which observations
    from $P$ have positive class $y = +1$, observations from $Q_t$ have
    negative class $y = -1$ and there is a special observation $x^* \in
    \cal X$ which
    is equally present with probability $1-2\pi$ in both the positive and
    negative class. Hence, we are artificially increasing the difficulty
    of the problem by making its Bayes optimum worse. Obviously, \ref{wlastar}
    having to hold under any distribution, it will have to hold under the
    distribution $D$ that we create. To explicit $D$, consider $\pi \in
    [0, 1/2]$ and the following sampler for $D$:
    \begin{itemize}
    \item sample $z \in [0,1]$ uniformly;
    \begin{itemize}
        \item if $z \leq (1-2\pi)/2$ return $(x^*, +1)$;
        \item else if $z \leq 1-2\pi$ return $(x^*, -1)$;
        \item else if $z \leq 1 - \pi$, return $(x \sim P, +1)$;
        \item else return $(x \sim Q_t, -1)$;
    \end{itemize}
    \end{itemize}
    Let $D$ denote the distribution induced on $\cal X \times \{-1,
    +1\}$. Remark that the error of Bayes optimum is at least $1/2 - \pi$.
    Let $c_t$ returned by \wl$^*$.
    We have because of \ref{wlastar},
    \begin{gather}
        \E_D[y c_t] = \pi (\mup + \muq{t-1}) + \rbr{\frac{1-2\pi}{2} -
    \frac{1-2\pi}{2}}c_t(x^*) = \pi (\mup + \muq{t-1}) \geq \gamma
    \end{gather}
    Consider
    \begin{gather}
    \pi = \frac{\gamma}{1+\gamma}
    \end{gather}
    Which makes 
    \begin{gather}
    \mup + \muq{t-1} \geq 1 + \gamma.
    \end{gather}
    It easily comes that if $\mup < \gamma$, then we must have $\muq{t-1}
    > 1$, which is not possible, and similarly if $\muq{t-1} < \gamma$, then we must have $\mup
    > 1$, which is also impossible. Therefore we have both $\mup \geq
    \gamma$ and $\muq{t-1} \geq \gamma$, and we get our weak learner
    \wl~meeting \ref{wla}, as claimed.
\end{proof}

\subsubsection{Proof of \protect\autoref{thBoostALPHA0}}
\label{sec-proof-thBoostALPHA0}

The proof of \autoref{thBoostALPHA0} is achieved in two steps: (i) any
$c_t$ meeting \ref{wla} can be transformed through scaling into a classifier that
we call \textit{Properly Scaled} without changing it satisfying \ref{wla}
(for the same parameters $\gammap, \gammaq$), (ii)
\autoref{thBoostALPHA0} holds for such Properly Scaled classifiers.
\begin{definition}
    The classifier $c_t$ is said to be Properly Scaled (PS) if it meets: 
    \begin{subequations}
        \makeatletter
            \def\@currentlabel{PS}\label{eq:sdp}
        \makeatother
        \renewcommand{\theequation}{PS.\arabic{equation}}
        \begin{align}
            \exp(2\csup) \leq 2+ \mup\csup \label{sdp1}\\
            \E_{Q_{t-1}} \exp(c_t) \leq \exp\rbr{\frac{\mup \csup}{4}}.\label{sdp2}
        \end{align}
    \end{subequations}
\end{definition}
Hence, we first show how any classifier meeting \ref{wla}
can be made PS \textit{without} changing $\mup$ nor $\muq{t-1}$
(hence, still meeting \ref{wla}), modulo a simple
positive scaling. The proof involves a reverse of Jensen's
inequality which is much simpler than previous bounds \citep{sOAN,sOAU} and
of independent interest.

Our proof will equivalently give upper bounds on $\csup$ that make $c_t$ PS.
We note that our proof is constructive, that is, we give eligible upper bounds for
$\csup$. The proof of \autoref{thPS} is split in several lemmata, the first
of which is straightforward since $\mup\geq 0$ under \ref{wla}. 
\begin{lemma}\label{lemCOND1}
Suppose $c_t$ meets \ref{wla}. Then, \eqref{sdp1} holds for any $\csup \leq \log(2)/2$.
\end{lemma}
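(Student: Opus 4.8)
The plan is short, because \eqref{sdp1} asks only for a one-sided estimate that decouples into a trivial bound on each side. Under \ref{wla} we have $\mup \geq \gammap > 0$, so in particular $\mup \geq 0$; combined with $\csup \geq 0$ (recall $\csup = \max_t \esssup|c_t|$ is a supremum of absolute values, hence nonnegative) this gives $\mup\csup \geq 0$, and therefore the right-hand side satisfies $2 + \mup\csup \geq 2$ for \emph{any} admissible classifier. Thus it suffices to establish the left-hand bound $\exp(2\csup) \leq 2$.

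For that step I would simply invoke monotonicity of $\exp$: since $t\mapsto\exp t$ is increasing and $\exp(\log 2) = 2$, the inequality $\exp(2\csup) \leq 2$ is equivalent to $2\csup \leq \log 2$, i.e.\ $\csup \leq \log(2)/2$, which is precisely the hypothesis. Chaining $\exp(2\csup) \leq 2 \leq 2 + \mup\csup$ then yields \eqref{sdp1}. There is no genuine obstacle here; the only points to keep straight are the sign of $\mup$ (which \ref{wla} supplies) and the observation that discarding the nonnegative term $\mup\csup$ only makes the target inequality harder, so the resulting bound on $\csup$ is safe, if somewhat loose. This lemma is really the warm-up case; the work is concentrated in the companion lemmata handling \eqref{sdp2}.
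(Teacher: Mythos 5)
Your argument is correct and is exactly the one the paper has in mind: the paper simply remarks that the lemma is "straightforward since $\mup \geq 0$ under \ref{wla}," and your write-up is the natural fleshing-out of that remark (drop the nonnegative term $\mup\csup$, then use monotonicity of $\exp$ to reduce $\exp(2\csup) \leq 2$ to $\csup \leq \log(2)/2$). No gaps.
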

To prove how to
satisfy \eqref{sdp2}, we use the notions of Bregman divergences
and Bregman information. For $\varphi : \mathbb{R} \rightarrow \mathbb{R}$ convex differentiable with derivative $\varphi'$, we
define the Bregman divergence with generator $\varphi$ as
$D_\varphi(z\|z') = \varphi(z) - \varphi(z') - (z-z')\varphi'(z')$.
Following \citep{bmdgCW}, we define the \emph{minimal Bregman information} of
$(c_t, Q_{t-1})$ (or just \emph{Bregman information} for short)
relative to $\varphi$ as
\begin{align}
I_\varphi(c_t; Q_{t-1}) \defas \E_{Q_{t-1}} [D_\varphi(c_t \| \E_{Q_{t-1}}
c_t)] .
\end{align}
The Bregman information is a generalization of the variance for which
$\varphi(z) = z^2$. Jensen's inequality would give us a lowerbound,
but we need an \textit{upperbound}. We devise for this objective a
reverse of Jensen's inequality. We suppose that
$c_t$ takes values in $[a, b]$, where we would thus have $|a|$ or $b$
which would be $\csup$. 
\begin{lemma}\label{lemJB}(Reverse of Jensen's inequality)
Suppose $\varphi$ strictly convex differentiable and $c_t(x) \in [a,
b]$ for all $x\in\cal X$. Then,
\begin{align}
I_\varphi(c_t; Q_{t-1}) &\leq
D_{\varphi}\rbr{ u \left\| (\varphi^*)'\rbr{\frac{\varphi(a) - \varphi(b)}{a -
     b} }\right.},\label{eqGenJEn}
\end{align}
where $u$ can be chosen to be $a$ or $b$.
\end{lemma}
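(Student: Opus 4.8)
The plan is to reduce the Bregman information to a plain Jensen gap of $\varphi$ evaluated at the mean, and then to bound that gap from above by comparing the chord of $\varphi$ over $[a,b]$ with the tangent of $\varphi$ whose slope equals the chord slope.

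First I would set $\bar c \defas \E_{Q_{t-1}} c_t$ and expand the definition of the Bregman information: because the second argument of the divergence is the barycentre, the linear term integrates to zero, so
\[
I_\varphi(c_t;Q_{t-1}) = \E_{Q_{t-1}}\bigl[\varphi(c_t) - \varphi(\bar c) - (c_t-\bar c)\varphi'(\bar c)\bigr] = \E_{Q_{t-1}}\varphi(c_t) - \varphi(\bar c).
\]
Next, writing each value $c_t(x)\in[a,b]$ as the convex combination $\lambda(c_t(x))\,a + (1-\lambda(c_t(x)))\,b$ with $\lambda(z)\defas(b-z)/(b-a)\in[0,1]$, convexity of $\varphi$ gives $\varphi(c_t(x)) \le \lambda(c_t(x))\varphi(a) + (1-\lambda(c_t(x)))\varphi(b)$ pointwise. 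Taking $\E_{Q_{t-1}}$ and putting $\bar\lambda \defas (b-\bar c)/(b-a)$, so that $\bar c = \bar\lambda\,a + (1-\bar\lambda)\,b$, yields
\[
I_\varphi(c_t;Q_{t-1}) \le \bar\lambda\,\varphi(a) + (1-\bar\lambda)\,\varphi(b) - \varphi(\bar c) = L(\bar c) - \varphi(\bar c),
\]
where $L$ is the affine chord through $(a,\varphi(a))$ and $(b,\varphi(b))$, of slope $s \defas (\varphi(a)-\varphi(b))/(a-b)$.

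The crux is then to rewrite the chord gap $L(\bar c)-\varphi(\bar c)$ using the tangent line $T(z) \defas \varphi(w) + s(z-w)$ to $\varphi$ at the point $w \defas (\varphi^*)'(s) = (\varphi')^{-1}(s)$, which is well defined and lies in $(a,b)$ since $\varphi'$ is strictly increasing with $\varphi'(a) < s < \varphi'(b)$. As $L$ and $T$ are parallel (both of slope $s$), the constant $L-T$ equals $\varphi(a)-T(a) = D_\varphi(a\|w)$, and equally $\varphi(b)-T(b) = D_\varphi(b\|w)$; a one-line computation using $(a-b)s = \varphi(a)-\varphi(b)$ shows these two values coincide, which is exactly what allows $u$ to be taken as either $a$ or $b$ in the statement. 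Moreover $\varphi(\bar c) - T(\bar c) = D_\varphi(\bar c\|w) \ge 0$ because $T$ is a tangent to $\varphi$. Hence
\[
L(\bar c) - \varphi(\bar c) = \bigl(L(\bar c) - T(\bar c)\bigr) - \bigl(\varphi(\bar c) - T(\bar c)\bigr) = D_\varphi(a\|w) - D_\varphi(\bar c\|w) \le D_\varphi(a\|w),
\]
and chaining this with the previous display gives the claimed inequality. I do not expect a genuine obstacle here: the only points needing care are that $w$ is well defined and interior to $[a,b]$ — which is precisely where strict convexity and differentiability of $\varphi$ on $[a,b]$ (and the identity $(\varphi^*)' = (\varphi')^{-1}$ from the preliminaries) are used — together with the bookkeeping identities $L-T = D_\varphi(a\|w) = D_\varphi(b\|w)$ and $\varphi(\bar c) - T(\bar c) = D_\varphi(\bar c\|w)$, both immediate from $\varphi'(w) = s$.
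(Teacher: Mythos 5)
Your proof is correct and formalizes exactly the picture-argument the paper leaves implicit (the paper's proof is essentially ``see the figure''): reduce the Bregman information to the Jensen gap $\E\varphi(c_t)-\varphi(\bar c)$, bound $\E\varphi(c_t)$ by the chord value via convexity, and observe that the chord-to-function gap at $\bar c$ is the constant chord-to-tangent gap $D_\varphi(a\|w)=D_\varphi(b\|w)$ minus the nonnegative tangent gap $D_\varphi(\bar c\|w)$. The bookkeeping — $w=(\varphi^*)'(s)\in(a,b)$ by strict monotonicity of $\varphi'$ and the mean value theorem, and the two boundary Bregman divergences coinciding because $L-T$ is constant — is sound, so this is a faithful written-out version of the paper's argument rather than a different route.
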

\begin{proof}
    The proof is in fact straightforward, as illustrated in \autoref{f-rj}. 
\end{proof}

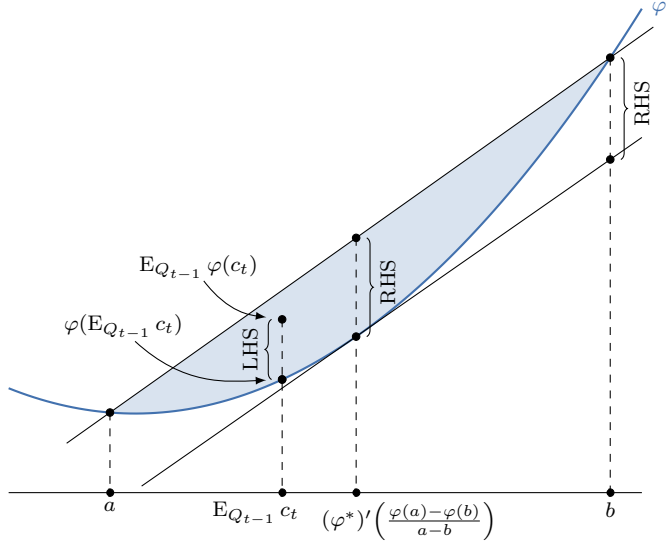
\begin{figure}[t]
    \begin{center}
        \tikzsetnextfilename{tangent_proof}
        \begin{tikzpicture}[every node/.style={font=\footnotesize}]
            \begin{axis}[clip mode=individual,
                samples=100,
                enlargelimits=false,
                xmin=-0.2, xmax=0.8,
                domain = \pgfkeysvalueof{/pgfplots/xmin}:\pgfkeysvalueof{/pgfplots/xmax},
                ymin=0, ymax=1,
                height=10cm, width=10cm,
                axis lines=middle,
                x axis line style=-,
                y axis line style={draw=none},
                xtick = {0}, ytick = {0},
                enlargelimits=false, clip=false, 
                axis on top,
                after end axis/.code={
                    \path (0,0) node [anchor=north west,yshift=-0.075cm] {} node [anchor=south east,xshift=-0.075cm] {};
                    }]

                \addplot [name path = phi, Blue, thick] {x^2+1/4 - 1/8} coordinate[pos = 0.35] (e) node[right, pos=1] {$\normalsize\phi$};
                \addplot [black, domain = 0.01:0.8, name path=tangent] {0.6974683544 * (x - 0.3487341772) + 0.3716155263 - 1/8};
                
                \coordinate (a) at (-0.04, 0.0016 + 1/4 - 1/8);
                \coordinate (b) at (0.75,  0.8125 - 1/8);
                \coordinate (c) at (0.3487341772,  0.3716155263 - 1/8);
                \coordinate (e') at ([yshift=0.8cm]e) ;
                % m = 0.6974683544
                
                X
                \draw (c) node [draw, circle, fill=black, inner sep=1pt] {};
                \path[name path=stem]  (c) -- (c|-0,1);
                \draw[name path=chord, shorten >= -20pt,  shorten <= -20pt] (a) node [draw, circle, fill=black, inner sep=1pt] {} -- (b) node [draw, circle, fill=black, inner sep=1pt] {};
                
                \addplot [LightBlue] fill between [of=chord and phi, soft clip={domain = -0.04:0.75}];
                
                \draw (e) node [draw, circle, fill=black, inner sep=1pt] {};
                \draw (e') node [draw, circle, fill=black, inner sep=1pt] {} node [above left, yshift=0.4cm, xshift=-0.2cm] {$\E_{Q_{t-1}}\phi(c_t)$} edge[-latex, shorten >= 6pt, shorten <= -4pt, bend right,  in=195] (e'.north west); 
                \draw (e) node [draw, circle, fill=black, inner sep=1pt] {};
                \draw ([xshift = -60pt, yshift=20pt]e) node {$\phi(\E_{Q_{t-1}}c_t)$} edge[-latex, shorten >= 6pt, shorten <= -4pt, bend right,  in=195] (e.north west);
                
                \draw [dashed, name path = bstem] (b) -- (0,0-|b) node [draw, circle, fill=black, inner sep=1pt] {} node [below] {$b$};
                \path [name intersections={of=chord and stem, by=cint}];
                \path [name intersections={of=tangent and bstem, by=bint}];
                \draw (cint) node [draw, circle, fill=black, inner sep=1pt] {};
                \draw [decoration={brace,mirror,raise=4pt},decorate] (c) -- node[yshift=2pt, right=12pt, rotate=90, anchor=center] {RHS} (cint);
                \draw [decoration={brace,mirror,raise=4pt},decorate] (bint) node [draw, circle, fill=black, inner sep=1pt] {}  -- node[yshift=2pt, right=12pt, rotate=90, anchor=center] {RHS} (b);
                \draw [decoration={brace,raise=4pt},decorate] (e) -- node[left=12pt, rotate=90, anchor=center] {LHS} (e');
                % \draw [dashed] (c) -- (cint);
                \draw [dashed] (cint) -- (0,0-|cint) node [draw, circle, fill=black, inner sep=1pt] {} node [below, xshift=2em] {$\scriptsize(\varphi^*)'\rbr\Big{\frac{\varphi(a) - \varphi(b)}{a -
                b}}$};

                \draw [dashed] (a) -- (0,0-|a) node [draw, circle, fill=black, inner sep=1pt] {} node [below] {$a$};

                \draw [dashed] (e') -- (0,0-|e') node [draw, circle, fill=black, inner sep=1pt] {} node [below, xshift=-1em] {$\E_{Q_{t-1}}c_t$};
            \end{axis}
        \end{tikzpicture}
    \end{center}
    \caption{Proof of \autoref{lemJB}. The tangent represented is
    parallel to the chord linking $(a, \varphi(a))$ and $(b, \varphi(b))$.
    The LHS of \eqref{eqGenJEn} can be represented by the
    vertical difference labelled (always in the blue area). The RHS of
    \eqref{eqGenJEn}, as a Bregman divergence, is the difference between
    the $\phi$ and its tangent at $(\varphi^*)'\rbr*{\frac{\varphi(a) - \varphi(b)}{a-b}}$, measured at either $a$, or $b$ (pictured).}
    \label{f-rj}
\end{figure}

We now show how to satisfy \eqref{sdp2}.
\begin{lemma}\label{threvJ}
    Suppose $c_t$ meets \ref{wla} and 
    \begin{align}
        \csup &\leq \rbr{\frac{\gammap}{4} + \frac{1-\exp(-2\gammaq)}{2}}.
    \end{align}
    Then
    \begin{align}
        \E_{Q_{t-1}} \exp(c_t) &\leq \exp\rbr{\frac{\mup \csup}{4}},
    \end{align}
    that is, \eqref{sdp2} holds.
\end{lemma}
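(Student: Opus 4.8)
The plan is to read off from \ref{wla} what it says about the first moment of $c_t$ under $Q_{t-1}$, use a reverse of Jensen's inequality — either \autoref{lemJB} with generator $\varphi=\exp$, or more directly the chord bound for $\exp$ on $[-\csup,\csup]$ — to control the exponential moment $\E_{Q_{t-1}}\exp(c_t)$, and then reduce \eqref{sdp2} to an elementary one‑variable inequality in $\csup$ that the stated hypothesis is tailored to meet. First I would translate the assumption: since $c_t$ is valued in $[-\csup,\csup]$ and, by \eqref{eq:mu_defns}, $\E_{Q_{t-1}}[c_t]=-\csup\,\muq{t-1}$ while $\E_P[c_t]=\csup\,\mup$, the hypotheses $\muq{t-1}\ge\gammaq$ and $\mup\ge\gammap$ become $\E_{Q_{t-1}}[c_t]\le-\csup\gammaq$ and $\E_P[c_t]\ge\csup\gammap$. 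Only the first is needed to bound the exponential moment; the second is exactly the quantity that appears on the right‑hand side of \eqref{sdp2}.

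Second, I would bound $\E_{Q_{t-1}}\exp(c_t)$. Convexity of $\exp$ on $[-\csup,\csup]$ (the chord estimate; equivalently \autoref{lemJB} with $a=-\csup$, $b=\csup$, $(\varphi^*)'=\log$, together with the Bregman identity $I_{\exp}(c_t;Q_{t-1})=\E_{Q_{t-1}}\exp(c_t)-\exp(\E_{Q_{t-1}}c_t)$) gives
\begin{gather}
    \E_{Q_{t-1}}\exp(c_t)\;\le\;\cosh\csup-\muq{t-1}\sinh\csup\;=\;e^{-\csup}+(1-\muq{t-1})\sinh\csup\;\le\;e^{-\csup}+(1-\gammaq)\sinh\csup,
\end{gather}
the last step using $\muq{t-1}\ge\gammaq$ and $\sinh\csup\ge0$. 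Write $g(\csup)\defas e^{-\csup}+(1-\gammaq)\sinh\csup$.

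Third, I would close the estimate with a scalar inequality: it remains to check $g(\csup)\le e^{\gammap\csup/4}\le e^{\mup\csup/4}$ whenever $\csup\le\tfrac14\gammap+\tfrac12(1-e^{-2\gammaq})$, which is precisely \eqref{sdp2}. The convenient trick here is that $g''=g$, so $(\log g)''=1-((\log g)')^2\le1$, while $(\log g)(0)=0$ and $(\log g)'(0)=-\gammaq$; Taylor's theorem then yields $\log g(\csup)\le-\gammaq\csup+\tfrac12\csup^2$, so it suffices that $\tfrac12\csup-\gammaq\le\tfrac14\gammap$, i.e.\ $\csup\le2\gammaq+\tfrac12\gammap$ — which is implied by the stated bound since $\tfrac12(1-e^{-2\gammaq})\le2\gammaq$ and $\tfrac14\gammap\le\tfrac12\gammap$. (If one instead carries the raw, slightly looser output of \autoref{lemJB} through verbatim, the admissible range for $\csup$ is what comes out in the $e^{-2\gammaq}$ form quoted.)

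The main obstacle is this last step: the scalar inequality $g(\csup)\le e^{\gammap\csup/4}$ is essentially tight — its margin vanishes as $\gammap,\gammaq\to0$ — so one must keep the bound on $\E_{Q_{t-1}}\exp(c_t)$, and any auxiliary estimate of $e^{\pm\csup}$, sharp enough to land at the clean closed‑form threshold rather than at some strictly smaller constant. The identity $g''=g$, which forces $(\log g)''\le1$, is what makes the final bookkeeping immediate; everything else is routine.
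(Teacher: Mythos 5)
Your proof is correct and takes a materially different, slicker route than the paper's. The paper decomposes $\E_{Q_{t-1}}\exp(c_t)=\exp(\E_{Q_{t-1}}c_t)+I_{\exp}(c_t;Q_{t-1})$, bounds the Bregman-information term via the endpoint Bregman divergence of \autoref{lemJB}, invokes a separate proposition (\autoref{bregbound}, proved by lengthy Taylor-series casework valid only for $\csup\le 2$) to show that Bregman divergence is at most $\csup^2$, and finally reduces to the scalar condition $\csup^2 \le \exp(\mup\csup/4)-\exp(-\muq{t-1}\csup)$, which it handles with chord/tangent estimates on each exponential. You sidestep all of this: convexity gives the chord bound directly, so $\E_{Q_{t-1}}\exp(c_t)\le\cosh\csup-\muq{t-1}\sinh\csup\le g(\csup)\defas e^{-\csup}+(1-\gammaq)\sinh\csup$ in one line (using $\E_{Q_{t-1}}c_t=-\csup\muq{t-1}$ and monotonicity of the chord), and this is strictly tighter than what \autoref{lemJB} hands you since it exploits the value of $\E_{Q_{t-1}}c_t$ rather than only the endpoints. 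Your key observation $g''=g$, hence $(\log g)''=1-((\log g)')^2\le 1$ together with $\log g(0)=0$, $(\log g)'(0)=-\gammaq$, yields $\log g(\csup)\le -\gammaq\csup+\tfrac12\csup^2$ by Taylor at $0$ with no side condition like $\csup\le2$ and no series manipulations at all. The resulting sufficient threshold $\csup\le\gammap/2+2\gammaq$ is weaker than the paper's stated $\csup\le\gammap/4+\tfrac12(1-e^{-2\gammaq})$ (via $1-e^{-2\gammaq}\le 2\gammaq$), so your argument covers the full hypothesized range and establishes the lemma. The trade-off: the paper's longer route is what produces the exact constant printed in the lemma, whereas yours is self-contained, shorter, and makes transparent why a bound on $\csup$ of order $\gammap,\gammaq$ suffices.
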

\begin{proof}
    Consider $\varphi(z) = \exp(z)$ and so $\varphi^*(z)
    = z \log z - z$ in \autoref{lemJB}. Suppose without loss of generality that $a = -\csup,
    b = \csup$. We get
    \begin{align}
        I_{\exp}(c_t; Q_{t-1}) &= \E_{Q_{t-1}}
        \exp(c_t) - \exp \E_{Q_{t-1}} c_t \\
        &\leq D_{\varphi}\rbr{ \csup \left\| (\varphi^*)'\rbr{\frac{\varphi(\csup) - \varphi(-\csup)}{2\csup} }\right.}.
    \end{align} 
    Now, we just need
    to ensure that 
    \begin{align}
    D_{\varphi}\rbr{ \csup \left\| (\varphi^*)'\rbr{\frac{\varphi(\csup) - \varphi(-\csup)}{2\csup} }\right.} &\leq \exp\rbr{ \frac{\mup\csup}{4}} - \exp(-\muq{t-1} \csup),\label{eqTSH}
    \end{align}
    as indeed we shall then have, because of \ref{wla},
    \begin{align}
        \E_{Q_{t-1}}
        \exp(c_t) &\leq \exp\rbr{ \frac{\gammap\csup}{4}} -
        \exp(-\gammaq \csup) + \exp \E_{Q_{t-1}}
        c_t \\
        &\leq \exp\rbr{ \frac{\mup\csup}{4}} -\exp (-\muq{t-1}
        \csup) +\exp (-\muq{t-1} \csup) \\
        &= \exp\rbr{ \frac{\mup\csup}{4}} ,
    \end{align}
    which is the statement of the lemma. 

\begin{proposition}\label{bregbound}
Pick $\varphi \defas \exp$. If $|z|\leq 2$, then
\begin{gather}
    D_{\varphi}\rbr{ z \left\|
        (\varphi^*)'\rbr{\frac{\varphi(z) -
            \varphi(-z)}{2z} }\right.}\leq {z}^2.
\end{gather}
\end{proposition}
\begin{proof}
    Equivalently, we need to show
    \begin{align}
        z^2 &\geq \exp(z) \rbr{\frac{1}{2} - \frac{1}{2z}} + \exp(-z)
                \rbr{\frac{1}{2} + \frac{1}{2z}}\\
                &\hspace{4em}+\rbr{\frac{\exp(z)-\exp(-z)}{2z}} \log\rbr{\frac{\exp(z)-\exp(-z)}{2z}}.
    \end{align}
    We split the proof in two. First, let us fix
    \begin{gather}
    g_1(z) \defas \frac{2 \rbr{ \exp(z) \rbr{\frac{1}{2} - \frac{1}{2z}} + \exp(-z)
    \rbr{\frac{1}{2} + \frac{1}{2z}}}}{z^2}.
    \end{gather}
    We remark that
    \begin{gather}
    g'_1(z) = \frac{\exp(-z)
    (-z^2  - 3 z - 3+\exp(2 z) (z^2 - 3 z + 3))}{2 z^4}.
    \end{gather}
    We then remark that, letting $g_2(z) \defas -z^2  - 3 z - 3+\exp(2 z) (z^2 - 3 z + 3)$,
    \begin{align}
    g_2(z) & = -z^2  - 3 z - 3 + \sum_{k\geq 0} \frac{2^k z^{k+2}}{k!} -
    \sum_{k\geq 0} \frac{3 \cdot 2^k z^{k+1}}{k!} + \sum_{k\geq 0}
    \frac{3\cdot 2^k z^{k}}{k!}\\
    & = -z^2  - 3 z - 3 + \sum_{k\geq 2} \frac{2^{k-2} z^{k}}{(k-2)!} -
    \sum_{k\geq 1} \frac{3 \cdot 2^{k-1} z^{k}}{(k-1)!} + \sum_{k\geq 0}
    \frac{3\cdot 2^k z^{k}}{k!}\\
    & = \sum_{k\geq 2} \rbr{\frac{2^{k-2}}{(k-2)!}-\frac{3\cdot 2^{k-1}}{(k-1)!}+\frac{3\cdot 2^{k}}{k!}} z^k \\
    & = \sum_{k\geq 5} \rbr{\frac{2^{k-2}}{(k-2)!}-\frac{3\cdot 2^{k-1}}{(k-1)!}+\frac{3\cdot 2^{k}}{k!}} z^k \\
    & = \sum_{k\geq 5} \rbr{k^2 - 7k + 12} \frac{2^{k-2} z^k}{k!}.
    \end{align}
    We then check that $z\mapsto z^2-7z+12 < 0$ only for $z \in (3,4)$.
    That is, it is never negative over naturals so $g_2(z)\geq 0,
    \forall z\geq 0$. We also check that $\lim_0 g'_1(z) = 0$ and so
    $g_1(z)$ is increasing for $z\geq 0$. Finally,
    \begin{gather}
    g_1(2) = \frac{1}{2}\cdot \rbr{\frac{\exp(2)}{4} +
    \frac{3}{4\exp(2)}} < \frac{7.81}{8} < 1,
    \end{gather}
    which shows that
    \begin{gather}
        \forall{|z|\leq 2}\exp(z) \rbr{\frac{1}{2} - \frac{1}{2z}} + \exp(-z)
        \rbr{\frac{1}{2} + \frac{1}{2z}} \leq \frac{z^2}{2}.\label{bpart1}
    \end{gather}
    (The analysis for $z<0$ uses the fact that the function is even.) We
    now show that we have
    \begin{gather}
        \forall{z \in [-2,2]}\frac{\exp(z) -\exp(-z)}{2z} \leq 1 + \frac{z^2}{4}.\label{bpart2}
    \end{gather}
    Hence, we want to show that $\exp(z) \leq \exp(-z) + 2z + z^3/2$ for
    $z \in [-2,2]$. We now have $\exp(-z) \geq 1 - z + z^2/2 - z^3/6 +
    z^4/24 - z^5/120$ for
    $z\geq 0$, so
    we just need to show $\exp(z) \leq 1 - z + z^2/2 - z^3/6 +
    z^4/24 - z^5/120  + 2z + z^3/2 = 1 + z + z^2/2 + z^3/3 +
    z^4/24 - z^5/120  + 2z + z^3/2$ for
    $z \in [0,2]$ (we will then use the fact that both functions in
    \eqref{bpart2} are even), which simplifies, using Taylor series for $\exp$, in showing
    \begin{gather}
        \forall{z \in [0,2]}\sum_{k\geq 6} \frac{z^{k}}{k!} \leq \frac{z^3}{6} - \frac{z^5}{60},
    \end{gather}
    or after dividing both sides by $z^3 > 0$ (the inequality is obviously
    true for $z=0$),
    \begin{gather}
        \forall{z \in (0,2]} \sum_{k\geq 6} \frac{1}{k(k-1)(k-2)} \cdot \frac{z^{k-3}}{(k-3)!} \leq \frac{1}{6} - \frac{z^2}{60},
    \end{gather}
    Since $k(k-1)(k-2)\geq 120$ for $k\geq 6$, it is
    enough to show that $\sum_{k\geq 6} \frac{z^{k-3}}{(k-3)!} \leq 20 -
    2z^2$. But $\sum_{k\geq 6} \frac{z^{k-3}}{(k-3)!} = \sum_{k\geq 3}
    \frac{z^{k}}{k!} = \exp(z) - 1 - z - z^2/2$, so we just need to
    show that $\exp(z) \leq 21 + z - 3z^2/2$ for $z \in (0,2]$, which
    is easy to show as the rhs is concave, decreasing for $z\geq 1/3$ and intersecting
    $\exp$ for $z\geq 5/2$. So \eqref{bpart2} holds. Since $\log(z) \leq z
    - 1$, we get
    \begin{gather}
    \rbr{\frac{\exp(z)-\exp(-z)}{2z}} \log
    \rbr{\frac{\exp(z)-\exp(-z)}{2z}} \leq \frac{z\exp(z)-z\exp(-z)}{8}
    \end{gather}
    Now, we have $\exp(z)  - \exp(-z) - 4z \leq 0$ for $z\in
    [0,2]$, since the function is strictly convex for $z\geq 0$ with two
    roots at $z=0$ and $z>2$. Reorganising, this shows that
    $(z\exp(z)-z\exp(-z))/8 \leq z^2/2$ for $z\in
    [0,2]$, and so
    \begin{gather}
        \forall{z \in [0,2]}\rbr{\frac{\exp(z)-\exp(-z)}{2z}} \log
    \rbr{\frac{\exp(z)-\exp(-z)}{2z}} \leq \frac{z^2}{2}.
    \end{gather}
    Putting it together with \eqref{bpart1}, we now have
    \begin{align}
        \MoveEqLeft[2] D_{\varphi}\rbr{ z \left\|(\varphi^*)'\rbr{\frac{\varphi(z) -\varphi(-z)}{2z} }\right.}
            \\&= \exp(z) \rbr{\frac{1}{2} - \frac{1}{2z}} + \exp(-z)
            \rbr{\frac{1}{2} + \frac{1}{2z}}
            \\&\hspace{4em}+\rbr{\frac{\exp(z)-\exp(-z)}{2z}} \log\rbr{\frac{\exp(z)-\exp(-z)}{2z}}
            \\&= \frac{z^2}{2} +\frac{z^2}{2} 
            \\&= z^2
    \end{align}
    for $z\in [0,2]$, and therefore, since both functions are even, the
    same holds for $z\in [-2, 0]$ and completes the proof.
\end{proof}
To show \eqref{eqTSH}, we therefore just need to ensure $\csup$ small enough so
    that 
    \begin{align}
    {\csup}^2 &\leq \exp\rbr{ \frac{\mup\csup}{4}} - \exp(-\muq{t-1} \csup)\label{constcsup2}.
    \end{align}
    Because $\exp(-\muq{t-1} \csup)$ is convex, it is upper-bounded
    over the interval $[0,2]$ by its chord between
    its two points in abscissae 0 and 2,
    \begin{align}
        \forall{\csup \in [0, 2]}\exp(-\muq{t-1} \csup) &\leq 1 - \frac{1-\exp(-2\muq{t-1})}{2}\csup,
    \end{align}
    and we also have, since $\exp(z) \geq 1+ z$,
    \begin{align}
    \exp\rbr{ \frac{\mup\csup}{4}} & \geq 1 + \frac{\mup\csup}{4}.
    \end{align}
    To ensure \eqref{constcsup2}, it is therefore sufficient, as long as
    $\csup \in (0, 2]$, that
    \begin{align}
    {\csup}^2 &\leq \rbr{\frac{\mup}{4} + \frac{1-\exp(-2
    \muq{t-1})}{2}}\csup,
    \end{align}
    which, after simplification and considering \ref{wla}, is achieved
    provided 
    \begin{align}
    {\csup} &\leq \rbr{\frac{\gammap}{4} + \frac{1-\exp(-2
    \gammaq)}{2}}.\label{bsupcstar2}
    \end{align}
    There remains to check that the condition of \autoref{bregbound} applies, that is, $\csup \leq 2$. The maximal
    value of the rhs in \eqref{bsupcstar2}, taking into
    account that $\gammap, \gammaq\leq 1$, is $1/4 + (1-\exp(-2))/2
    \approx 0.57 < 2$, which shows that the condition of \autoref{bregbound} indeed applies and proves \autoref{threvJ}.
    \end{proof}

    \begin{theorem}\label{thPS}
        Suppose $c_t$ satisfies \ref{wla}. Then there exists a constant $\eta > 0$ such
        that $\eta \cdot c_t$ satisfies \ref{wla} and is PS.
    \end{theorem}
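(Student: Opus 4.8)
The plan is to exploit the \emph{scale invariance of the edges}: the entire content of \autoref{thPS} — which is step (i) of the two-step outline preceding the definition of PS — is that the two defining inequalities of PS, \eqref{sdp1} and \eqref{sdp2}, constrain only the \emph{magnitude} of $c_t$, never its direction, so a single shrinking factor $\eta>0$ will do.

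\textbf{Step 1: rescaling preserves \ref{wla}.} For any $\eta>0$ we have $\esssup|\eta c_t| = \eta\csup$, while $\E_P[\eta c_t] = \eta\,\E_P[c_t]$ and $\E_{Q_{t-1}}[-\eta c_t] = \eta\,\E_{Q_{t-1}}[-c_t]$. Dividing numerator and denominator in \eqref{eq:mu_defns} by the common factor $\eta$, the edges $\mup$ and $\muq{t-1}$ of $\eta c_t$ equal those of $c_t$. Hence $\eta c_t$ satisfies \ref{wla} with the \emph{same} constants $\gammap,\gammaq$, for every $\eta>0$, and the task reduces to choosing $\eta$ so that $\eta c_t$ is PS.

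\textbf{Step 2: choosing $\eta$.} Apply the two lemmata already established to the rescaled classifier $\eta c_t$, whose confidence bound is $\eta\csup$ and whose edge $\mup$ is unchanged. By \autoref{lemCOND1}, inequality \eqref{sdp1} holds for $\eta c_t$ whenever $\eta\csup \leq \log(2)/2$; and by \autoref{threvJ} — applicable because $\eta c_t$ meets \ref{wla} — inequality \eqref{sdp2} holds for $\eta c_t$ whenever $\eta\csup \leq \gammap/4 + (1-\exp(-2\gammaq))/2$. It therefore suffices to set
\[
    \eta \defas \frac{1}{\csup}\min\left\{\frac{\log 2}{2},\ \frac{\gammap}{4} + \frac{1-\exp(-2\gammaq)}{2}\right\}.
\]
Since $\gammap,\gammaq\in(0,1]$ the minimum is a strictly positive number, and $\csup<\infty$ by the bounded-confidence assumption, so $\eta>0$; with this choice both displayed upper bounds on $\eta\csup$ hold, making $\eta c_t$ PS while it still satisfies \ref{wla}. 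This is the assertion of the theorem.

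\textbf{Where the difficulty lies.} \autoref{thPS} itself is essentially a one-line assembly of \autoref{lemCOND1} and \autoref{threvJ}; the real work is upstream. \autoref{lemCOND1} is immediate from $\mup\geq 0$, but \autoref{threvJ} rests on the reverse-Jensen inequality \autoref{lemJB} and the explicit Bregman estimate of \autoref{bregbound}, which is where the genuine computation happens. The only point to verify here is that the cap placed on $\eta\csup$ is compatible with the restriction $|z|\leq 2$ needed in \autoref{bregbound}; but over $\gammap,\gammaq\in(0,1]$ the quantity $\gammap/4 + (1-\exp(-2\gammaq))/2$ never exceeds $1/4 + (1-\exp(-2))/2 \approx 0.57 < 2$, so this holds automatically with no further restriction on $\eta$.
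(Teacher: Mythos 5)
Your proposal is correct and follows essentially the same route as the paper: rescaling leaves the edges $\mup,\muq{t-1}$ unchanged (so \ref{wla} is preserved), and then \autoref{lemCOND1} and \autoref{threvJ} give two explicit caps on the rescaled $\csup$ under which the PS conditions hold. Your choice of $\eta$ via the minimum of the two caps is in fact slightly more explicit than the paper's own proof, which states only the $\log(2)/2$ bound from \autoref{lemCOND1} while implicitly relying on \autoref{threvJ}'s bound as well.
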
    

    \begin{proof}
    Even when better bounds are possible, the combination of
    \autoref{lemCOND1} and \autoref{threvJ} show that any $c_t$ satisfying the
    \ref{wla}, positively scaled so that $\csup \leq \log(2)/2$, still satisfies
    \ref{wla} \textit{and} is PS, as claimed.
    \end{proof}
    We shall now prove \autoref{thBoostALPHA0}. The proof mainly consists of two lemmata, one showing that
    $\E_{Q_{t-1}} \exp\rbr{\alpha_t c_t}$ is small, the second one
    showing, under conditions on $c_t$, that $\E_{Q_{t-1}} \exp\rbr{c_t}$ is
    conveniently upper-bounded by $\E_{Q_{t-1}} \exp\rbr{\alpha_t c_t}$,
    leading to the theorem.
    \begin{lemma}\label{lemWLA} 
        Let $ \alpha_t \defas  \frac{1}{2\csup} \log \rbr{\frac{1+\muq{t-1}}{1-\muq{t-1}}}$. Then 
    \begin{gather}
        \E_{Q_{t-1}} \exp\rbr{\alpha_t c_t} \leq \sqrt{1-\muq{t-1}^2}.
    \end{gather}
    \end{lemma}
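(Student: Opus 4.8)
The plan is to run the classical AdaBoost-style exponential-loss estimate. Under the bounded-confidence assumption $\csup = \max_t \esssup|c_t| < \infty$, we have $c_t(x)\in[-\csup,\csup]$ for $Q_{t-1}$-almost every $x$, so the first step is to exploit convexity of $\exp$ on this interval via the chord bound: for every $z\in[-\csup,\csup]$,
\begin{gather}
  \exp(\alpha_t z) \leq \frac{\csup-z}{2\csup}\exp(-\alpha_t\csup) + \frac{\csup+z}{2\csup}\exp(\alpha_t\csup).
\end{gather}
Applying this pointwise with $z=c_t(x)$ and integrating against $Q_{t-1}$ bounds $\E_{Q_{t-1}}\exp(\alpha_t c_t)$ by an affine function of $\E_{Q_{t-1}}c_t$.

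The second step is to insert the edge. By the definition of $\muq{t-1}$ in \eqref{eq:mu_defns}, $\E_{Q_{t-1}}c_t = -\csup\,\muq{t-1}$, so $\frac{\csup-\E_{Q_{t-1}}c_t}{2\csup} = \frac{1+\muq{t-1}}{2}$ and $\frac{\csup+\E_{Q_{t-1}}c_t}{2\csup} = \frac{1-\muq{t-1}}{2}$, whence
\begin{gather}
  \E_{Q_{t-1}}\exp(\alpha_t c_t) \leq \frac{1+\muq{t-1}}{2}\exp(-\alpha_t\csup) + \frac{1-\muq{t-1}}{2}\exp(\alpha_t\csup).
\end{gather}
Here \ref{wla} gives $\muq{t-1}\in[\gammaq,1]$; if $\muq{t-1}=1$ then $c_t\equiv-\csup$ a.e.\ and both sides degenerate, so one may assume $\muq{t-1}<1$ for the value $\alpha_t$ to be finite.

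The final step is to substitute the stated step size $\alpha_t = \frac{1}{2\csup}\log\rbr{\frac{1+\muq{t-1}}{1-\muq{t-1}}}$, which is exactly the value for which $\exp(\pm\alpha_t\csup) = \rbr{\frac{1\pm\muq{t-1}}{1\mp\muq{t-1}}}^{1/2}$. Plugging in, each of the two terms on the right-hand side collapses to $\tfrac12\sqrt{1-\muq{t-1}^2}$, and summing gives the claimed bound $\sqrt{1-\muq{t-1}^2}$. Equivalently, one can observe that $\alpha_t$ is the unique minimiser over $\alpha$ of the right-hand side of the previous display, and that AM--GM evaluates that minimum to $2\sqrt{\tfrac{1+\muq{t-1}}{2}\cdot\tfrac{1-\muq{t-1}}{2}}$.

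This argument is entirely routine; the only thing that needs care is the sign bookkeeping --- the minus sign in the definition of $\muq{t-1}$ reflects that $-c_t$ is the relevant margin on the $Q_{t-1}$ side (cf.\ \autoref{tab:decision_rules}) --- together with checking that the prescribed $\alpha_t$ is precisely the one that balances the two exponential contributions so that the chord bound is tight.
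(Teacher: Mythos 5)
Your proof is correct and uses essentially the same mechanism as the paper: a pointwise convexity (chord/secant) bound on $\exp$ over $[-\csup,\csup]$, followed by taking the $Q_{t-1}$-expectation and inserting the definition of $\muq{t-1}$. The paper accomplishes the pointwise step by citing the inequality $1-ab \geq \sqrt{1-a^2}\exp\bigl(-\tfrac{b}{2}\log\tfrac{1+a}{1-a}\bigr)$ from \citep{nnAR}, which --- once one substitutes the chosen $\alpha_t$ --- is exactly the secant bound you wrote down; your version is self-contained, derives that inequality rather than invoking it, and additionally makes transparent via AM--GM why this $\alpha_t$ is the optimal (tightest) choice, but it is the same argument reorganised.
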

    \begin{proof}
    We know  \citep{nnAR} that
    \begin{gather}
        \forall{a,b \in [-1,1]} 
        1-ab \geq \sqrt{1-a^2} \exp\rbr{-\frac{b}{2}\log\rbr{\frac{1+a}{1-a}}}.\label{eq:wacky_rn_inequality}
    \end{gather}
    Let $a \defas \muq{t-1} $ and $b \defas-c_t / \csup$, for short. Then we obtain
    \begin{align}
        \exp\rbr{\alpha_t c_t} &= \exp\rbr{ -\rbr{-c_t \cdot \frac{1}{2\csup} \log\rbr{\frac{1 + \muq{t-1}}{1 - \muq{t-1}}}}}
        \\&\overset{\eqref{eq:wacky_rn_inequality}}{\leq}\frac{1+\muq{t-1}\frac{c_t}{\csup}}{\sqrt{1-\muq{t-1}^2}}
        \\&=
        \frac{1-\muq{t-1}\cdot\rbr{-\frac{c_t}{\csup}}}{\sqrt{1-\muq{t-1}^2}},\label{eq:bEXP1}
    \end{align}
    which implies the lemma.
\end{proof}

\begin{lemma}\label{lemBIA}
    Fix any $J\geq 0$. Suppose that the two conditions hold:
    \begin{align}
    \E_{Q_{t-1}} \exp(c_t) & \leq \exp\rbr{\frac{J}{2}},\label{condJJ1}\\
    \\\muq{t-1} & \leq \frac{J}{1+J}. \label{condJJ2}
    \end{align}
    Then,
        \begin{gather}
            \E_{Q_{t-1}} \exp(c_t)  \leq  \frac{1}{\sqrt{1-\muq{t-1}^2}}\cdot \E_{Q_{t-1}} \exp\rbr{\alpha_t c_t + J}.
        \end{gather}
\end{lemma}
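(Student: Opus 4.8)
The plan is to lower-bound $\E_{Q_{t-1}}\exp(\alpha_t c_t)$ by Jensen's inequality and then to show that, after multiplying by $e^{J}$, this lower bound already dominates $\exp(J/2)$ --- which by \eqref{condJJ1} is itself an upper bound for $\E_{Q_{t-1}}\exp(c_t)$. For the Jensen step I would use that $\exp$ is convex, that $\E_{Q_{t-1}}c_t = -\csup\,\muq{t-1}$ by \eqref{eq:mu_defns}, and that $\alpha_t\csup = \tfrac{1}{2}\log\bigl((1+\muq{t-1})/(1-\muq{t-1})\bigr)$, giving
\begin{align*}
    \E_{Q_{t-1}}\exp(\alpha_t c_t)
    \;\geq\; \exp\bigl(\alpha_t\,\E_{Q_{t-1}}c_t\bigr)
    \;=\; \left(\frac{1-\muq{t-1}}{1+\muq{t-1}}\right)^{\muq{t-1}/2}.
\end{align*}

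The next step is to show this lower bound dominates $e^{-J/2}\sqrt{1-\muq{t-1}^2}$. Writing $\mu\defas\muq{t-1}$ --- which lies in $[0,1)$, since the hypotheses force $\mu<1$ and \ref{wla}, the regime in which the lemma is invoked, gives $\mu\geq 0$ --- and taking logarithms, the inequality $\bigl((1-\mu)/(1+\mu)\bigr)^{\mu/2}\geq e^{-J/2}\sqrt{1-\mu^2}$ is equivalent to
\begin{align*}
    J \;\geq\; (1-\mu)\log(1-\mu) + (1+\mu)\log(1+\mu).
\end{align*}
Here the right-hand side has the power-series expansion $\sum_{k\geq1}\frac{\mu^{2k}}{k(2k-1)}$, every coefficient of which is at most $1$, whereas $\frac{\mu}{1-\mu}=\sum_{k\geq1}\mu^{k}$ has all coefficients equal to $1$; hence $(1-\mu)\log(1-\mu)+(1+\mu)\log(1+\mu)\leq \frac{\mu}{1-\mu}$ on $[0,1)$. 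Since \eqref{condJJ2} rearranges exactly to $\frac{\mu}{1-\mu}\leq J$, the displayed inequality --- and hence the claim --- follows. (Alternatively, one checks $(1-\mu)\log(1-\mu)+(1+\mu)\log(1+\mu)\leq\frac{\mu}{1-\mu}$ by noting both sides vanish at $\mu=0$ and that the left-hand derivative $\log\bigl((1+\mu)/(1-\mu)\bigr)$ is bounded by the right-hand derivative $1/(1-\mu)^2$ on $[0,1)$.)

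Finally, combining the two steps with \eqref{condJJ1},
\begin{align*}
    \frac{\E_{Q_{t-1}}\exp(\alpha_t c_t + J)}{\sqrt{1-\muq{t-1}^2}}
    &= \frac{e^{J}\,\E_{Q_{t-1}}\exp(\alpha_t c_t)}{\sqrt{1-\muq{t-1}^2}}
    \;\geq\; \frac{e^{J}\cdot e^{-J/2}\sqrt{1-\muq{t-1}^2}}{\sqrt{1-\muq{t-1}^2}}\\
    &= e^{J/2}
    \;\geq\; \E_{Q_{t-1}}\exp(c_t),
\end{align*}
which is exactly the asserted bound. The only non-routine point is the second step --- recognising that the required multiplicative slack is controlled \emph{exactly} by the scalar quantity $(1-\mu)\log(1-\mu)+(1+\mu)\log(1+\mu)$, and that hypothesis \eqref{condJJ2} is calibrated precisely to bound it; once that is seen the power-series comparison makes the inequality transparent, so I do not anticipate a serious obstacle.
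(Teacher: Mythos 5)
Your proof is correct, and it takes a genuinely different (and arguably cleaner) route than the paper after the shared Jensen step. The paper also begins by lower-bounding $\E_{Q_{t-1}}\exp(\alpha_t c_t)$ by $\exp(-\alpha_t\csup\muq{t-1})$, but from there it proceeds via the chain of inequalities $\frac{2z^2}{1-z}\geq 4\log\frac{1}{\sqrt{1-z^2}}\geq z\log\frac{1+z}{1-z}$ (eq.\ \eqref{eqZ2}), which reduces the target to $\frac{\mu^2}{1-\mu}\leq J$; it then needs a separate analysis of the rational function $g_4(z)=(1+z)\sqrt{z^2+4z}/(z(3+z))$ to show that \eqref{condJJ2} actually implies this. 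You instead compute the exact slack that needs to be absorbed, namely $(1-\mu)\log(1-\mu)+(1+\mu)\log(1+\mu)$, dominate it by $\frac{\mu}{1-\mu}$ via a term-by-term power-series comparison, and observe that $\frac{\mu}{1-\mu}\leq J$ is nothing but an algebraic rearrangement of \eqref{condJJ2}. This makes the role of the hypothesis transparent and removes the $g_4$ detour entirely; the price is that you lean on a slightly non-obvious series identity, but you verify it cleanly, and you also flag the implicit assumption $\muq{t-1}\geq 0$ (justified by \ref{wla}) that both proofs use without comment. The paper's version passes through the intermediate $\mu^2/(1-\mu)$, which is weaker than your $\mu/(1-\mu)$ and hence forces the extra work to connect back to \eqref{condJJ2}; your version is shorter and more self-explanatory.
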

\begin{proof}
    Jensen's inequality yields
    \begin{gather}
        \E_{Q_{t-1}} \exp (\alpha_t c_t)  \geq  \exp \rbr{\E_{Q_{t-1}} \alpha_t c_t} = \exp\rbr{- \alpha_t \csup \muq{t-1}},
    \end{gather}
    hence we rather show the stronger statement
    \begin{align}
        \E_{Q_{t-1}} \exp\rbr{c_t} \leq \frac{1}{\sqrt{1-\muq{t-1}^2}}\cdot\exp\rbr{ -\alpha_t \csup \muq{t-1} + J}.
    \end{align}
    We use two inequalities:
    \begin{gather}
        \forall{z \in[0,1]}
        \frac{2z^2}{1-z} \geq 4 \log\frac{1}{\sqrt{1-z^2}} \geq z \log\rbr{\frac{1+z}{1-z}}.\label{eqZ2}
    \end{gather}
    Let us summarize these as $A\geq B\geq C$. To first check these
    inequalities, we remark:
\begin{itemize}
\item to check $A\geq B$, we simplify it: it
    yields equivalently $g_1(z) \defas z^2(1+z) \geq
    -(1-z^2)\log(1-z^2) \defas g_2(z)$. We then
    check that $g'_2(z) = 2z(1+\log(1-z^2))$ while $g'_1(z) = 2z(1+
    3z/2)$. Both derivatives are continuous with the same limit in $0$
    and it is easy to check that for $z\geq 0$, $g'_2(z) \leq
    g'_1(z)$. Since $g_1(0) = g_2(0)$, we get $A\geq B$;
\item to check $B \geq C$, we simplify it, which yields equivalently
  $g_3(z) \defas (z-2)\log(1-z) - (z+2) \log(1+z) \geq 0$. We have
  $g''_3(z) = 4z^2/(z^2-1)^2 \geq 0$, which shows the strict convexity
  of the function. We also have $g'_3(0) = g_3(0) = 0$, which gives
  $g_3(z) \geq 0$ for all $z$ and shows $B\geq C$.
\end{itemize}
With the latter ineq. \eqref{eqZ2} and the expression of $\alpha_t$
for the regular boosting regime, we get
    \begin{align}
        \MoveEqLeft[4]\frac{1}{\sqrt{1-\muq{t-1}^2}}\cdot\exp\rbr{-\alpha_t \csup \muq{t-1} + J}
        \\& = \exp\rbr{J + \log\rbr\Bigg{\frac{1}{\sqrt{1-\muq{t-1}^2}}} -
        \frac{\muq{t-1}}{2}\log\rbr\Bigg{\frac{1+\muq{t-1}}{1-\muq{t-1}}}} \label{chain1}
        \\& \geq \exp\rbr{J-
        \frac{\muq{t-1}}{4}\log\rbr{\frac{1+\muq{t-1}}{1-\muq{t-1}}}}
    \\& \geq \exp\rbr{J - \frac{1}{4} \cdot \frac{2
        \muq{t-1}^2}{1-\muq{t-1}}}. \label{chain2}
    \end{align}
    The last inequality follows from the former
    ineq. \eqref{eqZ2}. Suppose now that we can ensure
    \begin{align}
    \frac{2\muq{t-1}^2}{1-\muq{t-1}} &\leq 2J\label{condJ1}.
    \end{align}
    It would follow from \eqref{chain2} that 
    \begin{align}
        \frac{1}{\sqrt{1-\muq{t-1}^2}}\cdot\exp\rbr{-\alpha_t \csup
        \muq{t-1} + J} & \geq \exp\rbr{J - \frac{1}{4} \cdot 2J}
    = \exp\rbr{\frac{J}{2}},
    \end{align}
    and so to prove the lemma, we would just need 
    \begin{gather}
        \E_{Q_{t-1}} \exp(c_t) \leq \exp\rbr{\frac{J}{2}},
    \end{gather}
    which is precisely \eqref{condJJ1}. To get \eqref{condJ1}, we equivalently need
    $\muq{t-1}^2 + J \muq{t-1} - J\leq 0$, that is, 
\begin{gather}
\muq{t-1} \leq
    \frac{1}{2}\cdot(-J+ \sqrt{J^2+4J})\label{condJAF}
\end{gather} 
To prove a simpler
    equivalent condition, we let $g_4(z) \defas
    (1+z)\sqrt{z^2+4z}/(z(3+z))$. We easily get $\lim_{\downarrow 0}
    g_4(z) = +\infty$, $\lim_{+\infty} g_4(z)
    = 1$ and $g'_4(z) = - 6(z+4)/N$ with $N \defas
    (z^2+4z)^{3/2}(3+z)^2 \geq 0$, so $g_4(z)\geq 1$ for all $z\geq 0$, 
    and reordering this inequality yields equivalently $z/(1+z) \leq
    (1/2)\cdot(-z+ \sqrt{z^2+4z})$ for $z\geq 0$, so to get
    \eqref{condJAF}, we just require $\muq{t-1} \leq J/(1+J)$, which
    is \eqref{condJJ2}, and ends the proof of \autoref{lemBIA}.
\end{proof}
% We are now in a position to prove \autoref{thBoostALPHA0}. 

Let $\alpha_t \defas \min\cbr{1, \frac{1}{2\csup}\log\rbr{ \frac{1+\muq{t-1}}{1-\muq{t-1}} }  }$. Because there are two regimes for $\alpha_t$,
we define two boosting regimes, a \emph{high boosting regime}, $\alpha_t =
1$ (``clamped''), and a \emph{regular boosting regime}, $\alpha_t <
1$ (``not clamped''). We show two rates of decrease for the KL divergence, one for each regime.

\paragraph{Convergence in the regular boosting regime} The \ref{wla} alone is sufficient to guarantee a significant decrease of the KL divergence of $P$ from $Q_{t-1}$ at each boosting iteration. The proof of the theorem uses a simple reverse of Jensen's
inequality which may be of independent interest. Note that even when
we require that $c_t$ meet \ref{wla}, the decrease of the $\kl$
divergence uses its \textit{actual} values for $\mup, \muq{t-1}$,
which can yield a substantially larger $\kl$ decrease.

\begin{theorem}\label{thBoostALPHA0}
    In the regular boosting regime and under \ref{wla}, 
    \begin{align}
      \mathclap{\kl\rbr{P,Q_{t}|_{\alpha_t}}   \leq  \kl(P,Q_{t-1}) -\frac{\mup}{4} \log \rbr{\frac{1+\muq{t-1}}{1-\muq{t-1}}}.}\label{bKLRBR}
    \end{align}
\end{theorem}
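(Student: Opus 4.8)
The plan is to compute $\kl(P,Q_t|_{\alpha_t})$ \emph{exactly} up to one term, reduce the whole statement to an upper bound on the normalisation ratio $Z_t/Z_{t-1}$, and then feed in \autoref{lemWLA} together with the sign information that \ref{wla} supplies. Since the algorithm uses $d_t \defas \exp\circ c_t$, unfolding the recursion \eqref{eq:rec_update} gives $\d Q_t|_{\alpha_t} = (Z_{t-1}/Z_t)\exp(\alpha_t c_t)\,\d Q_{t-1}$, and by \autoref{prop:recursive_normalisation} the ratio is $Z_t/Z_{t-1} = \E_{Q_{t-1}}\exp(\alpha_t c_t)$. Taking logarithms and integrating against $P$, and using the definition of $\mup$ in \eqref{eq:mu_defns} (so that $\E_P c_t = \csup\mup$), I obtain
\[
  \kl(P,Q_t|_{\alpha_t})
  = \E_P\log\frac{\d P}{\d Q_{t-1}} - \alpha_t\,\E_P c_t + \log\E_{Q_{t-1}}\exp(\alpha_t c_t)
  = \kl(P,Q_{t-1}) - \alpha_t\csup\mup + \log\E_{Q_{t-1}}\exp(\alpha_t c_t) .
\]
So the entire content of the theorem is an upper bound on the last summand.

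Next I would invoke \autoref{lemWLA}: in the regular (non-clamped) regime we have exactly $\alpha_t = \frac{1}{2\csup}\log\frac{1+\muq{t-1}}{1-\muq{t-1}}$, which is the hypothesis of that lemma, and it yields $\E_{Q_{t-1}}\exp(\alpha_t c_t)\le\sqrt{1-\muq{t-1}^2}$, hence $\log\E_{Q_{t-1}}\exp(\alpha_t c_t)\le\tfrac12\log(1-\muq{t-1}^2)$. Substituting this and $\alpha_t\csup = \tfrac12\log\frac{1+\muq{t-1}}{1-\muq{t-1}}$ into the identity above gives
\[
  \kl(P,Q_t|_{\alpha_t}) \le \kl(P,Q_{t-1}) - \frac{\mup}{2}\log\frac{1+\muq{t-1}}{1-\muq{t-1}} + \frac12\log\bigl(1-\muq{t-1}^2\bigr).
\]
Finally I would throw away the last term and half of the middle one: under \ref{wla} we have $\mup\ge 0$ and $\muq{t-1}\in(0,1)$, so $\log\frac{1+\muq{t-1}}{1-\muq{t-1}}\ge 0$ while $\log(1-\muq{t-1}^2)\le 0$, whence $-\frac{\mup}{4}\log\frac{1+\muq{t-1}}{1-\muq{t-1}}+\frac12\log(1-\muq{t-1}^2)\le 0$, which is exactly the slack needed to pass from the displayed inequality to \eqref{bKLRBR}.

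The one non-elementary ingredient is \autoref{lemWLA}, which itself rests on the Radon--Nikodym-type inequality \eqref{eq:wacky_rn_inequality}; everything else is bookkeeping with the cumulant $Z_t$. So I do not expect a genuine obstacle in this theorem: the difficulty has been front-loaded into \autoref{lemWLA} and the computation of $\E_{Q_{t-1}}\exp(\alpha_t c_t)$. I note in passing that this route actually establishes the sharper bound with $\mup/2$ in place of $\mup/4$; the constant in the statement is the one obtained if one instead argues through \autoref{thm:kl_bound} and the Properly Scaled machinery (\autoref{thPS}, \autoref{lemBIA}), presumably chosen so that the regular and clamped regimes can be treated uniformly, and either derivation suffices here.
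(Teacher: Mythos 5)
Your proof is correct, and it is a genuinely different and cleaner route than the one the paper takes. The paper goes through the general-purpose bound of \autoref{thm:kl_bound}, which encodes the Jensen inequality $\log(Z_t/Z_{t-1}) \leq \alpha_t\log\E_{Q_{t-1}}\exp(c_t)$ (that is, moving the power $\alpha_t$ outside the expectation), and then must bound $\log\E_{Q_{t-1}}\exp(c_t)$ by $J = \mup\csup/2$. That last step is what forces the paper to invoke \autoref{lemBIA} (the reverse Jensen bound), which in turn requires the two Properly Scaled conditions, and hence the whole apparatus of \autoref{thPS}, \autoref{lemCOND1}, and \autoref{threvJ}. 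You instead keep the \emph{exact} identity $\log(Z_t/Z_{t-1}) = \log\E_{Q_{t-1}}\exp(\alpha_t c_t)$ from \autoref{prop:recursive_normalisation}, which is precisely the quantity \autoref{lemWLA} controls, so the entire PS scaffolding is unnecessary. This buys you three things at once: a shorter proof, the observably sharper bound with $\mup/2$ in place of $\mup/4$ plus an additional nonpositive slack term $\tfrac12\log(1-\muq{t-1}^2)$, and no need to argue that the final inequality is invariant under the positive rescaling used to reach the PS regime. Your closing conjecture about why the paper keeps the weaker $\mup/4$ constant is plausible but not stated in the paper; what the paper does say is that the same decomposition through \autoref{thm:kl_bound}, $\epsilon_t$, and $J$ is reused in the clamped regime (\autoref{thBoostALPHA2}) and in the empirical version (\autoref{thBoostALPHA0PLUS}), so the machinery is amortised across several results even if it is overkill for this one theorem on its own. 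One small point worth making explicit in your writeup: the sign facts $\mup > 0$ and $\muq{t-1} \in (0,1)$ follow from \ref{wla} together with the definition of the regular regime ($\alpha_t < 1$ forces $\muq{t-1} < \mucsup < 1$), which is exactly where the ``regular boosting regime'' hypothesis is used.
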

\begin{proof}
    We have
    \begin{gather}
        \E_P \epsilon_t = \E_{Q_{t-1}} \sbr{\frac{\d P}{\d Q_{t-1}} \cdot \epsilon_t} = \E_{Q_{t-1}} \exp\rbr{c_t}.\label{bEPST}
    \end{gather}
    Hence, combining successively the statements of \autoref{lemBIA}
    (we check below that the conditions of the lemma are indeed satisfied)
    and \autoref{lemWLA}, we get:
    \begin{align}
        \log \E_P \epsilon_t & = \log \E_{Q_{t-1}} \exp\rbr{c_t} \\
 & \leq  \log\rbr{\frac{1}{\sqrt{1-\muq{t-1}^2}}\cdot\E_{Q_{t-1}} \exp\rbr{ \alpha_t c_t + J}}\label{chain5}\\
        & =  \log\rbr{\frac{\E_{Q_{t-1}} \exp\rbr{\alpha_t c_t}}{\sqrt{1-\muq{t-1}^2}}\cdot \exp\rbr{J}}
        \\
        & \leq \log \exp\rbr{J}\\
        &= J. \label{chain6}
    \end{align}
    On the other hand, \ref{wla} yields 
    \begin{align}
        \mup \csup = \E_P c_t   =  \E_P \log \rbr{\frac{\d P}{\d Q_{t-1}} \cdot\epsilon_t}
        = \kl(P,Q_{t-1}) + \E_P \log \epsilon_t.\label{mucstar}
    \end{align}
    Since $\alpha_t \geq 0$, it follows from \autoref{thm:kl_bound}
    and \eqref{mucstar}, \eqref{chain6} in this order that
    \begin{align}
        \kl(P,Q_{t}) 
        & \leq \kl(P,Q_{t-1})  - \alpha_t\rbr{\kl(P,Q_{t-1}) +
        \E_P\log\epsilon_t}+ \alpha_t \log\E_P\epsilon_t \label{chain3}
        \\& = \kl(P,Q_{t-1})  - \alpha_t\mup \csup + \alpha_t \log\E_P\epsilon_t
        \\& \leq \kl(P,Q_{t-1})  - \alpha_t\mup \csup + \alpha_t J
    \\& = \kl(P,Q_{t-1})  - \alpha_t(\mup \csup -  J). \label{chain4}
    \end{align}
    It remains to fix $J \defas \mup \csup/2$, and we get 
    \begin{align}
        \kl(P,Q_{t}) 
        & \leq \kl(P,Q_{t-1})  - \frac{\alpha_t \mup \csup}{2}\label{balphalowboosting}
    \\& = \kl(P,Q_{t-1})  - \frac{\mup}{4} \log \rbr{\frac{1+\muq{t-1}}{1-\muq{t-1}}},
    \end{align}
    which is the statement of the theorem. We end up the proof of \autoref{thBoostALPHA0} by
    showing that the PS property for $c_t$ implies that the conditions of
    \autoref{lemBIA} are satisfied --- hence, Theorem
    \ref{thBoostALPHA0} is shown for $c_t$ being PS, which we recall
    is always possible from \autoref{thPS} when $c_t$ satisfies the
    \ref{wla}. While it is clear that \eqref{condJJ1} is one of the PS
    properties for $c_t$, we still need to show that the PS ensures
    \eqref{condJJ2} with $J = \mup \csup/2$, that is, we need to show that
    \begin{align}
    \muq{t-1} &\leq \frac{\mup \csup}{2  +\mup \csup}.\label{bmuq1}
    \end{align}
    Recall that we are in the regular boosting regime where we do not clamp $\alpha_t$, and
    therefore, if we let
    \begin{align}
    \mucsup \defas \frac{\exp(2\csup)-1}{\exp(2\csup)+1} \in (0,1),\label{defMUCSUP}
    \end{align}
    then we know that $\muq{t-1} \leq \mucsup$, so to have \eqref{bmuq1},
    it suffices to ensure $\mucsup \leq \mup \csup / (2  +\mup \csup)$,
    which equivalently yields
    \begin{align}
    \exp(2\csup) &\leq 2 + \mup \csup,
    \end{align}
    which is the first PS property. This ends the proof of \autoref{thBoostALPHA0}.
\end{proof}

\subsubsection{Proof of \autoref{thBoostALPHA2}}\label{sec-proof-thBoostALPHA2}
\paragraph{Convergence in the high boosting regime} This is where
things get interesting; when $\alpha_t$ is clamped to 1, the decrease in the
KL divergence at each iteration is \textit{guaranteed} to be of
order $\csup$, and can even be significantly larger depending on the
actual values of $\muq{t-1}$ and $\mucsup$, defined as
\begin{gather}
    \mucsup \defas \frac{\exp(2\csup)-1}{\exp(2\csup)+1} \in (0,1).\label{defMUCSUPMAIN}
\end{gather}
Because $\alpha_t = 1$, we have $\muq{t-1} \geq \mucsup$, so
let us write $\muq{t-1} = (1+\delta_{t-1}) \mucsup$ for some
$\delta_{t-1}\geq 0$. Note that \autoref{thBoostALPHA2} doese not assume \ref{wla}. It is worthwhile remarking that \protect\autoref{rateWLA} is a direct
consequence of \protect\autoref{thBoostALPHA0} above.

We follow some of the same steps as for \autoref{thBoostALPHA0}. 
\begin{lemma}
        Let $ \alpha_t \defas  1$. Then
    \begin{gather}
        \E_{Q_{t-1}}
        \exp\rbr{c_t} \leq \frac{1-\muq{t-1}\mucsup}{\sqrt{1-\mucsup^2}},
    \end{gather}
    where $\mucsup$ is defined in \eqref{defMUCSUP}.
\end{lemma}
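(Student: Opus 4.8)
The plan is to follow the same route as \autoref{lemWLA}, but to exploit that in the clamped regime the quantity $\mucsup$ of \eqref{defMUCSUP} is \emph{exactly} the value of $\muq{t-1}$ at which the step size $\alpha_t$ saturates to $1$. The first thing I would record is the purely algebraic identity, immediate from \eqref{defMUCSUP}, that
\begin{align*}
  \frac{1+\mucsup}{1-\mucsup} = \exp(2\csup), \qquad\text{equivalently}\qquad \log\frac{1+\mucsup}{1-\mucsup} = 2\csup.
\end{align*}

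Next I would apply the inequality \eqref{eq:wacky_rn_inequality} with the substitution $a \defas \mucsup \in (0,1)$ and $b \defas -c_t/\csup$, noting $b \in [-1,1]$ since $|c_t|\leq\csup$ everywhere. Because $-\tfrac{b}{2}\log\tfrac{1+\mucsup}{1-\mucsup} = \tfrac{c_t}{2\csup}\cdot 2\csup = c_t$ by the identity above, this yields, pointwise on $\cal X$,
\begin{align*}
  1 + \mucsup\cdot\frac{c_t}{\csup} \;\geq\; \sqrt{1-\mucsup^2}\,\exp(c_t),
\end{align*}
i.e.\ the pointwise bound $\exp(c_t) \leq \rbr{1 + \mucsup c_t/\csup}/\sqrt{1-\mucsup^2}$. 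The collapse of the exponent to exactly $c_t$ (rather than $\alpha_t c_t$ as in \autoref{lemWLA}) is the only place any thought is needed, and is precisely the payoff of being in the regime $\alpha_t = 1$.

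Finally I would take $\E_{Q_{t-1}}$ of both sides; since the right-hand side is affine in $c_t$, this passes through the expectation trivially, and using the definition $\muq{t-1} = \frac1{\csup}\E_{Q_{t-1}}[-c_t]$ from \eqref{eq:mu_defns} we obtain
\begin{align*}
  \E_{Q_{t-1}}\exp(c_t) \;\leq\; \frac{1 + \mucsup\cdot\frac1{\csup}\E_{Q_{t-1}}[c_t]}{\sqrt{1-\mucsup^2}} \;=\; \frac{1-\muq{t-1}\mucsup}{\sqrt{1-\mucsup^2}},
\end{align*}
which is the statement. There is no genuine obstacle: the entire content is choosing $a = \mucsup$ (not $a = \muq{t-1}$, which was the correct choice in \autoref{lemWLA}), so that the tunable exponent there specialises to $c_t$ here; everything else is linearity of expectation and $|c_t|\leq\csup$.
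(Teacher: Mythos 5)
Your proof is correct and follows essentially the same route as the paper: apply \eqref{eq:wacky_rn_inequality} with $a=\mucsup$ and $b=-c_t/\csup$, observe that the exponent collapses to $c_t$ because $\log\tfrac{1+\mucsup}{1-\mucsup}=2\csup$, and then take the expectation. The paper states this tersely (``remarking that for the choice $a = \mucsup$, $\alpha_t = 1$'') whereas you spell out the algebraic identity that makes the exponent collapse; otherwise the argument is identical.
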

\begin{proof}
We have this time $\E_{Q_{t-1}}
    \exp\rbr{c_t}  = \E_{Q_{t-1}}
    \exp\rbr{\alpha_t c_t} $.
We use again \eqref{eq:wacky_rn_inequality} with $a = \mucsup$ and
get, instead of \eqref{eq:bEXP1}:
\begin{align}
    \exp\rbr{\alpha_t c_t} & \leq \frac{1-\mucsup\cdot\rbr{-\frac{c_t}{\csup}}}{\sqrt{1-\mucsup^2}},
\end{align}
which implies the lemma after taking the expectation and remarking
that for the choice $a = \mucsup, \alpha_t = 1$.
\end{proof}

\begin{theorem}\label{thBoostALPHA2}
    In the high boosting regime,
    \begin{align}
        \kl\rbr{P,Q_{t}|_{\alpha_t}}  
        &\leq \kl(P,Q_{t-1}) - \mup \csup - \mucsup^2 \cdot \smash{\rbr\Bigg{\frac{1}{2} + \frac{\delta_{t-1}}{1-\mucsup^2}}}.
    \end{align}
\end{theorem}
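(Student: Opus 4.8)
The plan is to reuse the skeleton of the proof of \autoref{thBoostALPHA0}, now specialised to the clamped choice $\alpha_t = 1$. First, instantiating \autoref{thm:kl_bound} at $\alpha_t = 1$ makes the $(1-\alpha_t)\kl(P,Q_{t-1})$ term vanish, leaving
\[
    \kl(P,Q_{t}|_{\alpha_t}) \leq \log\E_P\epsilon_t - \E_P\log\epsilon_t .
\]
The middle term is rewritten exactly as \eqref{mucstar} in the proof of \autoref{thBoostALPHA0}: since $c_t = \log d_t$ with $d_t = \frac{\d P}{\d Q_{t-1}}\cdot\epsilon_t$, taking $\E_P$ gives $\E_P\log\epsilon_t = \mup\csup - \kl(P,Q_{t-1})$ (this identity uses no appeal to \ref{wla}), so that
\[
    \kl(P,Q_{t}|_{\alpha_t}) \leq \kl(P,Q_{t-1}) - \mup\csup + \log\E_P\epsilon_t .
\]

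It then remains to bound $\log\E_P\epsilon_t$. By \eqref{bEPST}, $\E_P\epsilon_t = \E_{Q_{t-1}}\exp(c_t)$, and the lemma immediately preceding this theorem (which uses only $\alpha_t = 1$, and in particular not \ref{wla}) gives $\E_{Q_{t-1}}\exp(c_t) \leq (1-\muq{t-1}\mucsup)/\sqrt{1-\mucsup^2}$. Being in the high boosting regime forces $\muq{t-1}\geq\mucsup$, so writing $\muq{t-1} = (1+\delta_{t-1})\mucsup$ with $\delta_{t-1}\geq 0$, and noting $1-\muq{t-1}\mucsup = 1-(1+\delta_{t-1})\mucsup^2 > 0$ since $\muq{t-1},\mucsup\in(0,1)$, the whole theorem reduces to the one-variable inequality
\[
    \log\rbr{1-(1+\delta_{t-1})\mucsup^2} - \frac12\log(1-\mucsup^2) \leq -\mucsup^2\rbr{\frac12 + \frac{\delta_{t-1}}{1-\mucsup^2}} .
\]

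To close this I would argue by monotonicity in $\delta\defas\delta_{t-1}\geq 0$. Let $h(\delta)$ be the left side minus the right side. Then $h'(\delta) = -\mucsup^2/(1-(1+\delta)\mucsup^2) + \mucsup^2/(1-\mucsup^2) \leq 0$ because $1-(1+\delta)\mucsup^2 \leq 1-\mucsup^2$ for $\delta \geq 0$, so $h$ is non-increasing; and $h(0) = \frac12\log(1-\mucsup^2) + \mucsup^2/2 \leq 0$ because $\log(1-x)\leq -x$. Hence $h(\delta)\leq 0$ for all $\delta\geq 0$, which yields the theorem. I do not expect a genuine obstacle: every step is either a previously proved lemma or an elementary estimate. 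The two things to watch are that the argument never secretly invokes \ref{wla} (the bound must hold unconditionally in the clamped regime), and that the logarithms stay well defined throughout, which is guaranteed by $\muq{t-1},\mucsup\in(0,1)$.
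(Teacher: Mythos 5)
Your proof is correct and follows the same overall strategy as the paper: instantiate the KL bound at $\alpha_t=1$, use the identity $\E_P\log\epsilon_t = \mup\csup - \kl(P,Q_{t-1})$, appeal to the lemma bounding $\E_{Q_{t-1}}\exp(c_t)$ by $(1-\muq{t-1}\mucsup)/\sqrt{1-\mucsup^2}$, and substitute $\muq{t-1} = (1+\delta_{t-1})\mucsup$. The only genuine difference is in how you close the resulting one-variable inequality: the paper rewrites $1-\muq{t-1}\mucsup = (1-\mucsup^2) - (\muq{t-1}-\mucsup)\mucsup$, factors out $\sqrt{1-\mucsup^2}$, and applies $\log(1-x)\leq -x$ twice; you instead argue by monotonicity in $\delta_{t-1}$ (showing $h'\leq 0$ and $h(0)\leq 0$, the latter being the single use of $\log(1-x)\leq -x$). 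Both are equally elementary and yield the same constant, so this is a cosmetic variation rather than a different route.
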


\begin{proof}
    Since we get a direct bound on $\E_{Q_{t-1}}
    \exp\rbr{c_t} $, we can achieve the proof of \autoref{thBoostALPHA2} via \eqref{bEPST} and \eqref{chain3} as 
    \begin{align}
        \kl(P,Q_{t}) 
        & \leq \kl(P,Q_{t-1})  - \alpha_t\rbr{\kl(P,Q_{t-1}) + \E_P\log\epsilon_t}+ \alpha_t \log\E_P\epsilon_t
        \\& \leq \kl(P,Q_{t-1})  - \mup \csup + \log\E_P\epsilon_t
        \\& \leq \kl(P,Q_{t-1})  -  \mup \csup + \log \frac{1-\muq{t-1} \mucsup}{\sqrt{1-\mucsup^2}}
        \\& = \kl(P,Q_{t-1})  - \mup \csup 
        \\&\hspace{6em}+ \log\rbr{\frac{1-\mucsup^2}{\sqrt{1-\mucsup^2}} -(\muq{t-1}-\mucsup)\cdot \frac{\mucsup}{\sqrt{1-\mucsup^2}}}
        \\& = \kl(P,Q_{t-1})  - \mup \csup 
        \\&\hspace{6em}+ \log\rbr{\sqrt{1-\mucsup^2} -(\muq{t-1}-\mucsup)\cdot\frac{\mucsup}{\sqrt{1-\mucsup^2}}}
        \\& = \kl(P,Q_{t-1})  - \mup \csup +\frac{1}{2}\cdot \log (1-\mucsup^2) 
        \\&\hspace{6em}+\log\rbr{ 1 -(\muq{t-1}-\mucsup)\cdot\frac{\mucsup}{1-\mucsup^2}}
        \\& \leq \kl(P,Q_{t-1})  - \mup \csup - \frac{\mucsup^2}{2} - (\muq{t-1}-\mucsup)\cdot
            \frac{\mucsup}{1-\mucsup^2}\label{blog1}
        \\& \leq \kl(P,Q_{t-1})  - \mup \csup - \mucsup^2 \cdot
    \rbr{\frac{1}{2} + \frac{\delta_{t-1}}{1-\mucsup^2}},
    \end{align}
    where we have let $\muq{t-1} = (1+\delta_{t-1}) \mucsup$. In
    \eqref{blog1}, we have used $\log(1-x)\leq -x$. 
\end{proof}
I

\begin{theorem}\label{rateWLA}
    Suppose \ref{wla} holds at each iteration. Then using $Q_t$ as in
    \eqref{eq:multiplicative_density} and $\alpha_t$ as in \autoref{ssec:convergence_under_weak_learning_assumptions}, we are
    guaranteed that $\kl(P,Q_{T}) \leq \varrho$ after a number of
    iterations $T$ satisfying: 
\begin{gather}
T \geq 2 \cdot \frac{\kl(P,Q_{0}) - \varrho}{\gammap\gammaq}.
\end{gather}
\end{theorem}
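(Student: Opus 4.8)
The plan is to extract a uniform per-iteration decrease of $\kl(P,Q_t)$ from \autoref{thBoostALPHA0} and \autoref{thBoostALPHA2}, and then telescope. Since $\alpha_t=\min\{1,\frac{1}{2\csup}\log\frac{1+\muq{t-1}}{1-\muq{t-1}}\}$, at each round $t$ we sit in exactly one of the two regimes those theorems cover, so it suffices to lower-bound the decrease in each.

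For the regular boosting regime, \autoref{thBoostALPHA0} gives $\kl(P,Q_t)\le\kl(P,Q_{t-1})-\frac{\mup}{4}\log\frac{1+\muq{t-1}}{1-\muq{t-1}}$. The only elementary input here is the inequality $\log\frac{1+z}{1-z}\ge 2z$ for $z\in[0,1)$, immediate from the series $\log\frac{1+z}{1-z}=2\sum_{k\ge 0}z^{2k+1}/(2k+1)$. I would apply it with $z=\muq{t-1}$ — legitimate since being in the regular regime ($\alpha_t<1$) forces $\muq{t-1}<1$, while \ref{wla} gives $\muq{t-1}\ge\gammaq>0$ — and then plug in $\mup\ge\gammap$ and $\muq{t-1}\ge\gammaq$, which collapses the bound to
\[
  \kl(P,Q_t)\ \le\ \kl(P,Q_{t-1})-\frac{\mup\muq{t-1}}{2}\ \le\ \kl(P,Q_{t-1})-\frac{\gammap\gammaq}{2}.
\]

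Next I would dispose of the high boosting regime $\alpha_t=1$. There \autoref{thBoostALPHA2} gives a decrease of $\mup\csup+\mucsup^2\big(\frac12+\frac{\delta_{t-1}}{1-\mucsup^2}\big)$ with $\delta_{t-1}\ge 0$; being clamped means $\muq{t-1}\ge\mucsup$, and I would check that this, together with $\mup\ge\gammap$, $\muq{t-1}\ge\gammaq$, forces the decrease to be at least $\frac{\gammap\gammaq}{2}$ as well. Hence, at every round at which \ref{wla} holds, $\kl(P,Q_t)\le\kl(P,Q_{t-1})-\frac{\gammap\gammaq}{2}$.

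Finally I would telescope: summing over $t=1,\dots,T$ yields $\kl(P,Q_T)\le\kl(P,Q_0)-\frac{T\gammap\gammaq}{2}$, and the right-hand side is $\le\varrho$ exactly when $T\ge 2(\kl(P,Q_0)-\varrho)/(\gammap\gammaq)$ (the statement being vacuous if $\kl(P,Q_0)\le\varrho$). I expect the genuinely delicate point to be the clamped regime — one must confirm that the $\csup$-order decrease promised by \autoref{thBoostALPHA2} never falls below $\gammap\gammaq/2$, which requires a careful comparison of $\csup$, $\mucsup$ and the edges $\gammap,\gammaq$; everything else is just the logarithm inequality and a telescoping sum, which is why \autoref{rateWLA} is essentially a corollary of \autoref{thBoostALPHA0}.
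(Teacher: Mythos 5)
Your route is the paper's own: take the per-round decrease from \autoref{thBoostALPHA0}, invoke $\log\frac{1+z}{1-z}\geq 2z$ together with \ref{wla} to bound it below by $\gammap\gammaq/2$, and telescope. The paper's proof is in fact a one-liner doing exactly this, and dismisses the clamped case with the remark that ``better rates are possible using the high boosting regime'' and nothing more, so your treatment of the regular regime and the telescoping sum coincides with the paper's.

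You are also right to single out the clamped regime as the delicate step, and neither you nor the paper actually supplies that check. The decrease \autoref{thBoostALPHA2} guarantees there is $\mup\csup + \mucsup^2\bigl(\frac{1}{2} + \frac{\delta_{t-1}}{1-\mucsup^2}\bigr)$, which scales with $\csup$, while $\gammap\gammaq/2$ does not. Since \autoref{thPS} only asserts the \emph{existence} of a rescaling with $\csup\leq\log 2/2$ satisfying the PS conditions, one is free to rescale $c_t$ to a $\csup$ small enough that the clamped-regime decrease falls short of $\gammap\gammaq/2$ while $\muq{t-1}$ remains close to $1$; the uniform per-round drop then fails for that iteration and the telescoping no longer gives the stated $T$. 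Making the clamped case go through requires fixing the scaling (e.g.\ taking $\csup$ maximal under PS) and actually comparing the two expressions, which neither your sketch nor the paper's proof does; your proposal is therefore no less complete than the paper's, and the caveat you flag is a genuine one the paper leaves open.
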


\begin{proof}
    The proof stems from the regular boosting regime, using
    $\log((1+z)/(1-z))\geq 2z$ for $z\geq 0$. Better rates are possible
    using the high boosting regime, and in any case, $Q_t$ as in
    \eqref{eq:multiplicative_density} and $\alpha_t$ as in \autoref{ssec:convergence_under_weak_learning_assumptions} define a simple
    boosting algorithm to come up with an analytical expression for $Q_T$
    that provably converges to $P$. 
\end{proof}

\subsection{Proof of \protect\autoref{geomBOOST}}
\label{sec-proof-geomBOOST}

We reformulate the theorem involving a new notation for readability
purpose in the proof.
\begin{theorem}\label{geomBOOST}
    Suppose \ref{wla} holds with $\gammaq \defas \gammar ·\csup$ for some
    $\gammar > 0$ and \ref{wda} hold at each boosting iteration. Then we
    get after $T$ boosting iterations:
    \begin{align}
        \mathclap{\kl(P,Q_{T}) \leq 
        \rbr{1 - \frac{\min\{2,
            \gammar\}\gammap}{2(1+\gammae)}}^T \cdot \kl(P,Q_{0}).}
    \end{align}
\end{theorem}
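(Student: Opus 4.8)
The plan is to establish a single-iteration \emph{multiplicative} contraction $\kl(P,Q_t|_{\alpha_t}) \leq \rho\cdot\kl(P,Q_{t-1})$ with $\rho \defas 1 - \frac{\min\{2,\gammar\}\gammap}{2(1+\gammae)}$, holding at every iteration $t$ and in both the regular and the high boosting regime, and then to chain it over $t=1,\dots,T$ to obtain $\kl(P,Q_T)\leq\rho^T\kl(P,Q_0)$.

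The ingredient that converts the \emph{additive} decreases of \autoref{thBoostALPHA0} and \autoref{thBoostALPHA2} into a multiplicative one is a lower bound on $\csup$ in terms of the current divergence, and this is exactly where \ref{wda} enters. Starting from identity \eqref{mucstar}, $\mup\csup = \kl(P,Q_{t-1}) + \E_P\log\epsilon_t$, and recalling $\E_P\log\epsilon_t = \csup\mu_{\epsilon_t}$ from \eqref{defMuepsilont}, we get $\kl(P,Q_{t-1}) = \csup(\mup-\mu_{\epsilon_t})$. Since $\mup\leq 1$ and \ref{wda} gives $\mu_{\epsilon_t}\geq-\gammae$, we have $\mup-\mu_{\epsilon_t}\leq 1+\gammae$, hence $\csup \geq \kl(P,Q_{t-1})/(1+\gammae)$. (If $\mup=\mu_{\epsilon_t}$ then $\kl(P,Q_{t-1})=0$ and there is nothing to prove.)

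In the regular boosting regime, \autoref{thBoostALPHA0} gives $\kl(P,Q_t|_{\alpha_t}) \leq \kl(P,Q_{t-1}) - \frac{\mup}{4}\log\frac{1+\muq{t-1}}{1-\muq{t-1}}$; using $\log\frac{1+z}{1-z}\geq 2z$ on $[0,1)$, then \ref{wla} ($\mup\geq\gammap$ and $\muq{t-1}\geq\gammaq=\gammar\csup$), then the $\csup$ bound above, the subtracted term is at least $\frac{\mup\muq{t-1}}{2}\geq\frac{\gammap\gammar\csup}{2}\geq\frac{\gammap\gammar}{2(1+\gammae)}\kl(P,Q_{t-1})$. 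Moreover in this regime $\alpha_t<1$, while $\alpha_t = \frac{1}{2\csup}\log\frac{1+\muq{t-1}}{1-\muq{t-1}}\geq\muq{t-1}/\csup\geq\gammar$, which forces $\gammar<1$, so $\gammar=\min\{2,\gammar\}$ and the one-step contraction follows. In the high boosting regime, \autoref{thBoostALPHA2} gives $\kl(P,Q_t|_{\alpha_t}) \leq \kl(P,Q_{t-1})-\mup\csup$ after dropping the non-negative term $\mucsup^2\bigl(\tfrac{1}{2}+\tfrac{\delta_{t-1}}{1-\mucsup^2}\bigr)$ (recall $\mucsup\in(0,1)$ and $\delta_{t-1}\geq0$); then $\mup\csup\geq\gammap\csup\geq\frac{\gammap}{1+\gammae}\kl(P,Q_{t-1})\geq\frac{\min\{2,\gammar\}\gammap}{2(1+\gammae)}\kl(P,Q_{t-1})$ since $\min\{2,\gammar\}\leq2$. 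In either case the contraction by $\rho$ holds, and iterating over $t=1,\dots,T$ gives the claimed bound.

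This is essentially bookkeeping on top of the two regime theorems, which are already proved, so the only thing demanding care is the interplay of the two regimes: one must check that the $\min\{2,\gammar\}$ factor is simultaneously (i) automatic in the regular regime because $\alpha_t<1$ caps $\gammar$ below $1$, and (ii) merely a loss by a factor of at most $2$ relative to the $\gammap/(1+\gammae)$ rate delivered in the high regime. I do not expect a genuine obstacle; the one place to be vigilant is combining $\mup\leq1$ with $\mu_{\epsilon_t}\geq-\gammae$ in the correct direction so that \ref{wda} yields a \emph{lower} bound on $\csup$ (equivalently, an upper bound on $\kl(P,Q_{t-1})$ by $(1+\gammae)\csup$), which is precisely what turns the additive decrease into a geometric one.
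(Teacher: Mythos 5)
Your proof is correct and follows essentially the same approach as the paper: derive $\kl(P,Q_{t-1}) \leq (1+\gammae)\csup$ from \ref{wda}, then invoke the additive decreases of \autoref{thBoostALPHA0} and \autoref{thBoostALPHA2} in the two regimes (via $\log\frac{1+z}{1-z}\geq 2z$ in the first and dropping the nonnegative remainder in the second), and convert them into a multiplicative contraction. Your small variants --- getting the $\csup$ lower bound through the identity $\kl(P,Q_{t-1})=\csup(\mup-\mu_{\epsilon_t})$ with $\mup\leq1$ rather than directly from $c_t\leq\csup$, and unifying the two regimes by noting that the non-clamped regime forces $\gammar<1$ --- are mathematically equivalent rewritings of the same steps.
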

\begin{proof}
    We proceed in two steps, first showing how \ref{wda} bounds
    $\kl(P,Q_{t-1})$. We have by definition $\log(\d P / \d Q_{t-1}) +  \log \epsilon_t =
    c_t \leq \csup$, and so, taking expectations, we get $\kl(P,Q_{t-1}) +
    \csup \mu_{\epsilon_t} \leq \int \d P \csup = \csup$. Hence, 
    \begin{align}
    \kl(P,Q_{t-1})
    &\leq \csup - \csup \mu_{\epsilon_t} \leq (1+\gammae)\csup.\label{boundKL1}
    \end{align}
    We now show the statement of the theorem. Suppose we are in the low-boosting regime where $\alpha_t$ is not
    clamped. In this case, since $\log((1+z)/(1-z))\geq 2z$, we have
    \begin{align}
    \alpha_t \geq \frac{\muq{t-1}}{\csup} \geq \gammar,
    \end{align} 
    and it comes from \eqref{balphalowboosting} 
    \begin{align}
        \kl(P,Q_{t}) 
        & \leq \kl(P,Q_{t-1})  - \frac{\gammar \gammap \csup}{2}\label{balphalowboosting2}.
    \end{align}
    In the high-boosting regime, we have immediately $\kl(P,Q_{t}) \leq
    \kl(P,Q_{t-1})  - \gammap \csup$. So, letting $\rho \defas
    \min\cbr{1, \gammar/2}$, we get under the assumptions of the theorem $\kl(P,Q_{t}) \leq
    \kl(P,Q_{t-1})  - \rho \gammap \csup$, and \ref{wda} yields in addition through
    \ref{boundKL1}, 
    \begin{align}
    \kl(P,Q_{t}) &\leq
    \kl(P,Q_{t-1})  - \frac{\rho\gammap}{1+\gammae} \cdot \kl(P,Q_{t-1}) \\
 & = \rbr{1 - \frac{\min\cbr{1, \gammar/2}\gammap}{1+\gammae} } \cdot \kl(P,Q_{t-1})\\
 & = \rbr{1 - \frac{\min\cbr{2, \gammar}\gammap}{2(1+\gammae)} } \cdot \kl(P,Q_{t-1}),
    \end{align}
    and we get the statement of the theorem by replacing $\gammar$ by
    its expression. This ends the proof of  \protect{\autoref{geomBOOST2}}.
\end{proof}

\subsubsection{Proof of \protect{\autoref{thBoostALPHA0PLUS}}}\label{sec-proof-thBoostALPHA0PLUS}

The proof of \autoref{thBoostALPHA0PLUS} is essentially a rewriting of
the proof of \autoref{thBoostALPHA0} and \autoref{thBoostALPHA2},
taking into account that we have just samples from distributions to
compute the estimates of edges and \ref{wla}. We split the proof in three
steps, one that provides an additional Lemma we shall need for the
next steps, one for the non-clamped regime for $\alpha_t$, one for the
clamped regime for $\alpha_t$.

\noindent \textbf{Step.1}, We need the additional simple Lemma, which is an
exploitation of basic concentration inequalities \citep[Section 3.1]{mdC}.
\begin{lemma}\label{lemCONC}
For any $0<\delta \leq 1$ and $0<\kappa\leq 1$, suppose the weak learner samples at each
iteration $t = 1, 2, ..., T$, $m_P$ times $P$ and $m_Q$ times $Q_t$,
such that the following constraints hold:
\begin{gather}
    m_P  \geq  \frac{1}{\kappa^2 {\gammap}^2} \log \frac{4T}{\delta} \and
    m_Q  \geq \frac{1}{\kappa^2 {\gammaq}^2} \log \frac{4T}{\delta}.
\end{gather} 
Then there is probability $\geq 1 - \delta$ that for any $t = 1, 2,
..., T$, the current estimators $\hatmup$ of $\mup$ and $\hatmuq{t-1}$ of
$\muq{t-1}$ satisfy:
\begin{gather}
    |\hatmup - \mup| \leq \kappa \gammap, \label{EMUP}\\
    |\hatmuq{t-1} - \muq{t-1}| \leq \kappa \gammaq.\label{EMUQ}
\end{gather}
\end{lemma}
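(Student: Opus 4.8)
The plan is a routine concentration argument followed by a single union bound over the $2T$ edge estimates produced during the $T$ rounds. First I would fix a round $t$ and condition on the history up to and including the choice of $c_t$ together with the distribution $Q_{t-1}$, so that $c_t$ is a fixed measurable function with $\abs{c_t}\leq\csup$; under this conditioning $\hatmup$ is the empirical mean $\frac{1}{m_P\csup}\sum_{i=1}^{m_P}c_t(X_i)$ of $m_P$ i.i.d.\ points $X_i\sim P$, i.e.\ an average of i.i.d.\ random variables $c_t(X_i)/\csup\in[-1,1]$ whose common expectation is exactly $\mup$; symmetrically, $\hatmuq{t-1}$ is an average of $m_Q$ i.i.d.\ copies of $-c_t/\csup\in[-1,1]$ with expectation $\muq{t-1}$.

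Next I would apply the cited concentration inequality \citep[Section 3.1]{mdC} (Hoeffding's inequality for bounded i.i.d.\ averages) to each of these two empirical means. This gives, at every round $t$ and for an absolute constant $c>0$,
\begin{gather}
    \Pr\sbr{\abs{\hatmup - \mup} > \kappa\gammap} \leq 2\exp\rbr{-c\,m_P\kappa^2\gammap^2}
    \and
    \Pr\sbr{\abs{\hatmuq{t-1} - \muq{t-1}} > \kappa\gammaq} \leq 2\exp\rbr{-c\,m_Q\kappa^2\gammaq^2}.
\end{gather}
The lower bounds on $m_P$ and $m_Q$ in the hypothesis are precisely the ones that force the two exponents to exceed $\log(4T/\delta)$, so each right-hand side is at most $\delta/(2T)$.

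Finally I would take a union bound over the $2T$ events at play — the $P$-estimate and the $Q$-estimate for each $t\in\cbr{1,\dots,T}$ — obtaining probability at most $2T\cdot\frac{\delta}{2T}=\delta$ that some estimate exceeds its tolerance; on the complementary event, which has probability at least $1-\delta$, the inequalities \eqref{EMUP} and \eqref{EMUQ} hold simultaneously for all $t$, which is the claim.

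I do not expect any genuinely hard step. The one point requiring care, and the reason for conditioning in the first step, is that $c_t$ is itself produced by $\wl$ from data, so $\hatmup$ and $\hatmuq{t-1}$ are honest averages of independent terms only once one either conditions on the history through $c_t$ or treats the edges as estimated on a fresh hold-out sample; since the union bound already ranges over all $T$ rounds, the across-round adaptivity of $c_t$ is then harmless, and everything else is bookkeeping with the constants.
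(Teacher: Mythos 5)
The paper does not actually spell out a proof of this lemma; it points only to concentration inequalities \citep[Section 3.1]{mdC}, so there is no proof to compare against line by line. Your argument --- two-sided Hoeffding for each of the $2T$ empirical edges, followed by a union bound --- is the intended and essentially the only proof, and its structure is correct.

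One small bookkeeping caveat. For i.i.d.\ terms valued in $[-1,1]$, two-sided Hoeffding gives $\Pr[|\bar Z - \mu|\geq\epsilon]\leq 2\exp(-m\epsilon^2/2)$, so your absolute constant is $c=1/2$. For the stated sample-size bounds to push each per-round failure probability below $\delta/(2T)$ one needs $c\geq 1$; as written, $m_P \geq \frac{1}{\kappa^2\gammap^2}\log\frac{4T}{\delta}$ only yields deviation $\leq\sqrt{2}\,\kappa\gammap$, not $\kappa\gammap$. This factor of two changes nothing qualitative (and the rest of Step 2/Step 3 in the paper has plenty of slack in $\kappa$), but your statement that the lower bounds are ``precisely'' the ones required overclaims; with $c$ left symbolic you cannot assert exactness, and with Hoeffding's $c=1/2$ the sample sizes as printed are a factor of two too small.

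The genuinely subtle point you flag at the end --- that $c_t$ is itself produced from data, so that $\hatmup$ and $\hatmuq{t-1}$ are honest averages of independent terms with expectations $\mup,\muq{t-1}$ only relative to a sample independent of the one used to select $c_t$ --- is real, and the paper leaves it entirely implicit. Either conditioning on the classifier-selecting history and reserving a hold-out for the edge estimates, or else replacing Hoeffding with a uniform bound over the weak-learner's hypothesis class, would close it; your proof is correct under the first reading, which is the standard one in boosting.
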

From now on, we denote as $E$ the proposition that both \eqref{EMUP} and
\eqref{EMUQ} hold for all $T$ iterations, for some $0<\kappa\leq 1$
that will be computed later.

We have a slightly weaker version of \autoref{lemWLA}, straightforward to prove from \autoref{lemWLA}.

\begin{lemma}\label{lemWLA2}
   Let $ \alpha_t \defas  \frac{1}{2\csup} \log
   \rbr{\frac{1+\hatmuq{t-1}}{1-\hatmuq{t-1}}}$. Then we have under $E$,
\begin{gather}
    \E_{Q_{t-1}} \exp\rbr{\alpha_t c_t} \leq \sqrt{1-\hatmuq{t-1}^2} +
    \frac{\kappa \gammaq \hatmuq{t-1}}{\sqrt{1-\hatmuq{t-1}^2}}.
\end{gather}
\end{lemma}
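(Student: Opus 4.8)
The plan is to follow the proof of \autoref{lemWLA} almost verbatim, substituting the estimator $\hatmuq{t-1}$ for $\muq{t-1}$ in the definition of $\alpha_t$ and paying a small additive correction at the one place where the argument diverges: when we take the expectation against $Q_{t-1}$, it reintroduces the \emph{true} edge $\muq{t-1}$ rather than $\hatmuq{t-1}$, and on the event $E$ these differ by at most $\kappa\gammaq$.

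First I would apply the elementary inequality \eqref{eq:wacky_rn_inequality} pointwise, now with $a\defas\hatmuq{t-1}$ and $b\defas -c_t(x)/\csup$; both lie in $[-1,1]$, the latter since $|c_t|\leq\csup$ and the former since it is an average of values in $[-1,1]$. Exactly as in the chain leading to \eqref{eq:bEXP1}, and using $\alpha_t=\frac{1}{2\csup}\log\rbr{\frac{1+\hatmuq{t-1}}{1-\hatmuq{t-1}}}$, this gives for every $x\in\cal X$
\begin{gather*}
    \exp\rbr{\alpha_t c_t(x)} \leq \frac{1-\hatmuq{t-1}\cdot\rbr{-c_t(x)/\csup}}{\sqrt{1-\hatmuq{t-1}^2}}.
\end{gather*}
Taking expectations with respect to $Q_{t-1}$ and recalling $\E_{Q_{t-1}}\sbr{-c_t/\csup}=\muq{t-1}$ from \eqref{eq:mu_defns} yields
\begin{gather*}
    \E_{Q_{t-1}}\exp\rbr{\alpha_t c_t} \leq \frac{1-\hatmuq{t-1}\,\muq{t-1}}{\sqrt{1-\hatmuq{t-1}^2}}.
\end{gather*}

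Second I would invoke the event $E$: by \eqref{EMUQ} we have $\muq{t-1}\geq\hatmuq{t-1}-\kappa\gammaq$, and since \ref{ewla} forces $\hatmuq{t-1}\geq\gammaq>0$, multiplying by $-\hatmuq{t-1}$ preserves the direction, so $-\hatmuq{t-1}\muq{t-1}\leq-\hatmuq{t-1}^2+\kappa\gammaq\hatmuq{t-1}$. Substituting and splitting the fraction gives
\begin{gather*}
    \E_{Q_{t-1}}\exp\rbr{\alpha_t c_t} \leq \frac{1-\hatmuq{t-1}^2+\kappa\gammaq\hatmuq{t-1}}{\sqrt{1-\hatmuq{t-1}^2}} = \sqrt{1-\hatmuq{t-1}^2}+\frac{\kappa\gammaq\hatmuq{t-1}}{\sqrt{1-\hatmuq{t-1}^2}},
\end{gather*}
which is the claim.

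There is essentially no real obstacle here — the statement is a one-line perturbation of \autoref{lemWLA}. The only points needing care are the sign bookkeeping in the second step (which works precisely because \ref{ewla} guarantees $\hatmuq{t-1}>0$, so the lower bound on $\muq{t-1}$ propagates correctly), and the implicit non-degeneracy $\hatmuq{t-1}<1$, needed for $\alpha_t$ and the denominator to be well-defined, the same caveat already present in \autoref{lemWLA}.
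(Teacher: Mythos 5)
Your proof is correct and is exactly the "straightforward to prove from \autoref{lemWLA}" argument the paper alludes to but does not spell out: apply \eqref{eq:wacky_rn_inequality} with $a=\hatmuq{t-1}$, take expectations to reintroduce the true edge $\muq{t-1}$, and then use $E$ together with $\hatmuq{t-1}>0$ to pay the additive $\kappa\gammaq\hatmuq{t-1}/\sqrt{1-\hatmuq{t-1}^2}$ correction. The sign bookkeeping and the non-degeneracy caveat $\hatmuq{t-1}<1$ are handled properly.
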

\begin{lemma}\label{lemBIA2}
Fix any $J\geq 0$. Suppose that the two conditions hold:
\begin{align}
\E_{Q_{t-1}} \exp(c_t) & \leq \exp\rbr{\frac{J}{2}},\label{condJJ12}
\\ \hatmuq{t-1} & \leq \frac{J}{1+J}. \label{condJJ22}
\end{align}
Then we have under $E$,
    \begin{gather}
        \mathclap{\E_{Q_{t-1}} \exp(c_t)  \leq
        \frac{1}{\sqrt{1-\hatmuq{t-1}^2}}\cdot \E_{Q_{t-1}}
        \exp\rbr{\alpha_t c_t + J} \cdot \exp\rbr{\frac{\kappa \gammaq}{2}\log\rbr\Bigg{\frac{1+\hatmuq{t-1}}{1-\hatmuq{t-1}}}}.}
    \end{gather}
\end{lemma}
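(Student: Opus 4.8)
The plan is to transcribe the proof of \autoref{lemBIA} almost verbatim, with $\hatmuq{t-1}$ playing the role of $\muq{t-1}$, and to insert a single concentration step --- supplied by the event $E$ --- that the extra factor $\exp\bigl(\frac{\kappa\gammaq}{2}\log\frac{1+\hatmuq{t-1}}{1-\hatmuq{t-1}}\bigr)$ on the right is precisely designed to absorb. As in \autoref{lemBIA}, I would first apply Jensen's inequality to $\exp$ under $Q_{t-1}$, using $\E_{Q_{t-1}} c_t = -\csup\muq{t-1}$ from \eqref{eq:mu_defns}, to obtain $\E_{Q_{t-1}}\exp(\alpha_t c_t) \geq \exp(-\alpha_t\csup\muq{t-1})$. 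It is then enough to establish the stronger inequality
\[ \E_{Q_{t-1}}\exp(c_t) \leq \frac{1}{\sqrt{1-\hatmuq{t-1}^2}}\exp\Bigl(-\alpha_t\csup\muq{t-1}+J+\frac{\kappa\gammaq}{2}\log\frac{1+\hatmuq{t-1}}{1-\hatmuq{t-1}}\Bigr). \]

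Next, I would use the event $E$, which on this iteration gives $\muq{t-1}\leq\hatmuq{t-1}+\kappa\gammaq$; since $\alpha_t\csup = \frac{1}{2}\log\frac{1+\hatmuq{t-1}}{1-\hatmuq{t-1}}\geq 0$, multiplying by $-\alpha_t\csup$ yields $-\alpha_t\csup\muq{t-1} \geq -\alpha_t\csup\hatmuq{t-1} - \frac{\kappa\gammaq}{2}\log\frac{1+\hatmuq{t-1}}{1-\hatmuq{t-1}}$. Substituting this lower bound into the exponent above cancels the spurious last term exactly, so it remains only to prove
\[ \E_{Q_{t-1}}\exp(c_t) \leq \frac{1}{\sqrt{1-\hatmuq{t-1}^2}}\exp\bigl(-\alpha_t\csup\hatmuq{t-1}+J\bigr). \]

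This last inequality is literally the claim of \autoref{lemBIA} with $\hatmuq{t-1}$ in place of $\muq{t-1}$, and I would finish it the same way: write $\alpha_t\csup = \frac{1}{2}\log\frac{1+\hatmuq{t-1}}{1-\hatmuq{t-1}}$, apply the elementary chain \eqref{eqZ2} at $z=\hatmuq{t-1}$ --- which lies in $[0,1)$ under \ref{ewla} since $\hatmuq{t-1}\geq\gammaq>0$ --- to bound the right-hand side below by $\exp\bigl(J-\frac{1}{4}\cdot\frac{2\hatmuq{t-1}^2}{1-\hatmuq{t-1}}\bigr)$, then invoke hypothesis \eqref{condJJ22} --- which, as verified inside the proof of \autoref{lemBIA}, is equivalent to $2\hatmuq{t-1}^2/(1-\hatmuq{t-1}) \leq 2J$ --- to bound this further below by $\exp(J/2)$, and conclude with hypothesis \eqref{condJJ12}.

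I do not expect a genuine obstacle: the only content beyond \autoref{lemBIA} is the one-line concentration inequality of the second paragraph, and the single delicate point is sign bookkeeping --- making sure the error factor appears with the sign that cancels, rather than doubles, the $\kappa\gammaq$ slack introduced by replacing $\muq{t-1}$ with $\hatmuq{t-1}$ (which is why it is added to the right-hand side in the statement).
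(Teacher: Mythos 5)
Your proof is correct and follows essentially the same route as the paper's: Jensen under $Q_{t-1}$, then the event $E$ to pass from $\muq{t-1}$ to $\hatmuq{t-1}$ at a multiplicative cost of $\exp\bigl(\frac{\kappa\gammaq}{2}\log\frac{1+\hatmuq{t-1}}{1-\hatmuq{t-1}}\bigr)$, which cancels the extra factor in the claim, and finally the chain \eqref{eqZ2} and hypotheses \eqref{condJJ22}, \eqref{condJJ12} exactly as inside \autoref{lemBIA}. The only cosmetic difference is that the paper carries the $\kappa\gammaq$ slack through the exponent before invoking \eqref{eqZ2}, whereas you cancel it first and then cite the reduced inequality from \autoref{lemBIA}; the content is identical.
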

\begin{proof}
Because the proof mixes the use of $\hatmuq{t-1}$ and $\muq{t-1}$, we
re-sketch the major lines of the proof from \autoref{lemBIA}. First,
Jensen's inequality still yields $\E_{Q_{t-1}} \exp (\alpha_t c_t)
\geq  \exp\rbr{- \alpha_t \csup \muq{t-1}}$, so we in fact prove
\begin{align}
    \frac{1}{\sqrt{1-\hatmuq{t-1}^2}}\cdot\exp\rbr{ -\alpha_t \csup \muq{t-1} + J} \geq \E_{Q_{t-1}} \exp\rbr{c_t}.
\end{align}
The chain of (in)equalities in \eqref{chain1}--\eqref{chain2} now
becomes withe the use of $E$:
\begin{align}
    \MoveEqLeft[4]\frac{1}{\sqrt{1-\hatmuq{t-1}^2}}\cdot\exp\rbr{-\alpha_t \csup \muq{t-1} + J}
        \\& = \exp\rbr{J +
        \log\rbr\Bigg{\frac{1}{\sqrt{1-\hatmuq{t-1}^2}}} -
        \frac{\muq{t-1}}{2}\log\rbr\Bigg{\frac{1+\hatmuq{t-1}}{1-\hatmuq{t-1}}}}
    \\& \geq \exp\Biggl(J +
        \log\rbr\Bigg{\frac{1}{\sqrt{1-\hatmuq{t-1}^2}}} -
        \frac{\hatmuq{t-1}}{2}\log\rbr\Bigg{\frac{1+\hatmuq{t-1}}{1-\hatmuq{t-1}}}
    \\&\hspace{4em}   
        - \frac{\kappa \gammaq}{2}\log\rbr\Bigg{\frac{1+\hatmuq{t-1}}{1-\hatmuq{t-1}}}
        \Biggr)
    \\& \geq \exp\rbr{J - \frac{1}{4} \cdot \frac{2
        \hatmuq{t-1}^2}{1-\hatmuq{t-1}} - \frac{\kappa \gammaq}{2}\log\rbr\Bigg{\frac{1+\hatmuq{t-1}}{1-\hatmuq{t-1}}}}.
\end{align}
Provided we have $\hatmuq{t-1} \leq J/(1+J)$, which is
\eqref{condJJ22}, we have similarly to \autoref{lemBIA},
\begin{align}
\frac{2\hatmuq{t-1}^2}{1-\hatmuq{t-1}} &\leq 2J\label{condJ12}.
\end{align}
Hence, it follows that 
\begin{align}
    \MoveEqLeft[2]\exp\rbr{\frac{\kappa \gammaq}{2}\log\rbr\Bigg{\frac{1+\hatmuq{t-1}}{1-\hatmuq{t-1}}}}
    \cdot \frac{1}{\sqrt{1-\muq{t-1}^2}}\cdot\exp\rbr{-\alpha_t \csup
      \muq{t-1} + J} 
    \\& \geq \exp\rbr{J - \frac{1}{4} \cdot 2J}
    \\& = \exp\rbr{\frac{J}{2}},
\end{align}
and so to prove the lemma, we would just need 
\begin{gather}
    \E_{Q_{t-1}} \exp(c_t) \leq \exp\rbr{\frac{J}{2}},
\end{gather}
which is \eqref{condJJ12}.
\end{proof}
Now, instead of \eqref{chain5}--\eqref{chain6}, we get
\begin{align}
    \log \E_P \epsilon_t & \leq
    \log\rbr{\frac{1}{\sqrt{1-\muq{t-1}^2}}\cdot\E_{Q_{t-1}} \exp\rbr{
        \alpha_t c_t + J}} +
    \frac{\kappa \gammaq}{2}\log\rbr\Bigg{\frac{1+\hatmuq{t-1}}{1-\hatmuq{t-1}}}
   \\
    & =  \log\rbr{\frac{\E_{Q_{t-1}} \exp\rbr{\alpha_t
          c_t}}{\sqrt{1-\muq{t-1}^2}}\cdot \exp\rbr{J}} +
    \frac{\kappa \gammaq}{2}\log\rbr\Bigg{\frac{1+\hatmuq{t-1}}{1-\hatmuq{t-1}}}
    \\
    & \leq \log \rbr{\rbr{1 + \frac{\kappa \gammaq
          \hatmuq{t-1}}{1-\hatmuq{t-1}^2}}\exp \rbr{J}} +
    \frac{\gammaq}{4}\log\rbr\Bigg{\frac{1+\hatmuq{t-1}}{1-\hatmuq{t-1}}}\\
    &= J + \log \rbr{1 + \frac{\kappa\gammaq
          \hatmuq{t-1}}{1-\hatmuq{t-1}^2}} + \frac{\kappa\gammaq}{2}\log\rbr\Bigg{\frac{1+\hatmuq{t-1}}{1-\hatmuq{t-1}}}. 
\end{align}
We get from \eqref{chain4}
\begin{align}
    \kl(P,Q_{t}) 
    & \leq \kl(P,Q_{t-1})  - \alpha_t\rbr{\mup \csup -  J - J'} \label{chain42}
\end{align}
with, because $\log(1+x)\leq x$,
\begin{align}
J' &\defas \log \rbr{1 + \frac{\kappa\gammaq
          \hatmuq{t-1}}{1-\hatmuq{t-1}^2}} +
      \frac{\kappa\gammaq}{2}\log\rbr\Bigg{\frac{1+\hatmuq{t-1}}{1-\hatmuq{t-1}}}\\
 &= \log \rbr{1 + \frac{\kappa\gammaq
          \hatmuq{t-1}}{1-\hatmuq{t-1}^2}} +
      \frac{\kappa\gammaq}{2}\log\rbr\Bigg{1+\frac{2\hatmuq{t-1}}{1-\hatmuq{t-1}}}\\
 &\leq \frac{\kappa\gammaq
          \hatmuq{t-1}}{1-\hatmuq{t-1}^2} + \frac{\kappa\gammaq\hatmuq{t-1}}{1-\hatmuq{t-1}}\\
 &\leq \kappa \cdot \frac{2\gammaq\hatmuq{t-1}}{1-\hatmuq{t-1}}
\end{align}
Now, we would like from the PS property and \eqref{bmuq1} that we have:
\begin{align}
\hatmuq{t-1} \leq \frac{\mup \csup}{2  +\mup \csup},\label{bmuq12}
\end{align}
so
\begin{align}
J' &\leq \kappa \gammaq \mup \csup,
\end{align}
and we get from \eqref{chain42},
\begin{align}
\kl(P,Q_{t}) 
    & \leq \kl(P,Q_{t-1})  - \alpha_t\rbr{(1-\kappa \gammaq)\mup \csup -  J} ,\label{chain43}
\end{align}
and if we fix again $J = \mup \csup/2$, we get this time
\begin{align}
    \kl(P,Q_{t}) 
    & \leq \kl(P,Q_{t-1})  - \alpha_t  \cdot \rbr{\frac{1}{2} - \kappa \gammaq} \cdot \mup \csup.
\end{align}
If we pick $\kappa$ satisfying
\begin{align}
\kappa & \leq \min \cbr{1, \frac{1}{4\gammaq}}, \label{constKAPPA1}
\end{align}
then
we are guaranteed $1/2 - \kappa \gammaq \geq 1/4$ and so
\begin{align}
    \kl(P,Q_{t}) 
    & \leq \kl(P,Q_{t-1})  - \frac{\mup}{8} \log \rbr{\frac{1+\hatmuq{t-1}}{1-\hatmuq{t-1}}},\label{leqKLMU}
\end{align}
In the same way as for \autoref{thBoostALPHA0}, we
ensure \eqref{bmuq12} by noting that, since we are in the case where we do not clamp $\alpha_t$, letting
\begin{align}
\hatmucsup \defas \frac{\exp(2\csup)-1}{\exp(2\csup)+1} \in (0,1),
\end{align}
then we again need to ensure $\mucsup \leq \mup \csup / (2  +\mup
\csup)$, which again yields to the first PS property. 

We are not yet done as we now have to replace $\mup$ by its estimate,
$\hatmup$, in \eqref{leqKLMU}. Under $E$, we obtain 
\begin{align}
    \kl(P,Q_{t}) 
    & \leq \kl(P,Q_{t-1})  - \frac{\hatmup - \kappa \gammap}{8} \log \rbr{\frac{1+\hatmuq{t-1}}{1-\hatmuq{t-1}}},
\end{align}
and under the (EWLA), we know that $\hatmup \geq \gammap$, so if we also
put the constraint $\kappa \leq 1/2$, then $\kappa \gammap \leq
\gammap/2 \leq \hatmup/2$ and so:
\begin{align}
    \kl(P,Q_{t}) 
    & \leq \kl(P,Q_{t-1})  - \frac{\hatmup}{16} \log \rbr{\frac{1+\hatmuq{t-1}}{1-\hatmuq{t-1}}},
\end{align}
as claimed. This ends the proof of Step.2 by remarking
two additional facts: (i) we have not changed the PS properties, and
(ii) 
we have two constraints over $\kappa$ (also considering
\eqref{constKAPPA1}), which can be both satisfied by choosing (since
$\gammaq \leq 1$) $\kappa$ satisfying
\begin{align}
\kappa & \leq
\frac{1}{4}. \label{constKAPPA2}
\end{align}

\begin{theorem}\label{thBoostALPHA0PLUS}
    Suppose \ref{ewla} holds. Then with probability of at least $1 -
    \delta$, 
    \begin{align}
        \forall{t = 1, 2, ..., T}\kl(P,Q_{t})  \leq \kl(P,Q_{t-1})  - \Delta_t,\label{leqKLMUMAIN}
    \end{align}
    where 
    \begin{gather}
        \Delta_t \defas \begin{cases}
            \frac{\hatmup}{16} \log \rbr{\frac{1+\hatmuq{t-1}}{1-\hatmuq{t-1}}} & \text{in the non-clamped regime,}\\
            \frac{\hatmup \csup}{2} + \mucsup^2 \cdot \rbr{\frac{1}{4} + \frac{\hat{\delta}_{t-1} }{1-\mucsup^2}} & \text{otherwise.}
        \end{cases}
    \end{gather}
\end{theorem}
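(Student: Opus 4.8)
The plan is to obtain Theorem~\ref{thBoostALPHA0PLUS} by combining the probabilistic control of \autoref{lemCONC} with the two regime-specific calculations already assembled above. First I would instantiate \autoref{lemCONC} with $\kappa \defas \kappasup = \mucsup(1-\mucsup)/2$. Since $\mucsup\in(0,1)$ forces $\kappasup\leq 1/8$, the sample-size requirements of \ref{ewla} are precisely those of \autoref{lemCONC}, so the event $E$ --- that $|\hatmup-\mup|\leq\kappasup\gammap$ and $|\hatmuq{t-1}-\muq{t-1}|\leq\kappasup\gammaq$ hold for every $t\leq T$ --- has probability at least $1-\delta$. All subsequent bounds are then stated on $E$, and because $\kappasup\leq 1/8$ the three numerical constraints on $\kappa$ needed later ($\kappa\leq 1$, $\kappa\leq 1/(4\gammaq)$ and $\kappa\leq 1/2$, the last two using $\gammaq\leq 1$) are satisfied simultaneously.

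Next I would split on whether $\alpha_t=\min\{1,\,\frac{1}{2\csup}\log\rbr{\frac{1+\hatmuq{t-1}}{1-\hatmuq{t-1}}}\}$ is clamped. In the \emph{non-clamped} regime the argument is exactly the chain of inequalities developed above for that case: by \autoref{thPS} one may take $c_t$ to be PS (it meets \ref{wla} under \ref{ewla}), so \eqref{condJJ12} holds, and the first PS property together with $\hatmuq{t-1}\leq\mucsup$ gives \eqref{condJJ22} with $J\defas\mup\csup/2$; \autoref{lemWLA2} and \autoref{lemBIA2} then yield $\log\E_P\epsilon_t\leq J+J'$ with $J'\leq\kappasup\gammaq\,\mup\csup$. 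Substituting into \autoref{thm:kl_bound} through \eqref{mucstar}, and using $1/2-\kappasup\gammaq\geq 1/4$, produces a per-step decrease of at least $\frac{\mup}{8}\log\rbr{\frac{1+\hatmuq{t-1}}{1-\hatmuq{t-1}}}$; replacing $\mup$ by $\hatmup$ via $E$ and $\kappasup\leq 1/2$ (so $\mup\geq\hatmup-\kappasup\gammap\geq\hatmup/2$) gives the stated $\Delta_t=\frac{\hatmup}{16}\log\rbr{\frac{1+\hatmuq{t-1}}{1-\hatmuq{t-1}}}$.

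In the \emph{clamped} regime ($\alpha_t=1$, hence $\hatmuq{t-1}\geq\mucsup$ and $\hatmuq{t-1}=(1+\hat{\delta}_{t-1})\mucsup$ with $\hat{\delta}_{t-1}\geq 0$) I would mirror the proof of \autoref{thBoostALPHA2}: feed the direct bound $\E_{Q_{t-1}}\exp(c_t)\leq(1-\muq{t-1}\mucsup)/\sqrt{1-\mucsup^2}$ (the lemma preceding \autoref{thBoostALPHA2}) into \eqref{chain3}, run the same logarithmic simplification and $\log(1-x)\leq -x$, obtaining $\kl(P,Q_t)\leq\kl(P,Q_{t-1})-\mup\csup-\mucsup^2\rbr{\frac12+\frac{\delta_{t-1}}{1-\mucsup^2}}$ up to an $O(\kappasup\gammaq)$ error incurred by exchanging $\muq{t-1}$ for $\hatmuq{t-1}$; absorbing that error and then passing from $\mup$ to $\hatmup$ on $E$ into the constant factors gives $\Delta_t=\frac{\hatmup\csup}{2}+\mucsup^2\rbr{\frac14+\frac{\hat{\delta}_{t-1}}{1-\mucsup^2}}$. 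No further union bound is required beyond the one already folded into \autoref{lemCONC}.

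The step I expect to be the main obstacle is the estimate-to-truth bookkeeping: one must check that every occurrence of $\muq{t-1}$ or $\mup$ in the proofs of \autoref{thBoostALPHA0} and \autoref{thBoostALPHA2} can tolerate the substitution of $\hatmuq{t-1}$ or $\hatmup$, that each resulting discrepancy is genuinely of order $\kappasup\gamma$ and hence absorbable into a constant factor (this is what degrades the constants from $8$ to $16$ and from $\frac12$ to $\frac14$), and that the single choice $\kappa=\kappasup$ meets all the numerical constraints imposed along the way. Everything else is a transcription of the pieces established above.
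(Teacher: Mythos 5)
Your proposal follows essentially the same route as the paper's proof: fix $\kappa=\kappasup$ so that \ref{ewla} matches \autoref{lemCONC}, condition on the event $E$, and then rerun the non-clamped chain via \autoref{lemWLA2} and \autoref{lemBIA2} (degrading $8\to 16$ when trading $\mup$ for $\hatmup$) and the clamped chain via the bound feeding into \eqref{chain3} (degrading $\tfrac12\to\tfrac14$ when trading $\delta_{t-1}$ for $\hat\delta_{t-1}$, then halving the $\mup\csup$ term). The only cosmetic difference is that the paper accumulates constraints on $\kappa$ and verifies $\kappasup$ satisfies them at the end, whereas you commit to $\kappa=\kappasup$ up front; the content is the same.
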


\begin{proof}
    We proceed in exactly the same way as we did for
    \autoref{thBoostALPHA2}. We first remark that \autoref{lemWLA2}
    \textit{is still valid} in this case, so that we still have
    \begin{gather}
        \E_{Q_{t-1}}
        \exp\rbr{c_t} \leq \frac{1-\muq{t-1}\mucsup}{\sqrt{1-\mucsup^2}}.
    \end{gather}
    It is not hard to check that we then keep the exact same derivations
    as for \autoref{thBoostALPHA2}, yielding 
    \begin{align}
        \kl(P,Q_{t}) 
        & \leq \kl(P,Q_{t-1})  - \mup \csup - \mucsup^2 \cdot
    \rbr{\frac{1}{2} + \frac{\delta_{t-1}}{1-\mucsup^2}},
    \end{align}
    where we have let $\muq{t-1} = (1+\delta_{t-1}) \mucsup$. Remark that
    this time, $\delta_{t-1}$ is not necessarily positive since we do not
    have access to $\muq{t-1}$ --- this may happen when $\muq{t-1} < \hatmuq{t-1}$.
    What we do, to finish up
    Step.3, is replace $\delta_{t-1}$ by the $\hat{\delta}_{t-1}$
    for which we have $\hatmuq{t-1} = (1+\hat{\delta}_{t-1}) \mucsup$,
    which we are then sure is going to satisfy $\hat{\delta}_{t-1}\geq 0$
    under the clamped regime for $\alpha_t$. Under $E$, we have
    \begin{align}
    \delta_{t-1} &= \frac{\muq{t-1}}{\mucsup} - 1\\
    & \geq \frac{\hatmuq{t-1}}{\mucsup} - 1 - \kappa
    \cdot \frac{\gammaq}{\mucsup}\\
    &  = \hat{\delta}_{t-1} - \kappa
    \cdot \frac{\gammaq}{\mucsup}
    \end{align}
    yielding
    \begin{align}
        \kl(P,Q_{t}) 
        & \leq \kl(P,Q_{t-1})  - \mup \csup - \mucsup^2 \cdot
    \rbr{\frac{1}{2} - \kappa
    \cdot \frac{\gammaq}{\mucsup(1-\mucsup^2)} + \frac{\hat{\delta}_{t-1} }{1-\mucsup^2}},
    \end{align}
    Suppose we pick $\kappa$ such that
    \begin{align}
    \kappa & \leq \frac{\mucsup(1-\mucsup)}{2}\label{constKAPPA3}.
    \end{align}
    Since $\mucsup \in [0,1]$, we also have
    \begin{align}
    \kappa & \leq \frac{\mucsup(1-\mucsup^2)}{2}\label{constKAPPA32}.
    \end{align}
    In this case, we obtain, since $\gammaq \leq 1$,
    \begin{align}
        \kl(P,Q_{t}) 
        & \leq \kl(P,Q_{t-1})  - \mup \csup - \mucsup^2 \cdot
    \rbr{\frac{1}{4} + \frac{\hat{\delta}_{t-1} }{1-\mucsup^2}}.
    \end{align}
    Finally, we also know under $E$ that $\mup \csup \geq \hatmup\csup -
    \kappa \gammap \csup$. Under the (EWLA), we know that $\hatmup \geq \gammap$, so if we again
    put the constraint $\kappa \leq 1/2$ (satisfied from \eqref{constKAPPA2}), then $\kappa \gammap \csup\leq
    \gammap \csup/2 \leq \hatmup \csup/2$ and so:
    \begin{align}
        \kl(P,Q_{t}) 
        & \leq \kl(P,Q_{t-1})  - \frac{\hatmup \csup}{2} - \mucsup^2 \cdot
    \rbr{\frac{1}{4} + \frac{\hat{\delta}_{t-1} }{1-\mucsup^2}},\label{leqKLM2}
    \end{align}
    which ends the proof of Step.3 once we remark that \eqref{constKAPPA2}
    and \eqref{constKAPPA3} are both satisfied if
    \begin{align}
    \kappa & = \min\cbr{\frac{1}{4},\frac{\mucsup(1-\mucsup)}{2}}
    = \frac{\mucsup(1-\mucsup)}{2} = \kappasup. \label{constKAPPAFIN}
    \end{align}
\end{proof}

\section{Experimental procedure}\label{sec:experimental_setup}

All models were trained using the ADAM optimiser with the defult settings from \textsc{Flux.jl} \citep{innes:2018}: $\eta = 0.001$, $\beta_1 = 0.9$, $\beta_2 = 0.999$, $\epsilon = 1e-08$. 
In all experiments we divide the data into training (75\%) and test (25\%) sets, which we use to early stop on certain experiments. 
The reset of the experimental conditions are presented in \autoref{tab:experimental_setup}. Each experiment was run 20 times.

\begin{sidewaystable}
    \caption{Experimental procedure\label{tab:experimental_setup}}
    \begin{minipage}{\textwidth}
        \footnotesize
    \makebox[\textwidth][c]{
        \begin{tabular}{c p{3cm} p{1.5cm} p{1.5cm} c c c c c c c}
            \toprule
            Experiment                                  & $P$                                                                               & $Q_0$                           & $\alpha_t$        &\begin{tabular}[x]{@{}c@{}} Sample size\\ ($P,Q$)\end{tabular}\
                                                                                                                                                                                                              &Epochs
                                                                                                                                                                                                                   &Batch size
                                                                                                                                                                                                                        &Early stop\footnote{This parameter terminates training when the the test error falls this amount below the training error. Useful to stabalise training that might otherwise fail due to exploding test error.}
                                                                                                                                                                                                                            & Network topolgoy of $c_t$ 
            \\\midrule
            \autoref{ssec:error_and_convergence},
            \autoref{ssec:activation_functions}         & 8 mode Gaussian ring mixture $\sigma=1$                                           & Isotropic Gaussian $\sigma = 1$ & 1/2               &(1000,1000)&3000& 50& Not used
            & $\cal X \xrightarrow[\textrm{dense}]{\text{ReLU}}\R^{5}\xrightarrow[\textrm{dense}]{\text{ReLU}}\R^{5}\xrightarrow[\textrm{dense}]{\text{ReLU}}\R$
            \\
            \autoref{ssec:architecture_comparison_nll}  & 8 mode Gaussian ring mixture $\sigma=1$                                           & Isotropic Gaussian $\sigma = 1$ & 1/2               &(1000,1000)&3000& 50& Not used
            & varies
            \\
            \autoref{ssec:convergence_across_dimensions}& randomly arranged 8 mode Gaussian mixture                                         & Isotropic Gaussian $\sigma = 1$ & 1/2               &(1000,1000)&2000& 250& 20\%
            & $\cal X \xrightarrow[\textrm{dense}]{\text{ReLU}}\R^{10}\xrightarrow[\textrm{dense}]{\text{ReLU}}\R^{10}\xrightarrow[\textrm{dense}]{\text{ReLU}}\R$
            \\
            \autoref{ssec:comparison_with_kde}          & randomly arranged 8 mode Gaussian mixture                                         & Empirically fit Gaussian & Selected to minimise NLL &(1000,1000)&3000& 50& Not used
            & $\cal X \xrightarrow[\textrm{dense}]{\text{ReLU}}\R^{10}\xrightarrow[\textrm{dense}]{\text{ReLU}}\R^{10}\xrightarrow[\textrm{dense}]{\text{ReLU}}\R$
            \\
            \autoref{ssec:comparison_with_adagan}       & \textsc{Adagan} generated randomly arranged 8 mode Gaussian mixture               & Empirically fit Gaussian & Selected to minimise NLL &(5000,5000)\footnote{\textsc{AdaGAN} trains on a set of (64000,64000) samples, we take a subset of these to use for training $Q_t$.}&1000& 250& 3\% 
            & $\cal X \xrightarrow[\textrm{dense}]{\text{ReLU}}\R^{10}\xrightarrow[\textrm{dense}]{\text{ReLU}}\R^{10}\xrightarrow[\textrm{dense}]{\text{ReLU}}\R$
            \\\bottomrule
        \end{tabular}
    }\end{minipage}
\end{sidewaystable}

\end{document}